\documentclass[10pt]{article} 
\usepackage[preprint]{tmlr}


\usepackage{hyperref}
\usepackage{url}

\title{Asymptotically and Minimax Optimal Regret Bounds for \\ Multi-Armed Bandits with Abstention}


\author{\name Junwen Yang \email junwen\_yang@u.nus.edu\\
       \addr Institute of Operations Research and Analytics\\
       National University of Singapore
       \TMLRAND
       \name Tianyuan Jin \email tianyuan@u.nus.edu \\
       \addr Department of Electrical and Computer Engineering \\
       National University of Singapore
       \TMLRAND
       \name Vincent Y. F. Tan \email vtan@nus.edu.sg \\
       \addr Department of Mathematics\\
        Department of Electrical and Computer Engineering\\
        Institute of Operations Research and Analytics\\
       National University of Singapore
       }
       


\usepackage{amsmath,amssymb,url,braket}
\usepackage{soul}

\usepackage{lipsum}
\usepackage{amsthm}
\usepackage{cleveref}
\usepackage{mathrsfs}
\usepackage{overpic}
\usepackage{tikz}
\usepackage{subfigure}

\allowdisplaybreaks[4]

\usepackage{adjustbox}
\usepackage{nicefrac}
\usepackage{comment}
\usepackage{subfigure}
\usepackage{mathtools}
\usepackage{etoolbox}
\AtBeginEnvironment{dcases}{\renewcommand\baselinestretch{0.5}\selectfont}
\usepackage{array, makecell}  
\usepackage{enumitem}
\usepackage{graphicx}
\usepackage{algorithm}
\usepackage{algorithmic}
\usepackage{booktabs}


\usepackage{xcolor}
\usepackage{soul}






\DeclareMathOperator{\E}{\mathbb{E}}

\DeclareMathOperator*{\argmax}{arg\,max}



\theoremstyle{plain}
\newtheorem{theorem}{Theorem}

\newtheorem{lemma}{Lemma}

\theoremstyle{definition}
\newtheorem{definition}{Definition}

\theoremstyle{remark}

\newtheorem{remark}{Remark}



\newcommand{\PP}{\mathbb{P}}



\newcommand{\CA}{\mathrm{CA}}
\newcommand{\FRG}{\mathrm{RG}}
\newcommand{\FRW}{\mathrm{RW}}

\begin{document}

\maketitle

\begin{abstract}
We introduce a novel extension of the canonical multi-armed bandit problem that incorporates an additional strategic innovation: \emph{abstention}. In this enhanced framework, the agent is not only tasked with selecting an arm at each time step, but also has the option to {\em abstain} from accepting the stochastic instantaneous reward before observing it. When opting for abstention, the agent either suffers a fixed regret or gains a guaranteed reward. This added layer of complexity naturally prompts the key question: can we develop algorithms that are both computationally efficient and asymptotically and minimax optimal in this setting? We answer this question in the affirmative by designing and analyzing algorithms whose regrets meet their corresponding information-theoretic lower bounds. Our results offer valuable quantitative insights into the benefits of the abstention option, laying the groundwork for further exploration in other online decision-making problems with such an option. Extensive numerical experiments validate our theoretical results, demonstrating that our approach not only advances theory but also has the potential to deliver significant practical benefits.
\end{abstract}

\section{Introduction}
\label{section_intro}
In online decision-making, the multi-armed bandit model,  originally introduced by \citet{thompson1933likelihood}, has long served as a quintessential benchmark for capturing the delicate interplay between exploration and exploitation. In stochastic multi-armed bandit problems, the agent sequentially selects an arm from the given set at each time step and subsequently observes a random reward associated with the chosen arm. To maximize cumulative rewards, the agent must strike a balance between the persistent pursuit of the arm with the highest estimated reward (exploitation) and the adventurous exploration of other arms to gain a deeper understanding of their potential (exploration). This fundamental challenge finds applications across a wide array of domains, ranging from optimizing advertising campaigns to fine-tuning recommendation systems.

However, real-world scenarios often come fraught with complexities that challenge the simplicity of the canonical bandit model. One notable complexity arises when the agent is equipped with an additional option to abstain from accepting the stochastic instantaneous reward before observing it. This added layer of decision-making enriches the strategic landscape, altering how the agent optimally navigates the trade-off between exploration and exploitation.

Consider, for example, the domain of clinical trials.\footnote{ {We highlight that this example is a stylized mathematical abstraction used solely to motivate the abstention framework. It does not capture the complex ethical and practical constraints of actual clinical trials and should not be interpreted as a proposal for real-world medical protocols.}} When evaluating potentially hazardous medical treatments, researchers can proactively deploy safeguards such as preemptive medications or consider purchasing specialized insurance packages to shield against possible negative consequences. However, these protective measures come with costs, which may be modeled as either fixed regrets or fixed rewards in the context of the clinical study's cumulative regret. In these scenarios,  researchers have the option to observe the outcomes of a treatment while abstaining from incurring the associated random regret through these costly prearranged measures. Subsequently, these observed outcomes can be leveraged to inform more efficacious future experimental designs.  Consequently, opting for abstention has the potential to promote more responsible decision-making and reduce the overall cumulative regret of the study. 

To further illustrate our motivation, consider another concrete example in the realm of online advertising, where companies frequently grapple with the challenge of allocating their advertising budget optimally across multiple platforms (Google Ads, LinkedIn, etc.) to promote their products. This scenario naturally parallels the framework of multi-armed bandits, with each arm representing a distinct advertising platform. Typically, these platforms operate on a \emph{cost-per-click} pricing model, where payment is made for each click generated. However, what companies truly care about is the outcome, such as conversions or purchases, which are random variables sampled from unknown distributions. When faced with such uncertainty and the possibility of underperformance, companies may instead opt for a \emph{cost-per-action} strategy, which involves a predetermined payment for a specific outcome (i.e., payment is made only when a desired outcome, such as a conversion or purchase, occurs). This strategy essentially serves as an abstention option, where instead of risking uncertain returns (from unknown distributions), companies choose to pay a fixed cost for a known outcome. By embracing the abstention option, companies can mitigate the uncertainties associated with stochastic outcomes and ensure a deterministic result. Moreover, by opting for abstention, the company still retains the ability to observe the advertising conversion rate of the platform, enabling them to gather valuable insights for future decision-making.

Building upon this challenge, we introduce an innovative extension to the canonical multi-armed bandit model that incorporates abstention as a legitimate strategic option. At each time step, the agent not only selects which arm to pull but also decides whether to abstain. Depending on how the abstention option impacts the cumulative regret, which is the agent's primary optimization objective, our abstention model offers two complementary settings, namely, the fixed-regret setting where abstention results in a constant regret, and the fixed-reward setting where abstention yields a deterministic reward. 
Collectively, these settings provide the agent with a comprehensive toolkit for adeptly navigating the complicated landscape of online decision-making.

\paragraph{Main contributions.} Our main results and contributions are summarized as follows:
\begin{enumerate}[label = (\roman*)] 
\item In Section~\ref{section_model}, we provide a rigorous  formulation of the multi-armed bandit model with abstention. Our focus is on cumulative regret minimization across two distinct yet complementary settings: \emph{fixed-regret} and \emph{fixed-reward}. These settings give rise to different metrics, each offering unique analytical insights. Importantly, both settings encompass the canonical bandit model as a particular case.

\item In the fixed-regret setting, we judiciously combine two   abstention criteria into a Thompson Sampling-based algorithm proposed by \citet{jin23b}. This integration ensures compatibility with the abstention option, as elaborated in Algorithm~\ref{algo1}.  The first abstention criterion employs a carefully constructed lower confidence bound, while the second is tailored to mitigate worst-case scenarios. We establish both asymptotic and minimax upper bounds on the cumulative regret, requiring application of analytical techniques inspired by both Thompson Sampling and UCB methodologies. Furthermore, we derive corresponding lower bounds, thereby showcasing the concurrent attainment of asymptotic and minimax optimality by our algorithm. Our approach entails extensive application of analytical techniques inspired by both Thompson Sampling and UCB methodologies.

\item In the fixed-reward setting, we introduce a general strategy, outlined in Algorithm~\ref{algo2}. This method transforms any algorithm that is both asymptotically and minimax optimal in the canonical model to one that also accommodates the abstention option. Remarkably, this strategy maintains its universal applicability and straightforward implementation while provably achieving both forms of optimality---asymptotic and minimax.

\item To   corroborate our theoretical contributions, we conduct a series of numerical experiments in Appendix~\ref{appendix_experiment}. These experiments substantiate the effectiveness of our algorithms and highlight the performance gains achieved through the inclusion of the abstention option. 

\end{enumerate}

\subsection{Related Work}
\paragraph{Canonical multi-armed bandits.} The study of cumulative regret minimization in canonical multi-armed bandits has attracted considerable scholarly focus. Within this domain, two dominant paradigms for evaluating optimality metrics  have emerged: asymptotic optimality and minimax optimality. In essence, the former considers the behavior of algorithms as the time horizon approaches infinity for a specific problem instance, while the latter seeks to minimize the worst-case regret over all possible instances.  A diverse array of policies have been rigorously established to achieve asymptotic optimality across various model settings. Notable examples include UCB2 \citep{auer2002finite}, DMED \citep{honda2010asymptotically}, KL-UCB \citep{cappe2013kullback}, and Thompson Sampling \citep{agrawal2012analysis, kaufmann2012thompson}. In the context of the worst-case regret, MOSS \citep{audibert2009minimax} stands out as the pioneering method that has been verified to be minimax optimal.
Remarkably, $\text{KL-UCB}^{++}$ \citep{menard2017minimax} became the first algorithm proved to achieve both asymptotic and minimax optimality. Recently, \citet{jin23b} introduced Less-Exploring Thompson Sampling, an innovation that boosts computational efficiency compared to classical Thompson Sampling while concurrently achieving asymptotic and minimax optimality. For a comprehensive survey of bandit algorithms, we refer to \citet{lattimore2020bandit}.

\paragraph{Machine learning with abstention.} Starting with the seminal work of \citet{chow1957optimum, chow1970optimum}, the concept of learning with {\em  abstention} (also referred to as {\em  rejection}) has been extensively explored in various machine learning paradigms. These include  classification \citep{herbei2006classification, bartlett2008classification, cortes2016learning}, ranking \citep{cheng2010predicting, mao2023ranking} and regression \citep{wiener2012pointwise, zaoui2020regression, kalai2021towards}.

Within this broad spectrum of research, our work is most directly related to those that explore the role of abstention in the context of {\em online learning}. To the best of our knowledge, \citet{cortes2018online} firstly incorporated the abstention option into the problem of online prediction with expert advice \citep{littlestone1994weighted}. In their model, at each time step, each expert has the option to either make a prediction based on the given input or abstain from doing so. When the agent follows the advice of an expert who chooses to abstain, the true label of the input remains undisclosed, and the learner incurs a known fixed loss.
Subsequently, \citet{neu2020fast} introduced a different abstention model, which is more similar to ours. Here, the abstention option is only available to the agent. Crucially, the true label is always revealed to the agent {\em after} the decision has been made, regardless of whether the agent opts to abstain.
Their findings suggest that equipping the agent with an abstention option can significantly improve the guarantees on the worst-case regret. 


\section{Problem Setup}
\label{section_model}
\paragraph{Multi-armed bandits with abstention.} We consider a $K$-armed bandit model, enhanced with an additional option to abstain from accepting the stochastic instantaneous reward prior to its observation. Let $\mu\in\mathcal U := \mathbb R^K$ denote a specific bandit instance, where $\mu_i$ represents the unknown mean reward of arm $i\in[K]$. We assume that arm $1$ is the unique optimal arm, i.e., $1 = \argmax _{i\in [K]} \mu_i $, and we define $\Delta_i := \mu_1 - \mu_i$ as the suboptimality gap for each arm~$i$.

At each time step $t\in\mathbb N$, the agent chooses an arm $A_t\in [K]$, and, simultaneously, decides whether or not to abstain, indicated by a binary variable $B_t$. Regardless of the decision to abstain, the agent observes a random variable $X_t$ from the selected arm $A_t$, which is drawn from a Gaussian distribution $\mathcal N (\mu_{A_t}, 1)$   independent of  observations obtained from the previous time steps. Notably, the selection of both $A_t$ and $B_t$ might depend on the previous decisions and observations, as well as on each other. Formally, let $\mathcal F_t :=\sigma(A_1,B_1,X_1,\ldots,A_t,B_t, X_t)$ denote the  $\sigma$-field generated by the cumulative interaction history up to and including time $t$. It follows that the pair of random variables $(A_t, B_t)$ is $\mathcal F_{t-1}$-measurable.

The instantaneous regret at time $t$ is  determined by both the binary abstention variable $B_t$ and the observation $X_t$. Based on the outcome of the abstention option, we now discuss two complementary settings. 
In the \emph{fixed-regret} setting, the abstention option incurs a constant regret.  Opting for abstention ($B_t = 1$) leads to a deterministic regret of $c > 0$, in contrast to the initial regret linked to arm $A_t$ when not selecting abstention ($B_t = 0$), which is given by $\mu_1 - X_t$.

In the \emph{fixed-reward} setting, the reward of the abstention option is  $c\in \mathbb R$.\footnote{With a slight abuse of notation, we use the symbol $c$ to represent the abstention regret and the abstention reward within their respective settings. The surrounding context should elucidate the exact meaning of~$c$.} Since the abstention reward $c$ may be larger than $\mu_1$, the highest expected reward at each time step is $\mu_1\vee c := \max\{ \mu_1, c\}$.
If the agent decides to abstain ($B_t = 1$), it is guaranteed a deterministic reward of $c$, leading to a regret of $\mu_1\vee c-c$. If $B_t=0$, the agent receives a reward of $X_t$, resulting in a regret of $\mu_1\vee c- X_t$.

\begin{figure*}[t]
\centering
{
\renewcommand{\baselinestretch}{1}
\normalsize
\begin{tikzpicture}[>=stealth, node distance=1cm, auto]
\usetikzlibrary{positioning, arrows, shapes,calc}
\usetikzlibrary{decorations.markings}
    \tikzstyle{block} = [rectangle, draw, align=center, font=\normalsize, rounded corners, fill=blue!10, minimum width=1.4cm, minimum height=0.9cm]
    \tikzstyle{arrow} = [->, thick]
    \tikzstyle{doublearrow} = [thick, decoration={markings,mark=at position 1 with {\arrow[semithick]{open triangle 60}}},
   double distance=2pt, shorten >= 5.5pt,
   preaction = {decorate},
   postaction = {draw,line width=1.4pt, white,shorten >= 4.5pt}]

    \node[block] (Agent) {Agent};
    \node[block, above right=0.6cm and 0.35cm of Agent] (Action) {Action};
    \node[block, right=1.45cm of Agent] (Environment) {Environment};
    \node[block, right=0.3cm of Environment, fill=orange!10] (Regret) {Instantaneous\\ Regret};

    \node[inner sep=0, right=0.85cm of Regret, align=left, fill=orange!10] (Table) {\renewcommand{\arraystretch}{2.8}\setlength\extrarowheight{-1.5pt}\begin{tabular}{|c|c|c|}
        \hline
        & $B_t=0$ & $B_t=1$ \\ 
        \hline 
        \makecell{Fixed-\\ Regret} & $\mu_1-X_t$ & $c$ \\
        \hline
        \makecell{Fixed-\\ Reward} & $\!\mu_1\!\vee\! c\!-\!X_t\!$ & $\!\mu_1\!\vee\! c\!-\!c\!$ \\
        \hline
    \end{tabular}};

    \draw[arrow] (Agent) |- node[pos=0.72, above] {\(A_t, B_t\)} (Action);
    \draw[arrow] (Action) -| node[midway, near end, right] {\(A_t\)}(Environment);
    \draw[arrow] (Action) -| node[midway, near end, right] {\(B_t\)}(Regret);
    \draw[arrow] (Environment) -- ++(0, -1.5cm) -| node[midway, near end, right] {\(X_t\)} (Agent);
    \draw[arrow] (Environment) -- ++(0, -1.5cm) -| node[midway, near end, right] {\(X_t\)}(Regret);
    \draw[doublearrow] ($(Regret.east) + (0.1cm, 0)$)  -- ($(Table.west) +(-0.05cm, 0)$);
\end{tikzpicture}
}
\caption{Interaction protocol for multi-armed bandits with fixed-regret and fixed-reward abstention.}
\label{figure_model}
\vspace{0pt}
\end{figure*}
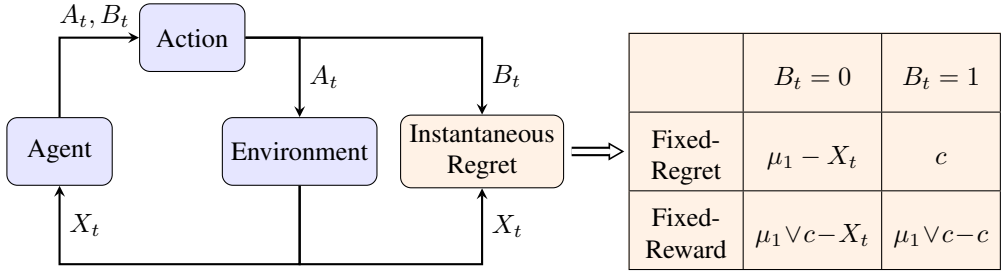

See Figure~\ref{figure_model} for a schematic of our model in the two settings.

\begin{remark}
We note that if the agent cannot observe the sample from the chosen arm when opting for abstention, then it is equivalent to skipping the time step, rendering the model trivial. In other words, the learner gains no information from the time step but incurs an instantaneous regret. In view of this, our model well aligns with that of \citet{neu2020fast}, which explored the role of abstention in the context of online prediction with expert advice.
\end{remark}

\begin{remark}[Graph Feedback]
Bandits with graph feedback \citep{liu2018information,alon2017nonstochastic} are a natural extension of the standard multi-armed bandit setting. In this framework, each arm is represented as a node in a graph, and pulling one arm reveals not only its own reward but also observations from the neighboring nodes connected to it in the graph.  Our model bears a conceptual resemblance to bandits with graph feedback: it can be represented as a bipartite graph in which each stochastic arm $a$ is associated with a corresponding abstention arm $Q_a$. Pulling a stochastic arm $a$ reveals a reward drawn from arm $a$'s distribution, while choosing the corresponding abstention arm $Q_a$ yields an observation drawn from the reward distribution of the associated stochastic arm as well.

That said, there is a key structural difference between the two frameworks. In bandits with graph feedback, selecting an arm reveals not only the reward of the chosen arm but also observations from its neighboring arms. In contrast, in our model, choosing an abstention arm does not provide any information about the abstention arm itself. Instead, the mean reward (or regret) of each abstention arm is known to the learner at the beginning of the game.
\end{remark}

\paragraph{Regret minimization.} 
Our overarching goal is to design and analyze online algorithms $\pi$ that minimize their expected cumulative regrets up to and including the time horizon $T$.\footnote{In certain real-world applications, the time horizon $T$ may be unknown to the agent. In fact, all of our proposed methods are inherently \emph{anytime} in nature, as they do not necessitate prior knowledge of the horizon.} The regrets are formally defined  for the two distinct settings as follows:

\setul{}{0.4pt} 
\setuldepth{3pt} 
\vspace{6pt}\noindent\labelitemi \ Fixed-\ul{r}e\ul{g}ret setting:
\vspace{-6pt}
\begin{equation}
\begin{aligned}
    R^{\FRG}_{\mu,c}(T, \pi) :=\mathbb E\Bigg[&\sum_{t=1}^T\big ((\mu_1-X_t)\cdot\mathbf 1\{B_t=0\}  +c\cdot\mathbf 1\{B_t=1\}\big )\Bigg] .
    \label{equation_fixedregret_regret}
\end{aligned}
\end{equation}

\vspace{-6pt}\noindent\labelitemi \ Fixed-\ul{r}e\ul{w}ard setting:
\vspace{-6pt}
\begin{equation}
\begin{aligned}
R^{\FRW}_{\mu,c}(T, \pi) :=T \!\cdot \! ( \mu_1\vee c )- \mathbb E\Bigg[&\sum_{t=1}^T\Big(X_t\cdot\mathbf 1\{B_t=0\} +c\cdot\mathbf 1\{B_t=1\}\Big)\Bigg] .
\end{aligned}
\label{equation_fixedreward_regret}
\end{equation}



An online algorithm $\pi$ consists of two interrelated components:  the \emph{arm sampling} rule that selects $A_t$, and the \emph{abstention decision rule} that determines $B_t$ at each time step $t\in [T]$.
Additionally, we use $\Pi^{\FRG}$ and $\Pi^{\FRW}$ to denote the collections of all online policies for the fixed-regret and fixed-reward settings, respectively. For the sake of analytical convenience, we also introduce the \underline{ca}nonical regret $R^{\CA}_{\mu}(T, \pi) := T\mu_1 - \mathbb E\big[\sum_{t=1}^T X_t \big]$, which disregards the abstention option and remains well-defined within our abstention model. Furthermore, when there is no ambiguity, we will omit the dependence of the regret on the policy. For example, we  often abbreviate $R^{\FRG}_{\mu,c}(T, \pi)$ as $R^{\FRG}_{\mu,c}(T)$.

\vspace{0pt}
\begin{remark}
It is worth mentioning that our model is a strict generalization of the canonical multi-armed bandit model (without the abstention option). Specifically, it particularizes to the canonical model as the abstention regret $c$ tends to  $+\infty$ in the fixed-regret setting and as the abstention reward $c$ tends to $-\infty$ in the fixed-reward setting. Nevertheless, the incorporation of an extra challenge, the abstention decision (denoted as $B_t$), offers the agent the potential opportunity to achieve superior performance in terms of either regret.  {One might wonder whether our setting reduces to a $(K+1)$-armed bandit model  where the $(K+1)$-st arm models abstention. It does not; for a detailed discussion on why our framework cannot be reduced to a standard bandit problem with an additional deterministic arm, the reader is referred to Appendix~\ref{appendix_comparison_k_plus_1}.}
\end{remark}

\paragraph{Other notations.}  For $x,y\in \mathbb R$, we denote $x \wedge y :=\min \{x, y\}$ and $x\vee y :=\max \{x, y\}$.  For any arm $i \in [K]$, let $N_i(t) := \sum_{s=1}^t \mathbf 1 \{ A_{s} = i \}$ and $\hat \mu_i (t):= \sum_{s=1}^t X_{s}\mathbf 1 \{ A_{s} =i \} / N_i(t)$ denote its total number of  pulls and empirical estimate of the mean up to time $t$, respectively. {In particular, we set $\hat{\mu}_i(t) = +\infty $ if $N_i(t)= 0$.} 
To count abstention records, we also use $N_i^{(0)}(t)$ and $N_i^{(1)}(t)$ to denote its number of pulls without and with abstention up to time $t$, respectively. That is, $N_i^{(0)}(t) := \sum_{s=1}^t \mathbf 1 \{ A_{s} = i \text { and }  B_s = 0\}$ and $N_i^{(1)}(t) := \sum_{s=1}^t \mathbf 1 \{ A_{s} = i \text { and }  B_s = 1\}$. 
Additionally, we define $\hat{\mu}_{is}$ as the empirical mean of arm $i$ based on its first $s$ pulls.
Furthermore, we use $\alpha$, $\alpha_1$, and so forth to represent universal constants that do not depend on the \emph{problem instances} (including $\mu$, $c$, $T$, $K$), with possibly different values in different contexts.


\section{Fixed-Regret Setting}
\label{section_fixedregret}  
In this section, we focus on the fixed-regret setting. Specifically, we design a conceptually simple and computationally efficient algorithm, namely \underline{F}ixed-\underline{R}e\underline{g}ret \underline{T}hompson \underline{S}ampling \underline{w}ith \underline{A}bstention (or \textsc{FRG-TSwA}), to minimize the cumulative regret while incorporating  fixed-regret abstention. To evaluate the performance of our algorithm from a theoretical standpoint, we establish  both instance-dependent asymptotic and instance-independent minimax upper bounds on the cumulative regret, as elaborated upon in Section~\ref{subsection_fixedregret_upperbound}. Furthermore, in Section~\ref{subsection_fixedregret_lowerbound}, we provide lower bounds for the problem of regret minimization in multi-armed bandits with fixed-regret abstention. These findings substantiate that our algorithm achieves both asymptotic and minimax optimality simultaneously.
The pseudocode for \textsc{FRG-TSwA} is presented in Algorithm~\ref{algo1} and elucidated in the following. 

\begin{algorithm}[t]
\caption{Fixed-Regret Thompson Sampling with Abstention (or \textsc{FRG-TSwA})}
\label{algo1}
\hspace*{0.02in} {\bf Input:} Arm set $[K]$ and abstention regret $c>0$.

\begin{algorithmic}[1]
\STATE Sample each arm once, and choose to abstain ($B_t = 1$) if and only if $\sqrt{\frac K t} \ge c$.
\STATE Initialize $\hat{\mu}_i(K)$ and $N_i(K)=1$ for all $i \in[K]$.
\FOR{$t= K+1,\ldots,T$}
\STATE For each arm $i\in[K]$, sample $\theta_i(t) \sim \mathcal N (\hat{\mu}_i(t-1), 1/N_i(t-1))$ and set 
$$
a_i(t)= \begin{cases}\theta_i(t) & \text { with probability } 1/K \\ \hat{\mu}_i(t-1) & \text { with probability } 1-1/K. \end{cases}
$$
\STATE Pull the arm $A_t = \argmax_{i\in[K]} a_i(t)$, and choose to abstain ($B_t = 1$) if and only if 
$$
\max_{i\in [K]\setminus \{A_t\}} \mathrm{LCB}_i(t) - \hat{\mu}_{A_t}(t-1) \ge c \ \text{ or }\ \sqrt{\frac K t} \ge c.
$$
\STATE Observe $X_t$ from the arm $A_t$, and update $\hat{\mu}_i(t)$ and $N_i(t)$ for all $i \in[K]$.
\ENDFOR
\end{algorithmic}
\end{algorithm}

In terms of the arm sampling rule, our algorithm is built upon Less-Exploring Thompson Sampling \citep{jin23b}, a minimax optimal enhancement of the celebrated Thompson Sampling (TS) algorithm \citep{thompson1933likelihood}. We refer to Remark~\ref{remark_TSreason} for the reason behind this choice. During the initialization phase, each arm is sampled exactly once. Following that, at each time $t$, an estimated reward $a_i(t)$ is constructed for each arm $i\in[K]$, which is either drawn from the posterior distribution $\mathcal N (\hat{\mu}_i(t-1), 1/N_i(t-1))$ with probability $1/K$ or set to be the empirical mean $\hat{\mu}_i(t-1)$ otherwise. Subsequently, the algorithm consistently pulls the arm $A_t$ with the highest estimated reward.  

To pursue the dual optimality properties, it is essential to carefully balance the simultaneous control of asymptotic and worst-case regrets resulting from potential misjudgments in choosing the abstention option. With regard to the abstention decision rule, we propose two abstention criteria that work in tandem (as detailed in Step~5 of Algorithm~\ref{algo1}). The first criterion is gap-dependent in nature. In particular, for each arm $i\in [K]$, we define its \emph{lower confidence bound} as
\begin{equation}
\mathrm{LCB}_i(t) := \hat{\mu}_{i}(t-1)-\sqrt{\frac{6\log t+2\log(c\vee 1)}{N_{i}(t-1)}}.
\label{equation_LCB}
\end{equation}
Then we choose to abstain if there exists an arm $i\in [K]\setminus \{A_t\}$ for which the difference between $\mathrm{LCB}_i(t)$ and the empirical mean of the arm $A_t$ exceeds $c$. This condition signifies that the suboptimality gap $\Delta_{A_t}$ is at least $c$ with high probability. {The LCB in~\eqref{equation_LCB} is custom-tailored for our specific application, which is unique in the fact that it takes the value of the abstention regret $c$ into consideration. Notably, besides its asymptotic optimality, our choice to employ this LCB rather than other natural types of upper or lower confidence bounds in the abstention criteria (e.g., those that do not include $c$) is primarily due to the imperative to constrain the minimax regret to be $O(\sqrt{KT})$.}

The second abstention criterion is gap-independent. It is motivated from the construction of worst-case scenarios as detailed in the proof of our lower bound. Under this criterion, we opt for the abstention option if $c\le\sqrt{ K / t}$, which implies that the abstention regret remains acceptably low at time $t$ in view of the  worst-case scenarios.

\begin{remark}
\label{remark_TSreason}
    As previously highlighted, our model in the fixed-regret setting particularizes to the canonical multi-armed bandit model as the abstention regret $c$ tends to infinity. Similarly, when $c$ tends to infinity, the two abstention criteria are never satisfied, and the procedure of Algorithm~\ref{algo1} simplifies to that of Less-Exploring Thompson Sampling. It is worth noting that this latter algorithm is not only asymptotically optimal but also minimax optimal for the canonical model. This is precisely why we base our algorithm upon it, rather than the conventional Thompson Sampling algorithm, which  has been shown not to be minimax optimal \citep{agrawal2017near}. 
\end{remark}

\subsection{Upper Bounds}
\label{subsection_fixedregret_upperbound}
Theorem~\ref{theorem_fixedregret_upperbound} below provides two distinct types of theoretical guarantees pertaining to our algorithm's performance on the cumulative regret ${R^{\FRG}_{\mu,c}(T)}$, defined in Equation~\eqref{equation_fixedregret_regret} for the fixed-regret setting.

\begin{theorem}
\label{theorem_fixedregret_upperbound}
For  all abstention regrets $c>0$ and  bandit instances $\mu\in\mathcal U$, Algorithm~\ref{algo1} guarantees that 
$$
\limsup_{T\to \infty} \frac {R^{\FRG}_{\mu,c}(T)} {\log T} \le  2 \sum_{i>1} \frac{\Delta_i\wedge c }{\Delta_i^2} .
$$
Furthermore, there exists a universal constant $\alpha > 0$ such that
$$
{R^{\FRG}_{\mu,c}(T)}  \le  \begin{cases}
     cT  & \text{ if }  c\le \sqrt{ K/ T} \\
   \alpha  (\sqrt{KT}+\sum_{i>1}{\Delta_i}) & \text{ if }  c> \sqrt{ K /T}.
\end{cases}
$$
\end{theorem}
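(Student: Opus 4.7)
The plan is to decompose the cumulative regret as
\[
R^{\FRG}_{\mu,c}(T) = c\,\EE[N_1^{(1)}(T)] + \sum_{i>1}\bigl(\Delta_i\,\EE[N_i^{(0)}(T)] + c\,\EE[N_i^{(1)}(T)]\bigr),
\]
using $\Delta_1=0$, and to bound the three families of terms by building on the pull-count guarantees of Less-Exploring Thompson Sampling (LE-TS) from \citet{jin23b}. The guiding intuition is that once sufficient exploration has occurred, the LCB-based abstention test in~\eqref{equation_LCB} fires on essentially every pull when $\Delta_{A_t}\ge c$ and essentially never fires when $\Delta_{A_t}<c$, so arm $i$ contributes roughly $(\Delta_i\wedge c)\,\EE[N_i(T)]$ to the regret.

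For the asymptotic bound, I combine the LE-TS pull count $\EE[N_i(T)] \le (2/\Delta_i^2)\log T\,(1+o(1))$ for $i>1$ with three concentration facts that exploit the calibration $6\log t + 2\log(c\vee 1)$ in~\eqref{equation_LCB}. First, if $\Delta_i\ge c$, then once $N_i(t-1)\gtrsim \log T/c^2$ and $N_1(t-1)\gtrsim \log T$ (both of which hold outside $o(\log T)$ rounds under LE-TS), the event $\mathrm{LCB}_1(t)-\hat{\mu}_i(t-1)<c$ has polynomially small probability, forcing $\EE[N_i^{(0)}(T)]=o(\log T)$. Second, if $\Delta_i<c$, firing the LCB criterion requires $\mathrm{LCB}_j(t)-\hat{\mu}_i(t-1)\ge c$ for some $j$, which contradicts the typical bounds $\mathrm{LCB}_j(t)\le \mu_j$ and $\hat{\mu}_i(t-1)\ge \mu_i-o(1)$, yielding $\EE[N_i^{(1)}(T)]=o(\log T)$. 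Third, applying the same reasoning with $A_t=1$ gives $\EE[N_1^{(1)}(T)]=o(\log T)$. Assembling these estimates recovers the claimed $\limsup$ inequality.

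For the minimax bound I split on $c$ versus $\sqrt{K/T}$. When $c\le \sqrt{K/T}$ we have $\sqrt{K/t}\ge c$ throughout $[T]$, so the algorithm always abstains and $R^{\FRG}_{\mu,c}(T)=cT$. When $c>\sqrt{K/T}$, the second criterion's warm-up phase $t\le K/c^2$ contributes at most $c\cdot K/c^2=K/c\le\sqrt{KT}$. For $t>K/c^2$, I apply the identity $\Delta_i N_i^{(0)}+c N_i^{(1)}=\Delta_i N_i+(c-\Delta_i)N_i^{(1)}$ and sum over all arms to bound the post-warm-up regret by $\sum_{i>1}\Delta_i\EE[N_i(T)]+\sum_i(c-\Delta_i)^+\EE[N_i^{(1)}(T)]$. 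The first term is controlled by the LE-TS minimax guarantee $\alpha(\sqrt{KT}+\sum_{i>1}\Delta_i)$; for the second, an argument analogous to the one above shows that the probability of an erroneous abstention at time $t$ (either $A_t=1$, or $A_t=i>1$ with $\Delta_i<c$) is $O(K t^{-3}/(c\vee 1))$, so the overhead is at most $c\cdot K/(c\vee 1)\cdot \sum_t t^{-3} = O(K)$, which is dominated by $\sqrt{KT}$ in the regime of interest.

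The main obstacle I expect is keeping this abstention overhead $O(\sqrt{KT})$ uniformly in $c>\sqrt{K/T}$, especially for moderate $c$ where wrong abstentions on near-optimal arms could otherwise accumulate. The extra $2\log(c\vee 1)$ term in~\eqref{equation_LCB} is precisely the device that absorbs the factor $c$ in the cost per erroneous abstention, producing a tail probability of order $t^{-3}/(c\vee 1)$ so that the summation collapses independently of $c$. A parallel delicate issue is bounding $\EE[N_1^{(1)}(T)]$ in both regimes, since abstentions on the optimal arm pay $c$ per round directly into the regret with no canonical pull-count reduction to offset them; this is why the LCB rather than a symmetric UCB is used in the abstention test.
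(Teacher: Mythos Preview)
Your decomposition, the identity $\Delta_i N_i^{(0)}+cN_i^{(1)}=\Delta_i N_i+(c-\Delta_i)N_i^{(1)}$, the trivial case $c\le\sqrt{K/T}$, and your observation that the $2\log(c\vee 1)$ term absorbs the factor $c$ in the LCB-overshoot tail are all correct and match the paper's approach. Two steps, however, are genuine gaps.

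\textbf{Asymptotic bound, arms with $\Delta_i>c$.} Your claim that ``$N_1(t-1)\gtrsim\log T$ holds outside $o(\log T)$ rounds under LE-TS'' is both too weak and not something that follows from the pull-count guarantees you cite. Too weak: with $N_1(t-1)\asymp C\log T$ the confidence radius $\sqrt{(6\log t)/N_1(t-1)}$ is $\Theta(1)$ for $t$ near $T$, so you cannot conclude that the LCB test fires. What is actually needed is a \emph{polynomial} lower bound on $N_1(t)$, and the paper obtains it via a dedicated lemma (adapted from \citet{korda2013thompson} to the LE-TS sampling rule) showing $\sum_{t\ge 1}\PP(N_1(t)\le t^b)<\infty$ for every $b\in(0,1)$. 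The proof analyzes the gaps between successive pulls of arm~$1$ and uses deviation bounds on the Thompson samples $a_i(t)$; it does not reduce to $\EE[N_i(T)]\le(2/\Delta_i^2)\log T$. This is the step the paper itself flags as pivotal.

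\textbf{Minimax bound, erroneous-abstention overhead.} Your per-round probability $O(Kt^{-3}/(c\vee 1))$ captures only one of the two failure modes, namely $\max_j\mathrm{LCB}_j(t)>\mu_1$. It misses the event $\{A_t=i,\ \hat\mu_i(t-1)\le \mu_1-c\}$ for $\Delta_i<c$, i.e., $\hat\mu_{is}<\mu_i-(c-\Delta_i)$, whose probability is $\exp(-s(c-\Delta_i)^2/2)$ and has nothing to do with the $2\log(c\vee 1)$ device. The paper controls the corresponding weighted contribution $\sum_{i:\Delta_i<c}(c-\Delta_i)\,\EE\bigl[\sum_t\mathbbm 1\{A_t=i,\ \hat\mu_i(t-1)\le\mu_1-c\}\bigr]$ by a separate split on whether $c-\Delta_i$ lies below or above $\sqrt{K/T}$: in the first regime one uses the trivial bound $\EE[N_i(T)]$ and pays $\sqrt{K/T}\cdot T=\sqrt{KT}$; in the second one uses the Hoeffding sum $2/(c-\Delta_i)^2$, paying $2/(c-\Delta_i)\le 2\sqrt{T/K}$ per arm and hence $2\sqrt{KT}$ in total. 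Without this piece your overhead estimate is incomplete, and the claimed $O(\sqrt{KT})$ does not follow.
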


The   proof of Theorem~\ref{theorem_fixedregret_upperbound} is deferred to Appendix~\ref{subsection_fixedregret_upperboundproof}. The theoretical challenges arising from proving Theorem~\ref{theorem_fixedregret_upperbound} revolve around the quantification of the regret that results from inaccurately estimating the suboptimality gaps associated with the abstention criteria. More precisely, it is crucial to establish upper bounds on $\E[N_i^{(1)}(T)]$ for arms $i$ with $\Delta_i <c$ (which, by definition, includes the best arm), and on $\E[N_i^{(0)}(T)]$ for arms $i$ with $\Delta_i >c$. 
These bounds need to be derived from both asymptotic and minimax perspectives, adding layers of complexity to the analytical process. To accomplish these tasks, we initially decompose the target expectations into manageable components, and subsequently bound the resulting subterms in the two respective regimes. Our approach, merging TS-based sampling rule with LCB-based abstention criteria, necessitates a careful amalgamation of both TS-type and UCB-type analytical techniques.

Furthermore, for the asymptotic upper bound, we take a deeper exploration into the randomized arm sampling dynamics inherent to Less-Exploring Thompson Sampling. A pivotal aspect of this exploration is a characterization of a high probability lower bound on the number of pulls of the optimal arm, as detailed in Lemma~\ref{lemma_N1t}, inspired by the work of \citet{korda2013thompson}.


Finally, we remark that the numerous intricacies involved in these analyses preclude us from formulating a generalized strategy akin to the forthcoming Algorithm~\ref{algo2} for the fixed-reward setting.

\subsection{Lower Bounds}
\label{subsection_fixedregret_lowerbound}
In order to establish the asymptotic lower bound, we need to introduce the concept of $R^{\FRG}$-consistency, which rules out overly specialized algorithms that are tailored exclusively to specific problem instances. Roughly speaking, a $R^{\FRG}$-consistent algorithm guarantees a subpolynomial cumulative regret for any given problem instance.

\begin{definition}[$R^{\FRG}$-consistency]
\label{definition_fixedregret_consistent}
An algorithm $\pi\in \Pi^{\FRG}$ is said to be \emph{$R^{\FRG}$-consistent} if for all abstention regrets $c>0$,  bandit instances $\mu\in\mathcal U$, and $a>0$, $R^{\FRG}_{\mu,c}(T, \pi)=o(T^a)$.
\end{definition}

Now we present both asymptotic and minimax lower bounds on the cumulative regret in Theorem~\ref{theorem_fixedregret_lowerbound}, which is proved in Appendix~\ref{subsection_fixedregret_lowerboundproof}.

\begin{theorem} 
\label{theorem_fixedregret_lowerbound}
For any abstention regret $c>0$, bandit instance $\mu\in\mathcal U$ and $R^{\FRG}$-consistent algorithm~$\pi$, it holds that
$$
\liminf_{T\to \infty} \frac {R^{\FRG}_{\mu,c}(T,\pi)} {\log T} \ge  2 \sum_{i>1} \frac{\Delta_i\wedge c }{\Delta_i^2} . 
$$
For any abstention regret $c>0$ and time horizon $T\ge K$, there exists a universal constant $\alpha > 0$ such that
$$
\inf_{\pi\in \Pi^{\FRG}} \sup _{\mu\in \mathcal U} {R^{\FRG}_{\mu,c}(T,\pi)}  \ge \alpha (\sqrt{KT}\wedge cT).
$$
\end{theorem}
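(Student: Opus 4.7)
I would prove the two parts separately, both via change-of-measure arguments exploiting the key structural fact that $X_t$ is observed regardless of $B_t$, so the classical divergence decomposition
$$\mathrm{KL}(\PP_\mu \,\|\, \PP_{\mu'}) = \sum_{i=1}^K \E_\mu[N_i(T)] \cdot \mathrm{KL}(\mu_i, \mu'_i)$$
holds for any two instances $\mu, \mu'$ under any policy $\pi \in \Pi^{\FRG}$.

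\textbf{Asymptotic bound.} Following the Lai--Robbins / Kaufmann--Cappé--Garivier template, I would fix a suboptimal arm $i$ and introduce the alternative $\mu'$ that agrees with $\mu$ except for $\mu'_i := \mu_1 + \varepsilon$ with small $\varepsilon > 0$, so that arm $i$ is uniquely optimal under $\mu'$. $R^{\FRG}$-consistency on $\mu$ gives $\E_\mu[N_i(T)] = o(T^a)$ for any $a > 0$: indeed $\Delta_i\,\E_\mu[N_i^{(0)}(T)] + c\,\E_\mu[T_1] \le R^{\FRG}_{\mu,c}(T,\pi) = o(T^a)$ bounds both $\E_\mu[N_i^{(0)}(T)]$ and $\E_\mu[N_i^{(1)}(T)] \le \E_\mu[T_1]$; consistency on $\mu'$ analogously yields $\E_{\mu'}[T - N_i(T)] = o(T^a)$, since every arm $j \ne i$ is suboptimal under $\mu'$ with positive gap. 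The standard data-processing argument applied to the event $\{N_i(T) \ge T/2\}$, combined with $\mathrm{KL}(\mu_i, \mu'_i) = (\Delta_i + \varepsilon)^2/2$, then yields $\liminf_{T\to\infty} \E_\mu[N_i(T)]/\log T \ge 2/(\Delta_i + \varepsilon)^2$; sending $a, \varepsilon \to 0$ gives $\ge 2/\Delta_i^2$. The conclusion follows from the pointwise identity $\Delta_i\,\E_\mu[N_i^{(0)}(T)] + c\,\E_\mu[N_i^{(1)}(T)] \ge (\Delta_i \wedge c)\,\E_\mu[N_i(T)]$ and summation over $i > 1$.

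\textbf{Minimax bound.} I would use the standard Gaussian needle-in-a-haystack family $\mu^{(j)}$ ($j \in [K]$) defined by $\mu^{(j)}_j := \Delta$ and $\mu^{(j)}_i := 0$ for $i \ne j$, paired with the null instance $\mu^{(0)}$ having all means zero, and choose the gap as $\Delta := \alpha_0^{-1}(c \wedge \sqrt{K/T})$ for a sufficiently large universal constant $\alpha_0$, so that $\Delta \le c$. A direct computation gives
$$R^{\FRG}_{\mu^{(j)},c}(T,\pi) = \Delta(T - \E_{\mu^{(j)}}[N_j^{(0)}(T)]) + (c-\Delta)\,\E_{\mu^{(j)}}[T_1] \ge \Delta(T - \E_{\mu^{(j)}}[N_j(T)]),$$
where $T_1 := \sum_{t=1}^T \mathbbm 1\{B_t = 1\}$. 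It then suffices to establish $\frac{1}{K}\sum_{j=1}^K \E_{\mu^{(j)}}[N_j(T)] \le T/2$. Using $\mathrm{KL}(\PP_{\mu^{(0)}} \,\|\, \PP_{\mu^{(j)}}) = \tfrac{\Delta^2}{2}\E_{\mu^{(0)}}[N_j(T)]$, applying Pinsker's inequality to the bounded statistic $N_j(T)/T \in [0,1]$, and invoking Cauchy--Schwarz on $\sum_j \sqrt{\E_{\mu^{(0)}}[N_j(T)]} \le \sqrt{KT}$, one obtains $\frac{1}{K}\sum_j \E_{\mu^{(j)}}[N_j(T)] \le T/K + T^{3/2}\Delta/(2\sqrt{K})$, which is at most $T/2$ once $\alpha_0$ is chosen large enough (small-$K$ edge cases absorbed into the final constant $\alpha$). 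This gives $\sup_{\mu\in\mathcal U} R^{\FRG}_{\mu,c}(T,\pi) \ge \max_j R^{\FRG}_{\mu^{(j)},c}(T,\pi) \ge \Delta T/2 = \alpha(\sqrt{KT} \wedge cT)$.

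\textbf{Main obstacle.} The most delicate step is the unified choice $\Delta = c \wedge \sqrt{K/T}$ in the minimax construction. The constraint $\Delta \le c$ is what allows us to absorb the abstention term $c\,\E[T_1]$ cleanly into the gap-weighted bound $\Delta(T - \E[N_j^{(0)}(T)])$, reducing everything to bounding the single quantity $\sum_j \E_{\mu^{(j)}}[N_j(T)]$; without this absorption one would need to handle regret from pulls and from abstention via separate arguments that do not combine into the clean $\sqrt{KT} \wedge cT$ form. Meanwhile $\Delta \le \sqrt{K/T}$ keeps the Pinsker correction $T^{3/2}\Delta/\sqrt{K}$ bounded by a constant fraction of $T$, so that the averaged bound actually delivers $\Omega(T)$ slack from the trivial upper bound. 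Together, this single parameter choice interpolates cleanly between the canonical $\sqrt{KT}$ regime (large $c$) and the always-abstain $cT$ regime (small $c$) in one unified argument.
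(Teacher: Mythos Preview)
Your proposal is correct. For the asymptotic bound, you and the paper follow essentially the same route: both reduce to showing $\liminf_{T\to\infty}\E_\mu[N_i(T)]/\log T \ge 2/\Delta_i^2$ for each suboptimal $i$, then combine this with $R^{\FRG}_{\mu,c}(T,\pi)\ge \sum_{i>1}(\Delta_i\wedge c)\,\E_\mu[N_i(T)]$. The only cosmetic difference is that the paper first argues $R^{\FRG}$-consistency implies $R^{\CA}$-consistency (via $R^{\CA}_\mu\le \max_{i>1}\tfrac{\Delta_i}{\Delta_i\wedge c}\,R^{\FRG}_{\mu,c}$) and then cites the classical Lai--Robbins bound, whereas you re-derive that pull-count lower bound directly from $R^{\FRG}$-consistency.

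For the minimax bound your route is genuinely different. The paper uses a two-instance Bretagnolle--Huber comparison: it takes $\mu$ with $\mu_1=\Delta$, $\mu_i=0$ otherwise, locates a low-pulled arm $j$ by pigeonhole, builds $\mu'$ by raising $\mu'_j=2\Delta$, and lower-bounds $R^{\FRG}_{\mu,c}+R^{\FRG}_{\mu',c}$ via $\PP_{\mu,c}(N_1(T)\le T/2)+\PP_{\mu',c}(N_1(T)>T/2)\ge \tfrac12\exp(-\mathrm{KL})$, then sets $\Delta=\sqrt{K/T}\wedge c$. You instead use the $K$-spike family plus a null instance and average via Pinsker and Cauchy--Schwarz. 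Both are textbook techniques; the paper's is slightly shorter and avoids the $T/K$ term that forces your small-$K$ caveat (for $K=2$ your bound reads $T/2+T/(2\alpha_0)$, which still yields $(1-\gamma)T$ slack for some universal $\gamma<1$, so your parenthetical absorption into $\alpha$ is legitimate but should be stated as ``at most $\gamma T$'' rather than ``at most $T/2$''). The key modeling insight---that $\Delta\le c$ lets the abstention contribution $(c-\Delta)\E[T_1]\ge 0$ be dropped, collapsing the problem to bounding $\E[N_j(T)]$ exactly as in the canonical setting---is the same in both approaches.
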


Comparing the upper bounds on the cumulative regret of our algorithm \textsc{FRG-TSwA} in Theorem~\ref{theorem_fixedregret_upperbound} with the corresponding lower bounds in Theorem~\ref{theorem_fixedregret_lowerbound}, it is evident that our algorithm exhibits both asymptotic and minimax optimality.

\paragraph{Asymptotic optimality.}  For  any abstention regret $c>0$ and  bandit instance $\mu \in\mathcal U$, the regret of our algorithm satisfies the following limiting behaviour:
$$
\lim_{T\to \infty} \frac {R^{\FRG}_{\mu,c}(T)} {\log T} =  2 \sum_{i>1} \frac{\Delta_i\wedge c }{\Delta_i^2} .
$$
The above asymptotically optimal result yields several intriguing implications.  First, the inclusion of the additional fixed-regret abstention option does not obviate the necessity of differentiating between suboptimal arms and the optimal one, and the exploration-exploitation trade-off remains crucial. In fact, to avoid the case in which the cumulative regret grows polynomially, the agent must still asymptotically allocate the same proportion of pulls to each suboptimal arm, as in the canonical model. This assertion is rigorously demonstrated in the proof of the lower bound (refer to Appendix~\ref{subsection_fixedregret_lowerboundproof} for details). Nevertheless, the abstention option does indeed reduce the exploration cost for the agent. Specifically, when exploring any suboptimal arm with a suboptimality gap larger than $c$, our algorithm tends towards employing the abstention option to minimize the instantaneous regret. This aspect is formally established in the proof of the asymptotic upper bound (see Appendix~\ref{subsection_fixedregret_upperboundproof}).

\paragraph{Minimax optimality.} In the context of worst-case guarantees for the cumulative regret, we focus on the dependence on $c$, $K$, and $T$. Notably, the $\sum_{i>1}{\Delta_i}$ term\footnote{This term is unavoidable when the abstention regret $c$ is sufficiently high, since every reasonable algorithm has to allocate a fixed number of pulls to each arm.} is typically considered as negligible in the literature \citep{audibert2009minimax, agrawal2017near, lattimore2020bandit}. Therefore, Theorem~\ref{theorem_fixedregret_upperbound} demonstrates that our algorithm attains a worst-case regret of $O(\sqrt{KT}\wedge cT)$, which is minimax optimal in light of Theorem~\ref{theorem_fixedregret_lowerbound}.

A phase transition phenomenon can be clearly observed from the worst-case guarantees, which dovetails with our intuitive understanding of the fixed-regret abstention setting. When the abstention regret $c$ is sufficiently low, it becomes advantageous to consistently opt for abstention to avoid the worst-case scenarios. On the contrary, when the abstention regret $c$ exceeds a certain threshold, the abstention option proves to be inadequate in alleviating the worst-case regret, as compared to the canonical model.

\begin{remark}
\label{remark_linear}
Although our model allows for the selected arm $A_t$ and the abstention option $B_t$ to depend on each other, the procedure used in both algorithms within this work is to first determine $A_t$ {\em before} $B_t$; this successfully achieves both forms of optimality. 
Nevertheless, this approach might no longer be optimal beyond the canonical $K$-armed bandit setting. In $K$-armed bandits, each arm operates independently. Conversely, in models like linear bandits, pulling one arm can indirectly reveal information about other arms. Policies based on the principle of optimism in the face of uncertainty, as well as Thompson Sampling, fall short of achieving asymptotic optimality in the context of linear bandits \citep{lattimore2017end}.
Therefore, the abstention option becomes particularly attractive if there exists an arm that incurs a substantial regret but offers significant insights into the broader bandit instance.
\end{remark}

\paragraph{Stochastic and heterogeneous abstention regret.} If the regret incurred by the agent when opting for abstention is not deterministic but stochastically generated from a distribution with known expectation $c$, Algorithm~\ref{algo1} remains effective, and our analyses of the upper and lower bounds remain valid. Additionally, another natural extension of our model is that the abstention regret is different for different arms. Specifically, the agent incurs a regret of $c_i>0$ when pulling arm $i$ and choosing abstention. In this scenario, the abstention criteria in Algorithm~\ref{algo1} can be adapted accordingly, and the same asymptotic and minimax optimality can be established; see Appendix~\ref{appendix_hetorogenous_abstention} for details.

\color{black}

\section{Fixed-Reward Setting}
\label{section_fixedreward}
In this section, we investigate the fixed-reward setting. Here, the reward associated with the abstention option remains consistently fixed at $c\in \mathbb R$. When exploring a specific arm, the agent has the capability to determine whether selecting the abstention option yields a higher reward   \emph{solely} based on its own estimated mean reward. However, in the fixed-regret setting, this decision can only be made by taking into account \emph{both} its own estimated mean reward and the estimated mean reward of the potentially best arm. In this regard, the fixed-reward setting is inherently less complex than the fixed-regret setting. As a result, it becomes possible for us to design a more general strategy \underline{F}ixed-\underline{R}e\underline{w}ard \underline{Alg}orithm \underline{w}ith \underline{A}bstention (or \textsc{FRW-ALGwA}), whose pseudocode is presented in Algorithm~\ref{algo2}. Despite the straightforward nature of our algorithm, we demonstrate its dual attainment of both asymptotic and minimax optimality through an exhaustive theoretical examination in Sections~\ref{subsection_fixedreward_upperbound} and \ref{subsection_fixedreward_lowerbound}.

\begin{algorithm}[t]
\caption{Fixed-Reward Algorithm with Abstention (or \textsc{FRW-ALGwA})}
\label{algo2}
\hspace*{0.02in} {\bf Input: } Arm set $[K]$, abstention reward $c\in \mathbb R$, and a base algorithm \textsc{ALG} that is both asymptotically and minimax optimal for the canonical multi-armed bandit model$.$
\begin{algorithmic}[1]
\STATE Initialize $\hat{\mu}_i(0)=+\infty$ for all arms $i \in[K]$.
\FOR{$t= 1, 2,\ldots,T$}
\STATE Pull the arm $A_t$ chosen by the base algorithm \textsc{ALG}.
\STATE Choose to abstain ($B_t = 1$) if and only if $\hat{\mu}_{A_t}(t-1) \le c.$
\STATE Observe $X_t$ from the arm $A_t$, and update $\hat{\mu}_i(t)$ for all $i \in[K]$.
\ENDFOR
\end{algorithmic}
\end{algorithm}

\textsc{FRW-ALGwA} leverages a base algorithm \textsc{ALG} that is asymptotically and minimax optimal for canonical multi-armed bandits as its input. For comprehensive definitions of asymptotic and minimax optimality within the canonical model, see Appendix~\ref{appendix_standard_bandits}. Notably, eligible candidate algorithms include $\text{KL-UCB}^{++}$ \citep{menard2017minimax}, ADA-UCB \citep{lattimore2018refining}, MOTS-$\mathcal{J}$ \citep{jin2021mots} and Less-Exploring Thompson Sampling \citep{jin23b}. 
In our algorithm, at each time step $t$, the base algorithm determines the selected arm $A_t$ according to the partial interaction historical information $(A_1,X_1,A_2,X_2, \ldots,A_{t-1}, X_{t-1})$. Subsequently, the algorithm decides whether or not to abstain, indicated by the binary random variable $B_t$, by comparing the empirical mean of the arm $A_t$, denoted as $\hat{\mu}_{A_t}(t-1)$, to the abstention reward~$c$.

\subsection{Upper Bounds}
\label{subsection_fixedreward_upperbound}
Recall the definition of the cumulative regret ${R^{\FRW}_{\mu,c}(T)}$, as presented in Equation~\eqref{equation_fixedreward_regret} for the fixed-reward setting.  
Theorem~\ref{theorem_fixedreward_upperbound} establishes both the instance-dependent asymptotic and instance-independent minimax upper bounds for Algorithm~\ref{algo2}; see Appendix~\ref{subsection_fixedreward_upperboundproof} for the proof.

\begin{theorem}
\label{theorem_fixedreward_upperbound}
For  all abstention rewards $c\in\mathbb R$ and  bandit instances $\mu\in\mathcal U$, Algorithm~\ref{algo2} guarantees that 
$$
\limsup_{T\to \infty} \frac {R^{\FRW}_{\mu,c}(T)} {\log T} \le  2 \sum_{i>1} \frac{ \mu_1\vee c - \mu_i\vee c }{\Delta_i^2}.
$$
Furthermore, there exists a universal constant $\alpha > 0$ such that
$$
{R^{\FRW}_{\mu,c}(T)}  \le \alpha \left(\sqrt{KT}+\sum_{i\in[K]}\left( \mu_1\vee c -\mu_i\right)\right).
$$
\end{theorem}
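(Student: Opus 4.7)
The plan is to decompose the fixed-reward regret against the \emph{effective means} $\tilde\mu_i := \mu_i \vee c$ and then control each piece. Using that $X_t$ is conditionally independent of $(A_t,B_t)$ given $\mathcal F_{t-1}$ with $\E[X_t\mid A_t=i]=\mu_i$, a direct rearrangement of the definition of $R^{\FRW}_{\mu,c}(T)$ yields
\begin{equation*}
\begin{split}
R^{\FRW}_{\mu,c}(T) &= \underbrace{\sum_{i=1}^K \E[N_i(T)](\tilde\mu_1-\tilde\mu_i)}_{T_1} + \underbrace{\sum_{i:\mu_i<c}\E[N_i^{(0)}(T)](c-\mu_i)}_{T_2} \\ &\quad + \underbrace{\sum_{i:\mu_i>c}\E[N_i^{(1)}(T)](\mu_i-c)}_{T_3},
\end{split}
\end{equation*}
where $T_1$ is the ``ideal'' regret one would incur if every abstention decision were optimal, while $T_2$ and $T_3$ charge the excess regret of, respectively, not abstaining when one should and abstaining when one should not.

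For the \emph{asymptotic} bound, note that $\tilde\mu_1-\tilde\mu_1=0$ and $0\le \tilde\mu_1-\tilde\mu_i\le \Delta_i$ for $i>1$. Invoking the asymptotic optimality of \textsc{ALG} on the canonical Gaussian model, which gives $\limsup_T \E[N_i(T)]/\log T \le 2/\Delta_i^2$, immediately yields $\limsup_T T_1/\log T \le 2\sum_{i>1}(\tilde\mu_1-\tilde\mu_i)/\Delta_i^2$, exactly the target. For $T_2$, reindexing by the successive pulls of arm $i$ (so that $\hat\mu_{i,s}$ is the average of $s$ i.i.d.\ Gaussian samples) and using a Gaussian tail bound gives $\E[N_i^{(0)}(T)]\le 1+\sum_{s\ge 1}\mathbb P(\hat\mu_{i,s}>c)\le 1+2/(c-\mu_i)^2$ when $\mu_i<c$, so $T_2=O(1)=o(\log T)$; an analogous argument yields $T_3=O(1)$.

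For the \emph{minimax} bound, using $\tilde\mu_1-\tilde\mu_i\le\Delta_i$ one gets $T_1\le R^{\CA}_\mu(T,\textsc{ALG})\le \alpha_1(\sqrt{KT}+\sum_{i>1}\Delta_i)$ by the minimax optimality of \textsc{ALG}, and $\sum_{i>1}\Delta_i\le \sum_i(\mu_1\vee c-\mu_i)$. The delicate step is $T_2$ (symmetrically $T_3$): neither the $2/(c-\mu_i)^2$ concentration bound (which blows up as $\mu_i\to c$) nor the trivial $\E[N_i^{(0)}(T)]\le \E[N_i(T)]$ suffices on its own. Letting $\tau_s$ be the time of the $(s+1)$-th pull of arm $i$, I would use $\E[\mathbbm 1_A\mathbbm 1_B]\le \sqrt{\mathbb P(A)\mathbb P(B)}$ termwise, followed by Cauchy--Schwarz in $s$, to obtain
\begin{equation*}
\E[N_i^{(0)}(T)]-1 \le \sum_{s\ge 1}\sqrt{\mathbb P(\tau_s\le T)\,\mathbb P(\hat\mu_{i,s}>c)} \le \sqrt{\tfrac{2\,\E[N_i(T)]}{(c-\mu_i)^2}}.
\end{equation*}
Multiplying through by $(c-\mu_i)$ and summing over arms via $\sum_i\sqrt{\E[N_i(T)]}\le\sqrt{KT}$ (Cauchy--Schwarz in $i$) gives $T_2\le \sum_i(\mu_1\vee c-\mu_i)+\sqrt{2KT}$, and the same argument yields $T_3\le\sqrt{2KT}$; combining the three pieces produces the claimed $\alpha(\sqrt{KT}+\sum_i(\mu_1\vee c-\mu_i))$ bound.

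The main obstacle is exactly this interpolation for $T_2$ and $T_3$: both one-sided bounds fail in the near-tie regime $|c-\mu_i|\ll 1$, and the product form $\min\le\sqrt{\cdot\,\cdot}$ combined with Cauchy--Schwarz across both the pull index $s$ and the arm index $i$ is what converts the troublesome $1/(c-\mu_i)$ factor into the benign $\sqrt{KT}$ scaling without a factor-of-$K$ blow-up.
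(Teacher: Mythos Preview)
Your proof is correct. The three-term decomposition and the asymptotic argument coincide with the paper's; the genuine difference lies in how you control $T_2$ and $T_3$ for the minimax bound.

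The paper does not use the Cauchy--Schwarz interpolation $\mathbb P(A\cap B)\le\sqrt{\mathbb P(A)\mathbb P(B)}$. Instead, for each arm with $\mu_i<c$ it splits on whether $c-\mu_i<\sqrt{K/T}$ or $c-\mu_i\ge\sqrt{K/T}$: on the small-gap side it uses the crude bound $\E[N_i^{(0)}(T)]\le\E[N_i(T)]$ and then $\sum_i\E[N_i(T)]=T$; on the large-gap side it uses the concentration bound $\E[N_i^{(0)}(T)]\le 1+2/(c-\mu_i)^2$ and then $2/(c-\mu_i)\le 2\sqrt{T/K}$. The threshold $\sqrt{K/T}$ is exactly what balances the two contributions at $O(\sqrt{KT})$. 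The paper also separates the cases $\mu_1\le c$ and $\mu_1>c$ at the top level, whereas your decomposition absorbs both uniformly (since $T_1=0$ when $\mu_1\le c$).

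Your double Cauchy--Schwarz route (once over the pull index $s$, once over the arm index $i$) is slicker: it avoids choosing a threshold and avoids the top-level case split. The paper's threshold trick is more elementary---it needs nothing beyond $\min(a,b)$ and the two one-sided bounds---and makes the $\sqrt{K/T}$ phase boundary explicit. Both arrive at the same $O(\sqrt{KT})$ with the same additive $\sum_i(\mu_1\vee c-\mu_i)$ term.
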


\begin{remark}
\label{remark_c_large}
It is worth considering the special case where $c\ge \mu_1$, where opting for abstention results in a reward even greater than, or equal to, the mean reward of the best arm. For this particular case, as per Theorem~\ref{theorem_fixedreward_upperbound}, since $\mu_1\vee c - \mu_i\vee c=0$ for all $i > 1$, our algorithm achieves a regret of $o(\log T)$.  This result, in fact, is not surprising. In contrast to the fixed-regret setting where the regret associated with the abstention option is strictly positive, in this specific scenario of the fixed-reward setting, selecting the abstention option is indeed the optimal action at a single time step, regardless of the arm pulled. Therefore, there is no necessity to distinguish between suboptimal arms and the optimal one, and the exploration-exploitation trade-off becomes inconsequential. However, when the abstention reward is below the mean reward of the best arm, i.e., $c<\mu_1$, maintaining a subpolynomial cumulative regret still hinges on the delicate balance between exploration and exploitation, as evidenced by the forthcoming exposition of the asymptotic lower bound.
\end{remark}

\subsection{Lower Bounds}
\label{subsection_fixedreward_lowerbound}
We hereby introduce the concept of $R^{\FRW}$-consistency for the fixed-reward setting, in a manner analogous to the fixed-regret setting.
Following this, we present two distinct lower bounds for the problem of regret minimization in multi-armed bandits with fixed-reward abstention in Theorem~\ref{theorem_fixedreward_lowerbound}. The proof for Theorem~\ref{theorem_fixedreward_lowerbound} is postponed to Appendix~\ref{subsection_fixedreward_lowerboundproof}.

\begin{definition}[$R^{\FRW}$-consistency]
\label{definition_fixedreward_consistent}
An algorithm $\pi\in \Pi^{\FRW}$ is said to be \emph{$R^{\FRW}$-consistent} if for all abstention rewards $c\in \mathbb R$,  bandit instances $\mu\in\mathcal U$, and $a>0$, $R^{\FRW}_{\mu,c}(T, \pi)=o(T^a)$.
\end{definition}

\begin{theorem} 
\label{theorem_fixedreward_lowerbound}
For any abstention reward $c\in \mathbb R$,  bandit instance $\mu\in \mathcal U$ and  $R^{\FRW}$-consistent algorithm~$\pi$, it holds that
$$
\liminf_{T\to \infty} \frac {R^{\FRW}_{\mu,c}(T, \pi)} {\log T} \ge  2 \sum_{i>1} \frac{ \mu_1\vee c - \mu_i\vee c }{\Delta_i^2}.
$$
For any abstention reward $c\in \mathbb R$ and time horizon $T\ge K$, there exists a universal constant $\alpha > 0$ such that
$$
\inf_{\pi\in \Pi^{\FRW}} \sup _{\mu\in \mathcal U} {R^{\FRW}_{\mu,c}(T, \pi)}  \ge \alpha \sqrt{KT}.
$$
\end{theorem}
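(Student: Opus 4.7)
The proof hinges on a \emph{per-pull regret decomposition}: conditional on pulling arm $j$ at step $t$, the best available expected reward is $\max(\mu_j,c)=\mu_j\vee c$ (attained by abstaining iff $\mu_j<c$), so the regret contributed by that pull is at least $\mu_1\vee c - \mu_j\vee c$. Since $(A_t,B_t)$ is $\cF_{t-1}$-measurable, a short conditioning argument gives
\begin{equation*}
R^{\FRW}_{\mu,c}(T,\pi) \;\ge\; \sum_{j\in[K]}(\mu_1\vee c - \mu_j\vee c)\,\E_\mu[N_j(T)].
\end{equation*}
This reduces both bounds to controlling expected pull counts. Crucially, abstention does not alter the observation $X_t$, so the filtration $\cF_T$ and the Gaussian divergence decomposition $\mathrm{KL}(\PP^\pi_\mu,\PP^\pi_{\mu'})=\sum_j \tfrac12(\mu_j-\mu'_j)^2\,\E_\mu[N_j(T)]$ are identical to those of the canonical model.

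\emph{Asymptotic bound.} Without loss of generality assume $\mu_1>c$ (otherwise each summand on the right-hand side vanishes and the statement is trivial). Fix a suboptimal arm $i>1$ and $\epsilon>0$, and form $\mu'$ by setting $\mu'_i=\mu_1+\epsilon$ and leaving other coordinates unchanged; in $\mu'$, arm $i$ becomes the unique best arm. Applying the per-pull decomposition to both $\mu$ and $\mu'$, together with $R^{\FRW}$-consistency, yields $\E_\mu[N_j(T)]=o(T^a)$ for every $j\neq 1$ and $\E_{\mu'}[N_1(T)]=o(T^a)$ for every $a>0$, so Markov's inequality gives $\PP_\mu(N_1(T)\le T/2)+\PP_{\mu'}(N_1(T)>T/2) = o(T^{a-1})$. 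Bretagnolle--Huber combined with the divergence decomposition then delivers
\begin{equation*}
\tfrac12(\mu'_i-\mu_i)^2\,\E_\mu[N_i(T)] \;\ge\; -\log\!\bigl(2(\PP_\mu(N_1(T)\le T/2)+\PP_{\mu'}(N_1(T)>T/2))\bigr) \;\ge\; (1-a)\log T - O(1).
\end{equation*}
Sending $a\downarrow 0$ and then $\epsilon\downarrow 0$ yields $\liminf_{T\to\infty}\E_\mu[N_i(T)]/\log T\ge 2/\Delta_i^2$. Re-applying the per-pull decomposition and summing over $i>1$ gives the asymptotic claim.

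\emph{Minimax bound.} Set $\Delta=\tfrac12\sqrt{K/T}\le\tfrac12$ (valid since $T\ge K$), and consider the hard family $\mu^{(k)}$ defined by $\mu^{(k)}_k=c+\Delta$ and $\mu^{(k)}_j=c$ for $j\neq k$, along with the reference instance $\mu^{(0)}$ in which every arm has mean $c$. A direct computation, using $B_t\in\cF_{t-1}$, shows $R^{\FRW}_{\mu^{(k)},c}(T,\pi)=\Delta\bigl(T-\E_{\mu^{(k)}}[N^{(0)}_k(T)]\bigr)\ge\Delta\bigl(T-\E_{\mu^{(k)}}[N_k(T)]\bigr)$. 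Since only arm $k$ differs between $\mu^{(0)}$ and $\mu^{(k)}$, we have $\mathrm{KL}(\PP_{\mu^{(0)}},\PP_{\mu^{(k)}})=\tfrac12\Delta^2\E_{\mu^{(0)}}[N_k(T)]$, and Pinsker's inequality applied to the bounded random variable $N_k(T)\in[0,T]$ yields $\E_{\mu^{(k)}}[N_k(T)]\le \E_{\mu^{(0)}}[N_k(T)]+\tfrac12 T\Delta\sqrt{\E_{\mu^{(0)}}[N_k(T)]}$. Choosing $k^\star=\argmin_k \E_{\mu^{(0)}}[N_k(T)]$ forces $\E_{\mu^{(0)}}[N_{k^\star}(T)]\le T/K$; substituting our $\Delta$ then bounds $\E_{\mu^{(k^\star)}}[N_{k^\star}(T)]$ by $3T/4$, so $\sup_\mu R^{\FRW}_{\mu,c}(T,\pi)\ge R^{\FRW}_{\mu^{(k^\star)},c}(T,\pi)\ge\Delta T/4 = \sqrt{KT}/8$, i.e., $\alpha=1/8$ suffices.

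The point requiring care, rather than a genuine obstacle, is verifying that the abstention option does not enrich the learner's information: since $X_t$ is recorded irrespective of $B_t$ and any internal randomness used by $\pi$ to select $B_t$ is independent of the underlying means, the standard change-of-measure machinery transfers verbatim. Beyond this, the only new ingredient over classical Lai--Robbins and Auer--Cesa-Bianchi--Freund--Schapire style proofs is the per-pull decomposition through $\mu_j\vee c$, which correctly encodes the optimal within-arm abstention choice and converts expected pull counts into fixed-reward regret contributions.
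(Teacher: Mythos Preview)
Your proposal is correct. The asymptotic part is essentially the paper's argument: the same per-pull decomposition $R^{\FRW}_{\mu,c}(T,\pi)\ge\sum_{i>1}(\mu_1\vee c-\mu_i\vee c)\E[N_i(T)]$, the same alternative instance $\mu'_i=\mu_1+\varepsilon$, and the same Bretagnolle--Huber plus divergence-decomposition route to $\liminf \E[N_i(T)]/\log T\ge 2/\Delta_i^2$. You thread consistency through Markov's inequality on pull counts first, whereas the paper plugs consistency into the regret expression at the end; this is a cosmetic reordering.

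For the minimax part you take a genuinely different route. The paper uses a \emph{two-instance} construction ($\mu$ with $\mu_1=c+\Delta$ and all others $c$; $\mu'$ obtained by boosting a least-pulled arm $j$ to $c+2\Delta$) and applies Bretagnolle--Huber to the event $\{N_1(T)\le T/2\}$, yielding a constant $\alpha=\exp(-4)/8$. You instead use a \emph{$(K{+}1)$-instance family} $\{\mu^{(0)},\mu^{(1)},\ldots,\mu^{(K)}\}$ with a null reference $\mu^{(0)}$ and apply Pinsker's inequality directly to the bounded variable $N_{k^\star}(T)$, obtaining the cleaner constant $\alpha=1/8$. Your approach avoids the exponential loss from Bretagnolle--Huber and gives the exact regret identity $R^{\FRW}_{\mu^{(k)},c}(T,\pi)=\Delta(T-\E[N_k^{(0)}(T)])$ in closed form; the paper's two-instance argument, on the other hand, is the one already set up for the fixed-regret minimax proof, so it transfers with no new machinery. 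Both are standard and sound; the only implicit assumption in your bound $T/K+T/4\le 3T/4$ is $K\ge 2$, which is harmless.
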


By comparing the upper bounds in Theorem~\ref{theorem_fixedreward_upperbound} with the lower bounds in Theorem~\ref{theorem_fixedreward_lowerbound}, it is firmly confirmed that Algorithm~\ref{algo2} is both asymptotically and minimax optimal in the fixed-reward setting.

\paragraph{Asymptotic optimality.}  For any abstention reward $c\in\mathbb R$ and  bandit instance $\mu\in\mathcal U$, our algorithm ensures the following optimal asymptotic behavior for the cumulative regret:
$$
\lim_{T\to \infty} \frac {R^{\FRW}_{\mu,c}(T)} {\log T} = 2 \sum_{i>1} \frac{ \mu_1\vee c - \mu_i\vee c }{\Delta_i^2}.
$$
Since it holds  that $\mu_1\vee c - \mu_i\vee c\le \Delta_i$ for all $i > 1$, our algorithm effectively reduces the cumulative regret in the asymptotic regime  through the incorporation of the fixed-reward abstention option.

\paragraph{Minimax optimality.} Concerning the worst-case performance of our algorithm, disregarding the additive term $\sum_{i\in[K]}\left( \mu_1\vee c -\mu_i\right)$, it achieves an optimal worst-case regret of $O(\sqrt{KT})$. While this worst-case regret is the same as that for canonical multi-armed bandits, achieving it is non-trivial as we have to  simultaneously achieve the minimal asymptotic instance-dependent regret. 


Moreover, there is no occurrence of the phase transition phenomenon in the fixed-reward setting. This absence can be attributed to the intrinsic nature of the fixed-reward abstention option. For any abstention reward $c\in\mathbb R$ and online algorithm, we can always construct a challenging bandit instance that leads to a cumulative regret of $\Omega(\sqrt{KT})$, as demonstrated in the proof of the minimax lower bound in Appendix~\ref{subsection_fixedreward_lowerboundproof}.

\section{Numerical Experiments}
\label{section_experiment}
In this section, we conduct numerical experiments to  empirically substantiate our theoretical insights. To reduce clutter, we report our results only for the fixed-regret setting here. Results pertaining to the fixed-reward setting are deferred to Appendix~\ref{appendix_experiment}, where we consider two particular realizations of our algorithm based on Less-Exploring Thompson Sampling \citep{jin23b} and $\text{KL-UCB}^{++}$ \citep{menard2017minimax}. In each experiment, the reported cumulative regrets are averaged over $2,000$ independent trials and the corresponding standard deviations are displayed as error bars in the figures.



To confirm the benefits of incorporating the abstention option, we compare the performance of our proposed algorithm \textsc{FRG-TSwA} (Algorithm~\ref{algo1}) with that of Less-Exploring Thompson Sampling \citep{jin23b}, which serves as a  baseline algorithm without the abstention option. We consider two synthetic bandit instances. The first instance $\mu^{\dagger}$ with $K=7$ has uniform suboptimality gaps: $\mu^{\dagger}_1 = 1$ and $\mu^{\dagger}_i = 0.7$ for all $i\in[K]\setminus \{1\}$. For the second instance $\mu^{\ddagger}$ with $K=10$, the suboptimality gaps are more diverse: $\mu^{\ddagger}_1 = 1$, $\mu^{\ddagger}_i = 0.7$ for $i \in \{ 2,3,4\}$, $\mu^{\ddagger}_i = 0.5$ for $i \in\{ 5,6,7\}$ and $\mu^{\ddagger}_i = 0.3$ for $i \in\{ 8,9,10\}$. The empirical averaged cumulative regrets of both methods with abstention regret $c=0.1$ for different time horizons $T$ are presented in Figure~\ref{fig_FRG_main1}. 
To demonstrate their asymptotic behavior, we also plot the instance-dependent asymptotic lower bound on the cumulative regret (refer to Theorem~\ref{theorem_fixedregret_lowerbound}) in each sub-figure. It can be observed that \textsc{FRG-TSwA} is clearly superior compared to the non-abstaining baseline, especially for large values of $T$. This demonstrates the advantage of the abstention mechanism. With regard to the growth trend, as the time horizon $T$ increases, the curve corresponding to \textsc{FRG-TSwA} closely approximates that of the asymptotic lower bound. This suggests that the expected cumulative regret of  \textsc{FRG-TSwA} matches the lower bound asymptotically, thereby substantiating the theoretical results presented in Section~\ref{section_fixedregret}.

\begin{figure}[t]
\vspace{0pt}
 \centering
	\begin{minipage}{0.79\linewidth}
		\hspace{-6pt}
		\subfigure[Instance $\mu^{\dagger}$]{
			\begin{minipage}[b]{0.48\textwidth}
				\includegraphics[width=1.02\textwidth]{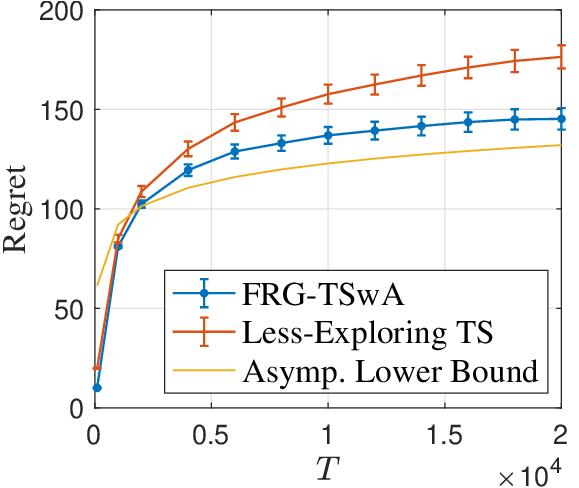}
			\end{minipage}
		}
		\hspace{-6pt}
		\subfigure[Instance $\mu^{\ddagger}$]{
			\begin{minipage}[b]{0.48\textwidth}  
				\includegraphics[width=1.02\textwidth]{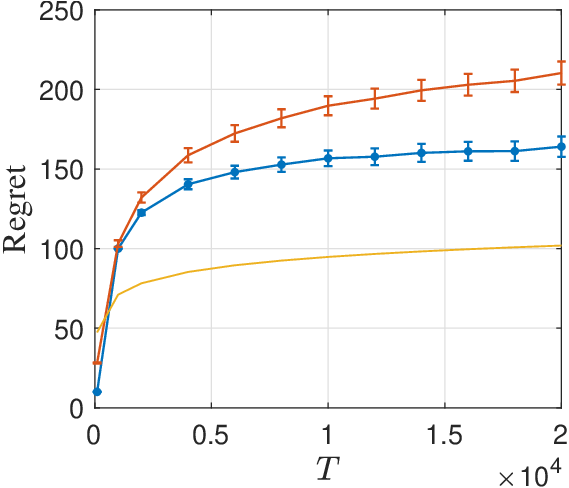}
			\end{minipage}
		}
	\end{minipage}
        \vspace{-5pt}
	\caption{Empirical regrets with abstention regret $c=0.1$ for different time horizons~$T$.}
	\label{fig_FRG_main1}
 \vspace{0pt}
\end{figure}
\begin{figure}[t]
 \vspace{0pt}
 \centering
	\begin{minipage}{0.79\linewidth}
		\hspace{-6pt}
		\subfigure[Instance $\mu^{\dagger}$]{
			\begin{minipage}[b]{0.45\textwidth}
				\includegraphics[width=1.02\textwidth]{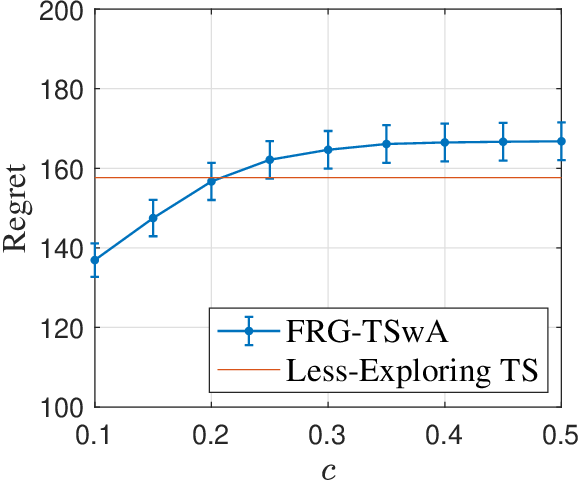}
			\end{minipage}
		}
		\hspace{-6pt}
		\subfigure[Instance $\mu^{\ddagger}$]{
			\begin{minipage}[b]{0.45\textwidth}  
				\includegraphics[width=1.02\textwidth]{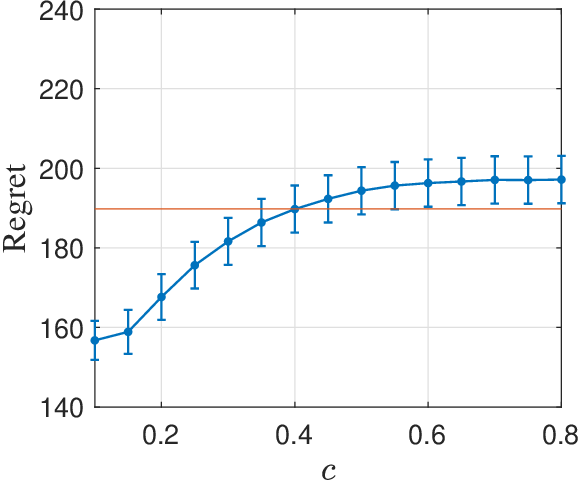}
			\end{minipage}
		}
	\end{minipage}
\vspace{-5pt}
	\caption{Empirical regrets with time horizon $T=10,000$ for different abstention regrets~$c$.}
	\label{fig_FRG_main2}
\vspace{0pt}
\end{figure}

To illustrate the effect of the abstention regret $c$, we evaluate the performance of \textsc{FRG-TSwA} for varying values of $c$,  while keeping the time horizon $T$ fixed at $10,000$. The experimental results for both bandit instances $\mu^{\dagger}$ and $\mu^{\ddagger}$ are presented in Figure~\ref{fig_FRG_main2}. Within each sub-figure, we observe that as $c$ increases, the empirical averaged cumulative regret initially increases but eventually saturates beyond a certain threshold value of $c$. These empirical observations align well with our expectations. Indeed, when provided with complete information about the bandit instance, if the abstention regret $c$ exceeds the largest suboptimality gap, the agent gains no advantage in choosing the abstention option when selecting any arm. However, we remark that the agent lacks this oracle-like knowledge of the suboptimality gaps and must estimate them on the fly.  Consequently, this results in the inevitable selection of the abstention option, even when the abstention regret $c$ is large.

\section{Conclusions and Future Work} \label{app:concl}
\label{section_conclusion}
In this paper, we consider, for the first time, a multi-armed bandit model that allows for the possibility of {\em abstaining} from accepting the stochastic rewards, alongside the conventional arm selection. This innovative framework is motivated by real-world scenarios where decision-makers may wish to hedge against highly uncertain or risky actions, as exemplified in clinical trials. Within this enriched paradigm, we address both the fixed-regret and fixed-reward settings, providing tight upper and lower bounds on  asymptotic and minimax regrets for each scenario. 
{For the fixed-regret setting, we thoughtfully adapt a recently developed asymptotically and minimax optimal algorithm by \citet{jin23b} to accommodate the abstention option while preserving its attractive optimality characteristics. For the fixed-reward setting, we convert {\em any} asymptotically and minimax optimal algorithm for the canonical model into one that retains these optimality properties when the abstention option is present.}
 {We remark that our proposed algorithms are computationally efficient, incurring only a negligible constant-time overhead per step beyond the standard arm selection mechanisms.}
Finally, experiments on synthetic datasets validate our theoretical results and clearly demonstrate the advantage of incorporating the abstention option.

 {While our analysis focuses on Gaussian rewards, the framework naturally extends to other distributions, provided that specific boundary cases are handled with care.} As highlighted in Remark~\ref{remark_linear}, a fruitful avenue for future research lies in expanding the abstention model from $K$-armed bandits to linear bandits; specifically, it remains an open question whether the inclusion of the abstention feature can lead to enhanced asymptotic and minimax theoretical guarantees. Furthermore, in our work, the abstention option exerts no influence on the stochastic observation from the selected arm.
Delving into more sophisticated and general approaches to model the effect of the abstention option promises as a captivating future direction.

\section*{Acknowledgments}
T.~Jin  and V.~Y.~F.~Tan are supported by a Singapore Ministry of Education (MOE) AcRF Tier 2 grant under grant number A-8004062-00-00.

\bibliography{reference}
\bibliographystyle{tmlr}

\appendix

\section{Asymptotic and Minimax Optimality in Canonical Multi-Armed Bandits }
\label{appendix_standard_bandits}

In the canonical multi-armed bandit model, there is no additional abstention option. Given a bandit instance $\mu\in\mathcal U$, at each time step $t\in\mathbb N$, the agent employs an {\em online algorithm} $\pi$ to choose an arm $A_t$ from the arm set $[K]$, and then observes a random variable $X_t$ from the selected arm $A_t$, which is drawn from a Gaussian distribution $\mathcal N (\mu_{A_t}, 1)$ and independent of  observations from previous time steps. The choice of  $A_t$ might depend on the prior decisions and observations. To describe the setup formally, $A_t$ is $\mathcal F^{\CA}_{t-1}$-measurable, where $\mathcal F^{\CA}_t :=\sigma(A_1,X_1,A_2,X_2,\ldots,A_t, X_t)$ represents the  $\sigma$-field generated by the cumulative interaction history up to and including time $t$. Subsequently, the agent suffers an instantaneous regret of $\mu_1-X_t$.

The agent aims at minimizing the expected cumulative regret over a time horizon $T$, which is defined as 
$$
R^{\CA}_{\mu}(T, \pi) = T\mu_1 - \mathbb E\left[\sum_{t=1}^T X_t \right].
$$

We refer to the collection of all online policies for the canonical multi-armed bandit model as $\Pi^{\CA}$.

\begin{remark}
    It is worth noting that any algorithm designed for canonical multi-armed bandit model possesses the capability to decide the arm $A_t$ to pull at each time step $t$, based on the partial interaction history $(A_1,X_1,A_2,X_2, \ldots,A_{t-1}, X_{t-1})$, within the abstention model. Conversely, any algorithm tailored for the abstention model in the fixed-regret setting (or in the fixed-reward setting) can be applied to the canonical multi-armed bandit model, provided that the abstention regret (or the abstention reward) has been predetermined. Specifically, the algorithm can determine both the selected arm $A_t$ and the binary abstention variable $B_t$, although $B_t$ is purely auxiliary and exerts no influence on the cumulative regret $R^{\CA}_{\mu}(T, \pi)$.
\end{remark}

\paragraph{Lower bounds.} Both the asymptotic and minimax lower bounds for the canonical multi-armed bandit model have been thoroughly established \citep{lai1985asymptotically,auer1995gambling}. For a comprehensive overview, refer to Sections~15 and 16 of \citet{lattimore2020bandit}. Here, we summarize the results in the following:

\begin{definition}[$R^{\CA}$-consistency]
An algorithm $\pi\in \Pi^{\CA}$ is said to be \emph{$R^{\CA}$-consistent} if for all bandit instances $\mu\in \mathcal U$ and $a>0$, $R^{\CA}_{\mu}(T, \pi)=o(T^a)$.
\label{definition_standard_consistent}
\end{definition}
\begin{theorem} 
For any bandit instance $\mu\in \mathcal U$ and  $R^{\CA}$-consistent algorithm~$\pi$, it holds that
$$
\liminf_{T\to \infty} \frac {R^{\CA}_{\mu}(T, \pi)} {\log T} \ge  \sum_{i>1} \frac 2 {\Delta_i}.
$$

For any time horizon $T\ge K$, there exists a universal constant $\alpha > 0$ such that
$$
\inf_{\pi\in \Pi^{\CA}} \sup _{\mu\in \mathcal U} {R^{\CA}_{\mu}(T, \pi)}  \ge \alpha \sqrt{KT}.
$$
\label{theorem_standard_lowerbound}
\end{theorem}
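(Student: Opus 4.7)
The plan is to prove the two halves of the theorem separately using the standard change-of-measure machinery for bandits (see Chapters~15--16 of \citet{lattimore2020bandit}).

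For the asymptotic lower bound, fix an instance $\mu \in \mathcal U$ and a suboptimal arm $i>1$. For small $\epsilon>0$ and $a\in(0,1)$, I would construct an auxiliary Gaussian bandit $\mu'$ that coincides with $\mu$ in every coordinate except the $i$-th, where $\mu'_i := \mu_1 + \epsilon$; arm $i$ is then the unique optimal arm under $\mu'$. By the divergence decomposition lemma and the Gaussian identity $\mathrm{KL}(\mathcal N(a,1),\mathcal N(b,1))=(a-b)^2/2$,
$$\mathrm{KL}\bigl(\PP^\pi_\mu,\PP^\pi_{\mu'}\bigr)=\EE_\mu[N_i(T)]\cdot\frac{(\Delta_i+\epsilon)^2}{2}.$$
Next I apply Bretagnolle--Huber to the event $A:=\{N_i(T)>T/2\}$: under $\mu$ the event $A$ forces regret at least $\Delta_i T/2$, while under $\mu'$ the event $A^c$ forces regret at least $\epsilon T/2$, so $R^\CA$-consistency combined with Markov's inequality yields $\PP_\mu(A)+\PP_{\mu'}(A^c)=o(T^{a-1})$ for every $a>0$. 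Plugging this into Bretagnolle--Huber gives $\EE_\mu[N_i(T)]\cdot(\Delta_i+\epsilon)^2/2 \ge (1-a)\log T - O(1)$. Sending $T\to\infty$ and then $\epsilon\downarrow 0$, $a\downarrow 0$ produces $\liminf_T \EE_\mu[N_i(T)]/\log T \ge 2/\Delta_i^2$, and the canonical regret decomposition $R^\CA_\mu(T,\pi)=\sum_{i>1}\Delta_i\,\EE_\mu[N_i(T)]$ completes this part.

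For the minimax lower bound, fix any $\pi\in\Pi^\CA$ and set $\Delta:=\tfrac{1}{4}\sqrt{K/T}$. Let $\mu^{(0)}\in\mathcal U$ be the all-zeros instance; since $\sum_j \EE_{\mu^{(0)}}[N_j(T)]=T$, there exists $j^\star\in[K]$ with $\EE_{\mu^{(0)}}[N_{j^\star}(T)]\le T/K$. Define $\mu^\star$ by $\mu^\star_{j^\star}:=2\Delta$ and $\mu^\star_i:=0$ for $i\ne j^\star$, so arm $j^\star$ is uniquely optimal under $\mu^\star$ with gap $2\Delta$ over every other arm. Divergence decomposition yields
$$\mathrm{KL}\bigl(\PP^\pi_{\mu^{(0)}},\PP^\pi_{\mu^\star}\bigr)=\EE_{\mu^{(0)}}[N_{j^\star}(T)]\cdot 2\Delta^2 \le \frac{2\Delta^2 T}{K} = \tfrac{1}{8},$$
and Pinsker's inequality applied to the bounded quantity $N_{j^\star}(T)\in[0,T]$ gives $\bigl|\EE_{\mu^{(0)}}[N_{j^\star}(T)]-\EE_{\mu^\star}[N_{j^\star}(T)]\bigr|\le T/4$. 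Hence $\EE_{\mu^\star}[N_{j^\star}(T)]\le T/K + T/4$. Since every round on which arm $j^\star$ is not pulled incurs regret $2\Delta$ under $\mu^\star$,
$$R^\CA_{\mu^\star}(T,\pi)\ge 2\Delta\bigl(T-\EE_{\mu^\star}[N_{j^\star}(T)]\bigr)\ge 2\Delta T\bigl(\tfrac{3}{4}-\tfrac{1}{K}\bigr)\ge \alpha\sqrt{KT}$$
for a universal $\alpha>0$ whenever $K\ge 2$, which after taking $\sup_\mu$ yields the desired minimax bound.

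The main technical obstacle in both halves is the change-of-measure coupling between the two instances. In the asymptotic half, one must carefully extract the $(1-a)\log T$ factor from $R^\CA$-consistency through the Bretagnolle--Huber step before sending $a\downarrow 0$ and $\epsilon\downarrow 0$ in the right order; in the minimax half, the scale $\Delta\asymp\sqrt{K/T}$ is the unique choice that balances the KL budget against the achievable regret separation, and the pigeonhole step on $\mu^{(0)}$ is what selects the ``confusing'' instance $\mu^\star$ for each $\pi$. The remaining ingredients---the Gaussian KL formula, the divergence decomposition lemma, and the regret decomposition---are entirely standard.
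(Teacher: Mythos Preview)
The paper does not supply its own proof of this theorem; it states the result as classical and refers to \citet{lai1985asymptotically}, \citet{auer1995gambling}, and Chapters~15--16 of \citet{lattimore2020bandit}. Your proposal is correct and follows the standard arguments found in those references.

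Your asymptotic half is exactly the textbook change-of-measure/Bretagnolle--Huber argument. For the minimax half you deviate slightly from the route the paper itself takes when proving its analogous results (Theorems~\ref{theorem_fixedregret_lowerbound} and~\ref{theorem_fixedreward_lowerbound}) in two harmless ways. First, you use the all-zeros pivot instance and pigeonhole over all $K$ arms, whereas the paper uses a pivot with $\mu_1=\Delta$, $\mu_i=0$ for $i>1$ and pigeonholes over the $K-1$ suboptimal arms. Second, you transfer the pull-count estimate via Pinsker's inequality applied to the bounded variable $N_{j^\star}(T)\in[0,T]$, whereas the paper uses Bretagnolle--Huber on the event $\{N_1(T)\le T/2\}$. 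Both routes give the same $\Omega(\sqrt{KT})$ conclusion; your Pinsker version is slightly more direct in extracting the expected pull count, while the Bretagnolle--Huber route keeps the machinery uniform with the asymptotic half and avoids introducing an additional inequality not already stated in the paper.
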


\paragraph{Asymptotic and minimax optimality.} According to Theorem~\ref{theorem_standard_lowerbound}, in the canonical bandit model, an algorithm $\pi\in \Pi^{\CA} $ is said to be \emph{asymptotically optimal} if for all bandit instances $\mu\in\mathcal U$, it ensures that 
$$
\lim_{T\to \infty} \frac {R^{\CA}_{\mu}(T, \pi)} {\log T} =  \sum_{i>1} \frac 2 {\Delta_i}.
$$
Furthermore, it is said to be \emph{minimax optimal} if there exists a universal constant $\alpha > 0$ such that
$$
{R^{\CA}_{\mu}(T, \pi)}  \le \alpha \left(\sqrt{KT}+\sum_{i>1} \Delta_i \right).
$$

To the best of our knowledge, for canonical multi-armed bandits with Gaussian rewards, $\text{KL-UCB}^{++}$ \citep{menard2017minimax}, ADA-UCB \citep{lattimore2018refining}, MOTS-$\mathcal{J}$ \citep{jin2021mots} and Less-Exploring Thompson Sampling \citep{jin23b} exhibit simultaneous asymptotic and minimax optimality. As their names suggest, the former two algorithms follow the UCB-style, while the latter two are rooted in Thompson Sampling.

\begin{remark}
\label{remark_optimal_property}
One valuable byproduct derived from the proof of the asymptotic lower bound in Theorem~\ref{theorem_standard_lowerbound} is that, for any  $R^{\CA}$-consistent algorithm~$\pi$, bandit instance $\mu\in \mathcal U$ and suboptimal arm $i>1$, we have 
 $$
 \liminf _{T \rightarrow \infty} \frac{\E[N_i(T)]}{\log T } \geq \frac{2}{\Delta_i^2} .
 $$
Therefore, any algorithm that is asymptotically optimal ensures that for all suboptimal arms $i>1$, 
 $$
 \lim _{T \rightarrow \infty} \frac{\E[N_i(T)]}{\log T } = \frac{2}{\Delta_i^2} .
 $$
\end{remark}

\section{Comparison with the \texorpdfstring{$(K+1)$}{(K+1)}-Armed Bandit Reduction}
\label{appendix_comparison_k_plus_1}

A natural question arises regarding the formulation of our abstention model: can the problem be simply reduced to a standard multi-armed bandit setting with $K+1$ arms? In such a reduction, the additional $(K+1)$-st arm would represent the abstention option, yielding a deterministic reward $c$ (in the fixed-reward setting) or a fixed regret (in the fixed-regret setting).

While such a reduction might appear intuitive and indeed may suffice for achieving coarse worst-case (minimax) guarantees of $O(\sqrt{KT})$, it fundamentally fails to capture the strategic advantages of our framework. Specifically, the reduction cannot achieve the same instance-dependent asymptotic optimality. In the following, we discuss the essential differences between our model and the $(K+1)$-armed reduction.

\paragraph{Structural Difference: Decoupling Learning from Reward.}
The most critical distinction lies in the mechanism of information acquisition. In a standard $(K+1)$-armed bandit model, selecting the deterministic $(K+1)$-th arm (the ``safe'' option) fixes the instantaneous outcome but reveals \emph{no information} regarding the distributions of the original $K$ stochastic arms. To learn about a specific arm $i \in [K]$, the agent is forced to pull it and fully accept its stochastic outcome.

In contrast, our abstention framework allows the agent to  \emph{decouple} the observation of an outcome from the acceptance of its reward. When the agent exercises the abstention option, they secure a guaranteed result (equivalent to the deterministic arm) but, crucially, \emph{simultaneously} observe the stochastic sample $X_t$ from the selected arm $A_t$. This structure enables what can be termed ``safe exploration'': the agent can continue to gather valuable statistics about potentially risky or unknown arms to refine their estimates, all while shielding themselves from the full cost of stochastic underperformance.

\paragraph{Performance Gap: Asymptotic Optimality.}
To illustrate how this structural difference translates into superior asymptotic performance, we show below that in both the fixed-reward and fixed-regret objectives, the $(K+1)$-armed reduction and our abstention model admit different instance-dependent asymptotic behavior.

\textit{Fixed Reward: }In a standard $(K+1)$-armed bandit reduction, to learn about a suboptimal arm $i$, the agent must select it. Consequently, the agent incurs a regret proportional to the full suboptimality gap $\Delta_i = \mu_1 - \mu_i$ for every exploratory sample. The asymptotic regret contribution from arm $i$ is thus driven by the factor $\Delta_i$.
However, in our fixed-reward abstention model, the agent can leverage the abstention option to mitigate this cost. Specifically, if the abstention reward $c$ satisfies $\mu_i < c < \mu_1$, the agent can pull arm $i$ while simultaneously choosing to abstain. In this scenario, the agent observes the outcome of arm $i$ (gaining information) but receives the fixed reward $c$. The resulting instantaneous regret is $\mu_1 - c$, which is strictly less than the full gap $\mu_1 - \mu_i$.

This mechanism facilitates a more cost-effective form of exploration. Consequently, our derived asymptotic regret bound scales with the term $\mu_1 \vee c - \mu_i \vee c$, rather than the larger gap $\Delta_i$ that arises from the (K+1)-armed reduction.  This confirms that a simple reduction to a $(K+1)$-armed bandit cannot replicate the superior asymptotic optimality achieved by our proposed model.

\textit{Fixed Regret: }A simple $(K+1)$-armed reduction also fails in the fixed-regret objective.
In that reduction, the $(K+1)$-st ``abstain'' arm incurs a deterministic regret $c$,
but it reveals \emph{no information} about the original $K$ stochastic arms.
Hence, to statistically verify that a suboptimal arm $i$ is worse than the best arm,
the learner must still pull arm $i$ and \emph{accept} its stochastic outcome, paying regret about $\Delta_i$ per  sample.
By the standard asymptotic lower bound in \citet{lai1985asymptotically}, the regret caused by pulling arm $i$ is on the order of $\frac{2\log T}{\Delta_i}$.
Therefore, the leading asymptotic term under the $(K+1)$-arm reduction is
$\sum_{i>1}\frac{2\log T}{\Delta_i}$,
which is \emph{independent of $c$}.
In contrast, in our abstention model the learner can pull arm $i$ while abstaining, still observing $X_t$ but capping the instantaneous regret by $c$.
Thus, exploration of arm $i$ costs $\Delta_i\wedge c$ per sample, yielding the strictly smaller scaling
$2\sum_{i>1}\frac{\Delta_i\wedge c}{\Delta_i^2}\log T$,
and in particular improves $\frac{2\log T}{\Delta_i}$ to $\frac{2c\log T}{\Delta_i^2}$ whenever $\Delta_i>c$.
This again shows the $(K\!+\!1)$-armed reduction cannot replicate the fixed-regret asymptotic achieved by our model.

\color{black}

\section{Auxiliary Lemmas}
\begin{lemma}[Bretagnolle--Huber inequality \citep{Tsybakov:1315296}] Let $\mathbb P$ and $\mathbb P'$ be two probability distributions on the same measurable space $(\Omega, \mathcal{F})$. For any event $A \in \mathcal{F}$ and its complement $A^c = \Omega \setminus A$, the following inequality holds:
$$\mathbb P(A)+\mathbb P' (A^c ) \ge \frac{1}{2} \exp (-\mathrm{KL}(\mathbb P, \mathbb P')),$$
where $\mathrm{KL}(\mathbb P, \mathbb P')$ denotes the Kullback--Leibler (KL) divergence between $\mathbb P$ and $\mathbb P'$.
\label{lemma_BH}
\end{lemma}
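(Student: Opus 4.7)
The plan is to prove the Bretagnolle--Huber inequality by sandwiching the quantity $\mathbb P(A) + \mathbb P'(A^c)$ below by an event-free functional of the two measures, and then bounding this functional through the Hellinger affinity together with a single application of Jensen's inequality. The argument splits naturally into a ``statistical-distance'' reduction and an ``information-theoretic'' lower bound.

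First, I would dispose of the trivial case in which $\mathbb P$ is not absolutely continuous with respect to $\mathbb P'$, so that $\mathrm{KL}(\mathbb P, \mathbb P') = +\infty$ and the right-hand side is $0$. Otherwise, introducing a common dominating measure $\nu$ (for instance $\nu = \mathbb P + \mathbb P'$) with densities $p := d\mathbb P/d\nu$ and $p' := d\mathbb P'/d\nu$, my first step is the elementary observation
$$
\mathbb P(A) + \mathbb P'(A^c) = \int_A p\, d\nu + \int_{A^c} p'\, d\nu \;\ge\; \int \min(p, p')\, d\nu,
$$
which removes the event $A$ and lower bounds the left-hand side by an event-free quantity.

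Next, I would relate this lower bound to the Hellinger affinity $\rho := \int \sqrt{p\, p'}\, d\nu$ in two sub-steps. Using the identity $p\, p' = \min(p, p') \cdot \max(p, p')$ together with $\max(p, p') \le p + p'$ and a single application of Cauchy--Schwarz, I would obtain $\rho^2 \le 2 \int \min(p, p')\, d\nu$. Then, writing $\mathrm{KL}(\mathbb P, \mathbb P') = -2 \int p \log \sqrt{p'/p}\, d\nu$ and applying Jensen's inequality to the concave function $\log$ against the probability measure $p\, d\nu$, I would get $\log \rho \ge -\mathrm{KL}(\mathbb P, \mathbb P')/2$, i.e., $\rho \ge \exp(-\mathrm{KL}(\mathbb P, \mathbb P')/2)$. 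Chaining these bounds with the previous display yields $\mathbb P(A) + \mathbb P'(A^c) \ge \rho^2/2 \ge \tfrac{1}{2} \exp(-\mathrm{KL}(\mathbb P, \mathbb P'))$, as required.

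The main (though modest) obstacle is the measure-theoretic handling of points where $p$ or $p'$ vanishes when manipulating $\log(p'/p)$; the reduction to the absolutely continuous case and the $\mathbb P$-negligibility of $\{p = 0\}$ make this routine. A secondary conceptual point worth noting is why one should pass through the Hellinger affinity rather than Pinsker's inequality: Pinsker would only yield $\mathbb P(A) + \mathbb P'(A^c) \ge 1 - \sqrt{\mathrm{KL}/2}$, which becomes vacuous once $\mathrm{KL} > 2$ and is therefore useless in precisely the large-$\mathrm{KL}$ regime where the Bretagnolle--Huber inequality is deployed in bandit lower-bound arguments. The Hellinger route produces the desired exponential factor, which remains non-trivial for all finite values of $\mathrm{KL}$.
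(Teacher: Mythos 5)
The paper does not prove this lemma; it is imported directly from the cited reference (Tsybakov), so there is no in-paper argument to compare against. Your proof is correct and is essentially the standard one from that source and from the bandit literature: the event-free reduction $\mathbb P(A)+\mathbb P'(A^c)\ge\int\min(p,p')\,d\nu$, the Cauchy--Schwarz step giving $\bigl(\int\sqrt{pp'}\,d\nu\bigr)^2\le\int\min(p,p')\,d\nu\cdot\int\max(p,p')\,d\nu\le 2\int\min(p,p')\,d\nu$, and the Jensen bound $\int\sqrt{pp'}\,d\nu\ge\exp(-\mathrm{KL}(\mathbb P,\mathbb P')/2)$ all check out, as does the dispatch of the case where $\mathbb P\not\ll\mathbb P'$ and the right-hand side vanishes.
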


\vspace{0pt}

\begin{lemma}[Divergence decomposition lemma]
\label{lemma_divergence_decomposition}
Consider both the fixed-regret setting and the fixed-reward setting.  Fix an arbitrary policy $\pi$. Let $\nu= (\mathbb{P}_1, \ldots, \mathbb{P}_K)$ represent the reward distributions associated with one bandit instance, and let $\nu'=(\mathbb{P}_1', \ldots, \mathbb{P}_K')$ represent the reward distributions associated with another bandit instance.  Define $\mathbb P_{\nu,c}$  as the probability distribution of the sequence $(A_1,B_1,X_1,\ldots,A_T,B_T, X_T)$ induced by the algorithm $\pi$  under the abstention regret $c$ in the fixed-regret setting (or the abstention reward $c$ in the fixed-reward setting) for the bandit instance~$\nu$. Similarly, let $\mathbb{P}_{\nu',c}$ denote the same for the bandit instance $\nu'$. Then the KL divergence between $\mathbb P_{\nu,c}$ and $\mathbb P_{\nu',c}$ can be decomposed as:
$$
\mathrm{KL}\left(\mathbb P_{\nu,c}, \mathbb P_{\nu',c}\right) = \sum_{i\in[K]} \E_{\nu,c} [N_i(T)] \mathrm{KL}\left(\mathbb P_i, \mathbb P_i'\right).
$$
\end{lemma}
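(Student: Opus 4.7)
The plan is to adapt the standard divergence-decomposition argument for bandits (as in Lemma~15.1 of \citet{lattimore2020bandit}) to our augmented interaction protocol, and the key observation I would rely on is that the extra abstention bit $B_t$ is purely policy-generated and does not influence the conditional law of $X_t$ given $A_t$. Concretely, I would first write down the joint density of the full trajectory $(A_1,B_1,X_1,\ldots,A_T,B_T,X_T)$ under $\mathbb P_{\nu,c}$ via the chain rule as
\begin{equation*}
p_{\nu,c}(a_{1:T},b_{1:T},x_{1:T})=\prod_{t=1}^{T}\pi_t(a_t,b_t\mid h_{t-1})\,p_{a_t}(x_t),
\end{equation*}
where $h_{t-1}$ denotes the past and $\pi_t$ is the (possibly randomized) policy kernel selecting $(A_t,B_t)$ from $\mathcal F_{t-1}$; under $\mathbb P_{\nu',c}$ the same factorization holds with $p_{a_t}'$ replacing $p_{a_t}$. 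The crucial point is that the $\pi_t$ factors are identical under $\nu$ and $\nu'$ (the policy has no access to the unknown means, only to the history), and the observation density $p_{a_t}$ (resp.\ $p_{a_t}'$) depends only on the chosen arm, not on $B_t$ or $c$.

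Next, I would form the log-likelihood ratio and note that all $\pi_t$-factors cancel, leaving a clean telescoping expression
\begin{equation*}
\log\frac{d\mathbb P_{\nu,c}}{d\mathbb P_{\nu',c}}=\sum_{t=1}^{T}\log\frac{p_{A_t}(X_t)}{p_{A_t}'(X_t)}.
\end{equation*}
Taking expectation under $\mathbb P_{\nu,c}$, applying the tower property with the conditioning $\sigma$-algebra $\mathcal F_{t-1}\vee\sigma(A_t,B_t)$, and using that $X_t\mid(\mathcal F_{t-1},A_t,B_t)$ is distributed as $\mathbb P_{A_t}$, the inner conditional expectation collapses to $\mathrm{KL}(\mathbb P_{A_t},\mathbb P_{A_t}')$. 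Hence
\begin{equation*}
\mathrm{KL}(\mathbb P_{\nu,c},\mathbb P_{\nu',c})=\sum_{t=1}^{T}\E_{\nu,c}\bigl[\mathrm{KL}(\mathbb P_{A_t},\mathbb P_{A_t}')\bigr].
\end{equation*}

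Finally, I would reindex the sum by arms: writing $\mathrm{KL}(\mathbb P_{A_t},\mathbb P_{A_t}')=\sum_{i\in[K]}\mathbbm 1\{A_t=i\}\,\mathrm{KL}(\mathbb P_i,\mathbb P_i')$, interchanging the two finite sums and recognizing $\sum_{t=1}^T\mathbbm 1\{A_t=i\}=N_i(T)$ yields the stated identity. I do not anticipate a genuine obstacle here; the only point that requires care (and which is the \emph{only} place the abstention model differs from the canonical derivation) is verifying that the addition of $B_t$ leaves the chain-rule factorization unchanged, i.e.\ that $B_t$ contributes a policy-only factor that cancels in the likelihood ratio and that $X_t$'s conditional law is unaffected by $B_t$; both properties follow directly from the measurability assumption $(A_t,B_t)\in\mathcal F_{t-1}$ and the model specification $X_t\mid A_t\sim\mathcal N(\mu_{A_t},1)$ given in Section~\ref{section_model}.
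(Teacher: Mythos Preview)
Your proposal is correct and follows essentially the same approach as the paper: the paper omits the proof entirely, stating only that it is similar to the standard divergence decomposition in the canonical model (citing \citet[Section 2.1]{garivier2019explore} and \citet[Lemma 15.1]{lattimore2020bandit}), which is precisely the argument you lay out. Your explicit verification that the abstention bit $B_t$ contributes only a policy-dependent factor that cancels in the likelihood ratio is exactly the point at which the abstention model could differ from the canonical derivation, and you handle it correctly.
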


The proof of Lemma~\ref{lemma_divergence_decomposition} is similar to the well-known proof of divergence decomposition in the canonical multi-armed bandit model (excluding abstention), and is therefore omitted. This proof can be located, for instance, in \citet[Section 2.1]{garivier2019explore} and \citet[Lemma 15.1]{lattimore2020bandit}.

\begin{lemma}[Hoeffding's inequality for sub-Gaussian random variables]
\label{lemma_hoeffding1}
Let $X_1, \ldots, X_n$ be independent $\sigma$-sub-Gaussian random variables with mean $\mu$. Then for any $\varepsilon \ge 0$,
$$
\mathbb{P}(\hat{\mu} \geq \mu+\varepsilon) \leq \exp \left(-\frac{n \varepsilon^2}{2 \sigma^2}\right) \quad \text { and } \quad \mathbb{P}(\hat{\mu} \leq \mu-\varepsilon) \leq \exp \left(-\frac{n \varepsilon^2}{2 \sigma^2}\right)
$$
where $\hat{\mu} :=\frac{1}{n} \sum_{i=1}^n X_i$. 
\end{lemma}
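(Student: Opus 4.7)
The plan is to prove this via the standard Chernoff/Cramér--Chernoff method combined with the moment generating function (MGF) characterization of sub-Gaussianity. Recall that a random variable $X$ with mean $\mu$ is $\sigma$-sub-Gaussian if its centered MGF satisfies $\mathbb{E}[\exp(\lambda(X-\mu))] \le \exp(\lambda^2 \sigma^2/2)$ for all $\lambda \in \mathbb{R}$. The core idea is to transform the tail probability into an expectation via Markov's inequality, exploit independence to factor the MGF of the sum, apply the sub-Gaussian MGF bound term-by-term, and then optimize the free parameter.

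First I would prove the upper tail. Fix $\varepsilon \ge 0$ and any $\lambda > 0$. By monotonicity of the exponential and Markov's inequality,
\begin{align*}
\mathbb{P}(\hat{\mu} \ge \mu + \varepsilon)
&= \mathbb{P}\!\left(\sum_{i=1}^n (X_i - \mu) \ge n\varepsilon\right) \\
&= \mathbb{P}\!\left(\exp\!\Big(\lambda \textstyle\sum_{i=1}^n (X_i - \mu)\Big) \ge e^{\lambda n \varepsilon}\right) \\
&\le e^{-\lambda n \varepsilon}\, \mathbb{E}\!\left[\exp\!\Big(\lambda \textstyle\sum_{i=1}^n (X_i - \mu)\Big)\right].
\end{align*}
Next, using independence of the $X_i$, the expectation factors as $\prod_{i=1}^n \mathbb{E}[\exp(\lambda(X_i - \mu))]$. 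Applying the $\sigma$-sub-Gaussian MGF bound to each factor, this product is at most $\exp(n \lambda^2 \sigma^2 / 2)$. Substituting back yields $\mathbb{P}(\hat{\mu} \ge \mu + \varepsilon) \le \exp(n \lambda^2 \sigma^2/2 - \lambda n \varepsilon)$ for every $\lambda > 0$.

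The final step is to minimize the exponent $f(\lambda) = n\lambda^2 \sigma^2/2 - \lambda n \varepsilon$ over $\lambda > 0$. Differentiating gives the optimizer $\lambda^\star = \varepsilon/\sigma^2 \ge 0$, and plugging in yields $f(\lambda^\star) = -n\varepsilon^2/(2\sigma^2)$, establishing the first inequality. For the lower tail, I would apply the identical argument to $-X_1, \ldots, -X_n$, noting that if $X_i$ is $\sigma$-sub-Gaussian with mean $\mu$ then $-X_i$ is $\sigma$-sub-Gaussian with mean $-\mu$ (its centered MGF coincides with that of $X_i - \mu$ after substituting $\lambda \mapsto -\lambda$, and the sub-Gaussian bound is symmetric in $\lambda$). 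This immediately gives $\mathbb{P}(\hat{\mu} \le \mu - \varepsilon) \le \exp(-n\varepsilon^2/(2\sigma^2))$.

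There is essentially no obstacle here: the result is a textbook consequence of the sub-Gaussian MGF bound. The only conceptual choices are (i) whether to take the MGF inequality as the definition of sub-Gaussianity (which I do), and (ii) handling the boundary case $\varepsilon = 0$, where both sides of the claimed inequality are trivially at most $1$. Thus the proof is short and self-contained, and the lemma can be invoked as a tool in the regret analyses elsewhere in the paper.
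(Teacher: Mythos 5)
Your proof is correct and is the standard Chernoff/Cramér--Chernoff argument; the paper states this lemma as a known auxiliary result without proof, so there is no alternative derivation to compare against. The optimization $\lambda^\star = \varepsilon/\sigma^2$, the factorization via independence, and the reduction of the lower tail to the upper tail by negation are all exactly as expected.
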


\vspace{0pt}

\begin{lemma}
\label{lemma_hoeffding2}
Let $\{X_i\}_{i\in \mathbb{N}}$ be a sequence of independent $\sigma$-sub-Gaussian random variables with mean $\mu$. Then for any $\varepsilon > 0$ and $N\in \mathbb{N}$, 
$$
\sum_{n=1}^N \mathbb{P}(\hat{\mu}_n \geq \mu+\varepsilon) \leq \frac{2 \sigma^2} {\varepsilon^2}  \quad \text { and } \quad \sum_{n=1}^N \mathbb{P}(\hat{\mu}_n \leq \mu-\varepsilon) \leq \frac{2 \sigma^2} {\varepsilon^2}
$$
where $\hat{\mu}_n :=\frac{1}{n} \sum_{i=1}^n X_i$. 
\end{lemma}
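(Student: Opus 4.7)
The plan is to apply Hoeffding's inequality (Lemma~\ref{lemma_hoeffding1}) term-by-term to each $\mathbb{P}(\hat{\mu}_n \geq \mu + \varepsilon)$, producing an exponentially decaying bound in $n$, and then sum the resulting geometric series. The key observation is that the worst term $n=1$ contributes a tail $\exp(-\varepsilon^2/(2\sigma^2))$, but because the tails decay geometrically in $n$, the infinite sum is controlled by a single geometric factor, which can then be upper bounded via the elementary inequality $e^x \geq 1 + x$.

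Concretely, first I would set $\lambda := \varepsilon^2/(2\sigma^2)$ and invoke Lemma~\ref{lemma_hoeffding1} to obtain $\mathbb{P}(\hat{\mu}_n \geq \mu + \varepsilon) \leq e^{-n\lambda}$ for every $n \geq 1$. Next I would extend the finite sum to an infinite geometric series, yielding
\[
\sum_{n=1}^N \mathbb{P}(\hat{\mu}_n \geq \mu + \varepsilon) \;\leq\; \sum_{n=1}^\infty e^{-n\lambda} \;=\; \frac{e^{-\lambda}}{1 - e^{-\lambda}} \;=\; \frac{1}{e^{\lambda} - 1}.
\]
Finally, applying $e^{\lambda} \geq 1 + \lambda$ (valid for all $\lambda \in \mathbb{R}$) gives $e^{\lambda} - 1 \geq \lambda$, so the sum is at most $1/\lambda = 2\sigma^2/\varepsilon^2$, as claimed. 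The lower-tail bound follows by exactly the same argument applied to the second inequality in Lemma~\ref{lemma_hoeffding1}, or equivalently by replacing $X_i$ with $-X_i$, which preserves sub-Gaussianity with the same parameter $\sigma$.

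There is no real obstacle here: the proof is a straightforward three-line calculation once Hoeffding's inequality is in hand. The only mildly delicate point is that the bound must be uniform in $N$, which is handled automatically by passing to the infinite sum as a monotone upper bound. No additional properties of the $X_i$ beyond sub-Gaussianity are used.
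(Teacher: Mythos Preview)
Your proposal is correct and essentially identical to the paper's own proof: the paper also applies Lemma~\ref{lemma_hoeffding1} term-by-term, bounds the finite sum by the geometric series $\sum_{n\ge 1} e^{-n\varepsilon^2/(2\sigma^2)} = 1/(e^{\varepsilon^2/(2\sigma^2)}-1)$, and then invokes $e^x - 1 \ge x$ to obtain $2\sigma^2/\varepsilon^2$, handling the lower tail by symmetry.
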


\begin{proof}
By symmetry, it suffices to prove the first part. According to Lemma~\ref{lemma_hoeffding1}, we have
\begin{align*}
    \sum_{n=1}^N \mathbb{P}(\hat{\mu}_n \geq \mu+\varepsilon) &\leq  \sum_{n=1}^N \exp \left(-\frac{n \varepsilon^2}{2 \sigma^2}\right) \\
    &\le \frac{\exp \left(-\frac{\varepsilon^2}{2 \sigma^2}\right)} {1- \exp \left(-\frac{ \varepsilon^2}{2 \sigma^2}\right)} \\
    &= \frac {1} {\exp \left(\frac{ \varepsilon^2}{2 \sigma^2}\right) -1 } \\
    &\le \frac{2 \sigma^2} {\varepsilon^2}
\end{align*}
where the last inequality follows from the fact that $e^x-1\ge x$ for any $x\ge 0$.
\end{proof}

\section{Analysis of the Fixed-Regret Setting}

\subsection{Upper Bounds}
\label{subsection_fixedregret_upperboundproof}
\begin{proof}[Proof of Theorem~\ref{theorem_fixedregret_upperbound}]
Due to the law of total expectation, we can decompose the regret $R^{\FRG}_{\mu,c}(T, \pi)$ as 
\begin{align}
R^{\FRG}_{\mu,c}(T, \pi) &=\mathbb E\left[\sum_{t=1}^T\big ((\mu_1-X_t)\cdot\mathbf 1\{B_t=0\}+c\cdot\mathbf 1\{B_t=1\}\big )\right] \notag \\
&=\mathbb E\left[\sum_{t=1}^T\big ((\mu_1-\mu_{A_t})\cdot\mathbf 1\{B_t=0\}+c\cdot\mathbf 1\{B_t=1\}\big )\right] \notag \\
&= c\cdot \E[N_1^{(1)}(T)] +\sum_{i>1}  \left( \Delta_i \cdot \E[N_i^{(0)}(T)]  + c\cdot \E[N_i^{(1)}(T)]\right). \label{equation_fixedregret_regretdecompose}
\end{align}

For any arm $i$ with $\Delta_i <c$ (including the best arm), it holds that
\begin{align}
&\phantom{\ = \ } \E[N_i^{(1)}(T)] \notag  \\ &=\E\left[\sum_{t=1}^T \mathbb{I}\left\{ A_t=i \text { and } B_t=1 \right\}\right] \notag \\
&\le \E\left[\sum_{t=1}^T \mathbb{I}\left\{ A_t=i \text { and } \sqrt{\frac K t} \ge c \right\}\right]  \notag \\
&\phantom{\ = \ } + \E\left[\sum_{t=K+1}^T \mathbb{I}\left\{ A_t=i \text { and } \max_{j\in [K]\setminus \{i\}} \left(\hat{\mu}_{j}(t-1)-\sqrt{\frac{6\log t+2\log(c\vee 1)}{N_{j}(t-1)}}\right) - \hat{\mu}_{i}(t-1) \ge c \right\}\right] \notag \\
&\le \E\left[\sum_{t=1}^T \mathbb{I}\left\{ A_t=i \text { and } \sqrt{\frac K t} \ge c  \right\}\right]    \notag \\
&\phantom{\ = \ } +\E\left[\sum_{t=K+1}^T  \mathbb{I}\left\{  A_t=i \text { and } \max_{j\in [K]} \left(\hat{\mu}_{j}(t-1)-\sqrt{\frac{6\log t+2\log(c\vee 1)}{N_{j}(t-1)}}\right) \ge \mu_1 \right\}\right]  \notag \\
&\phantom{\ = \ } +\E\left[\sum_{t=K+1}^T \mathbb{I}\left\{ A_t=i \text { and }  \mu_1-\hat{\mu}_i(t-1) \ge c \right\}\right] \label{equation_case1_three terms} 
\end{align}
where the last inequality arises from the observation that when $A_t = i$, 
\begin{align*}
    &\phantom{\ = \ }\left\{ \max_{j\in [K]\setminus \{i\}} \left(\hat{\mu}_{j}(t-1)-\sqrt{\frac{6\log t+2\log(c\vee 1)}{N_{j}(t-1)}}\right) - \hat{\mu}_{i}(t-1) \ge c \right\} \\
    &\subseteq \left\{\max_{j\in [K]\setminus \{i\}} \left(\hat{\mu}_{j}(t-1)-\sqrt{\frac{6\log t+2\log(c\vee 1)}{N_{j}(t-1)}}\right) \ge \mu_1 \right\}  \cup \left\{   \mu_1-\hat{\mu}_i(t-1) \ge c \right\} \\
    &\subseteq \left\{\max_{j\in [K]} \left(\hat{\mu}_{j}(t-1)-\sqrt{\frac{6\log t+2\log(c\vee 1)}{N_{j}(t-1)}}\right) \ge \mu_1 \right\} \cup \left\{  \mu_1-\hat{\mu}_i(t-1) \ge c \right\}.
\end{align*}

For convenience, for any $i\in[K]$ such that $\Delta_i <c$, we introduce three shorthand notations to represent the terms in Equation~\eqref{equation_case1_three terms}. These terms correspond to distinct conditions under which the algorithm chooses to abstain:
$$
\begin{cases}
    \vspace*{4pt}(\clubsuit)_i := \E\left[\sum_{t=1}^T \mathbb{I}\left\{ A_t=i \text { and } \sqrt{\frac K t} \ge c  \right\}\right] \\
    \vspace*{4pt}
    (\spadesuit)_i := \E\left[\sum_{t=K+1}^T  \mathbb{I}\left\{ A_t=i \text { and } \max_{j\in [K]} \left(\hat{\mu}_{j}(t-1)-\sqrt{\frac{6\log t+2\log(c\vee 1)}{N_{j}(t-1)}}\right) \ge \mu_1 \right\}\right] \\
    (\blacksquare)_i := \E\left[\sum_{t=K+1}^T \mathbb{I}\left\{ A_t=i \text { and }  \mu_1-\hat{\mu}_i(t-1) \ge c \right\}\right] .
\end{cases}
$$

To clarify the intuition behind these terms:
\begin{itemize}
    \item $(\clubsuit)_i$ represents the gap-independent abstention. This captures the case where the algorithm abstains purely because the time step $t$ is small, independent of the empirical estimates.
    \item $(\spadesuit)_i$ represents abstentions caused by the {overestimation of LCBs}. This captures the low-probability event where the lower confidence bound of any arm $j$ exceeds the true mean of the optimal arm $\mu_1$.
    \item $(\blacksquare)_i$ represents abstentions caused by the {underestimation of the selected arm}. This captures the event where the empirical mean of the chosen arm $i$ deviates significantly below its true mean $\mu_i$ (specifically such that $\mu_1 - \hat{\mu}_i(t-1) \ge c$), triggering the gap-dependent abstention criterion.
\end{itemize}

We will analyze $(\clubsuit)_i $ and $(\spadesuit)_i$ later to establish the two forms of upper bounds.

On the other hand, for the term $(\blacksquare)_i$, which relates strictly to the concentration of arm $i$, we have 
\color{black}
\begin{align}
    (\blacksquare)_i &\le  \E\left[\sum_{t=K+1}^T \sum_{s=1}^{T-1} \mathbb{I}\left\{ A_t=i \text { and } \hat{\mu}_{is} \le \mu_i  - (c-\Delta_i ) \text { and }  N_i(t-1)=s \right\}\right] \notag \\
    &\le \E\left[\sum_{s=1}^{T-1} \mathbb{I}\left\{ 
  \hat{\mu}_{is}\le \mu_i  - (c-\Delta_i ) \right\}\right ]  \label{equation_blacksquare1} \\
  &\le  \frac 2 {(c-\Delta_i)^2}. \label{equation_blacksquare2}
\end{align}

Line~\eqref{equation_blacksquare1} follows from the fact that for all $s\in[T-1]$, 
$$
\sum_{t=K+1}^T \mathbb{I} \left\{ A_t=i \text { and } N_i(t-1)=s \right\} \le 1 .
$$

Line~\eqref{equation_blacksquare2} is due to Lemma~\ref{lemma_hoeffding2}.

For any arm $i$ with $\Delta_i >c$, since arm $1\in [K]\setminus \{i\}$, we have
\begin{align}
&\phantom{\ = \ }\E[N_i^{(0)}(T)]  \notag\\
&=\E\left[\sum_{t=1}^T \mathbb{I}\left\{ A_t=i \text { and } B_t=0 \right\}\right] \notag\\
&\le 1 + \E\left[\sum_{t=K+1}^T \mathbb{I}\left\{ A_t=i \text { and } \phantom {\max_{j\in [K]\setminus \{i\}} \left(\hat{\mu}_{j}(t-1)-\sqrt{\frac{6\log t+2\log(c\vee 1)}{N_{j}(t-1)}}\right) } \right. \right. \notag\\ 
& \phantom{1111111111}\left.\left.\max_{j\in [K]\setminus \{i\}} \left(\hat{\mu}_{j}(t-1)-\sqrt{\frac{6\log t+2\log(c\vee 1)}{N_{j}(t-1)}}\right) - \hat{\mu}_{i}(t-1) < c  \text { and } \sqrt{\frac K t} < c \right\}\right] \notag \\
&\le 1 + \E\left[\sum_{t=K+1}^T \mathbb{I}\left\{ A_t=i \text { and } \left(\hat{\mu}_{1}(t-1)-\sqrt{\frac{6\log t+2\log(c\vee 1)}{N_{1}(t-1)}}\right) - \hat{\mu}_{i}(t-1) < c 
\right\}\right]. \label{equation_case2}
\end{align}

\paragraph{Minimax upper bound. } If $c\le \sqrt{\frac K  T}$, then the abstention option is always invoked because $\sqrt{\frac K  t} \ge \sqrt{ \frac K  T}\ge c$ for all $t\in[T]$. Consequently, it is straightforward to deduce that
$$
{R^{\FRG}_{\mu,c}(T)} \le cT.
$$

Next, consider the case that $c > \sqrt{\frac K  T}$. Compared with the canonical multi-armed bandit model, at a single time step, the agent in our abstention model incurs a greater (expected) regret only if an arm $i$ with $\Delta_i <c$ is pulled and the abstention option is selected.  Thus, we have 
\begin{equation}
{R^{\FRG}_{\mu,c}(T)} \le R^{\CA}_{\mu}(T) + \sum_{i:\Delta_i <c} (c-\Delta_i)  \cdot \E[N_i^{(1)}(T)]. 
\label{equation_fixedregret_minimax0}
\end{equation}

Due to the minimax optimality of Less-Exploring Thompson Sampling \citep{jin23b}, there exists a universal constant $\alpha_1 > 0$ such that
\begin{equation}
R^{\CA}_{\mu}(T) \le \alpha_1 \left(\sqrt{KT}+\sum_{i>1}{\Delta_i}\right).
\label{equation_fixedregret_minimax1}
\end{equation}

Recall the upper bound of $\E[N_i^{(1)}(T)]$ for arm $i$ with $\Delta_i <c$, as given in~\eqref{equation_case1_three terms}. Subsequently, we will establish bounds for the following terms:
$$\sum_{i:\Delta_i <c} (c-\Delta_i)  \cdot (\clubsuit)_i \ , \quad \sum_{i:\Delta_i <c} (c-\Delta_i)  \cdot (\spadesuit)_i \quad \text{and} \quad \sum_{i:\Delta_i <c} (c-\Delta_i)  \cdot ((\blacksquare)_i.$$ 

For the first term, we have
\begin{align}
    \sum_{i:\Delta_i <c} (c-\Delta_i)  \cdot (\clubsuit)_i &\le   \sum_{i:\Delta_i <c} c  \cdot \E\left[\sum_{t=1}^T \mathbb{I}\left\{ A_t=i \text { and } \sqrt{\frac K t} \ge c  \right\}\right] \notag\\
    &\le c\cdot \E\left[\sum_{t=1}^T \mathbb{I}\left\{ \sqrt{\frac K t} \ge c  \right\}\right] \notag\\
    &\le \frac K c \notag \\
    &\le \sqrt{KT}. \label{equation_fixedregret_minimax2}
\end{align}

For the second term, we can obtain 
\begin{align*}
    \sum_{i:\Delta_i <c} (c-\Delta_i)  \cdot (\spadesuit)_i &\le  \sum_{i:\Delta_i <c} c  \cdot (\spadesuit)_i \\
    &\le  c  \cdot \E\left[\sum_{t=K+1}^T  \mathbb{I}\left\{ \max_{j\in [K]} \left(\hat{\mu}_{j}(t-1)-\sqrt{\frac{6\log t+2\log(c\vee 1)}{N_{j}(t-1)}}\right) \ge \mu_1 \right\}\right]  \\
    &\le c\cdot \sum_{j\in[K]}\E\left[\sum_{t=K+1}^T  \mathbb{I}\left\{ \hat{\mu}_{j}(t-1)-\sqrt{\frac{6\log t+2\log(c\vee 1)}{N_{j}(t-1)}} \ge \mu_1 \right\}\right].
\end{align*}

For all $j\in[K]$, by a union bound over all possible values of $N_{j}(t-1)$ and Lemma~\ref{lemma_hoeffding1}, we have 
\begin{equation}
\begin{aligned}
&\phantom{\ = \ }\E\left[\sum_{t=K+1}^T  \mathbb{I}\left\{ \hat{\mu}_{j}(t-1)-\sqrt{\frac{6\log t+2\log(c\vee 1)}{N_{j}(t-1)}} \ge \mu_1 \right\}\right] \\
&\le \sum_{t=K+1}^T \sum_{s=1}^{t-1} \mathbb P \left (\hat{\mu}_{js}-\sqrt{\frac{6\log t+2\log(c\vee 1)}{s}} \ge \mu_1 \right) \\
&\le \sum_{t=K+1}^T \sum_{s=1}^{t-1} \mathbb P \left (\hat{\mu}_{js}-\sqrt{\frac{6\log t+2\log(c\vee 1)}{s}} \ge \mu_j \right) \\
&\le \sum_{t=K+1}^T \sum_{s=1}^{t-1} \frac 1 {t^3(c\vee 1)} \\
&\le \sum_{t=K+1}^T  \frac 1 {t^2c} \\
&\le \frac 1 {Kc}
\end{aligned}
\label{equation_fixedregret_result1}
\end{equation}
where the last inequality follows from the numerical fact that
$$
\sum_{t=K+1}^T  \frac 1 {t^2} \le \int_{x=K}^{\infty} \frac 1 {x^2}\, \mathrm{d} x = \frac 1 K.
$$

Thus, we can bound the second term as 
\begin{align}
    \sum_{i:\Delta_i <c} (c-\Delta_i)  \cdot (\spadesuit)_i &\le c\cdot \sum_{j\in[K]}  \frac 1 {Kc} = 1 .
\label{equation_fixedregret_minimax3}
\end{align}

For the third term, in addition to the upper bound of $(\blacksquare)_i$ in \eqref{equation_blacksquare2}, we can identify another straightforward upper bound as follows:
$$
(\blacksquare)_i \le \E\left[\sum_{t=K+1}^T \mathbb{I}\left\{ A_t=i \right\}\right]  \le \E[N_i(T)].
$$

By applying these two bounds separately for distinct scenarios,, we have
\begin{align}
    \sum_{i:\Delta_i <c} (c-\Delta_i)  \cdot (\blacksquare)_i &\le \sum_{i:0<c-\Delta_i <\sqrt{\frac K T}}  (c-\Delta_i )\cdot (\blacksquare)_i + \sum_{i:c-\Delta_i\ge \sqrt{\frac K T}}  (c-\Delta_i )\cdot (\blacksquare)_i \notag \\
    &\le \sqrt{\frac K T} \sum_{i:0<c-\Delta_i <\sqrt{\frac K T}} \E[N_i(T)]  + \sum_{i:c-\Delta_i\ge \sqrt{\frac K T}}   \frac 2 {c-\Delta_i} \notag\\
    &\le \sqrt{\frac K T} \cdot T +\sum_{i:c-\Delta_i\ge \sqrt{\frac K T}}   2 \sqrt{\frac T K} \notag\\
    &\le \sqrt{KT} + K\cdot   2 \sqrt{\frac T K} \notag\\
    &= 3\sqrt{KT}. \label{equation_fixedregret_minimax4}
\end{align}

By plugging Inequalities~\eqref{equation_fixedregret_minimax2}, \eqref{equation_fixedregret_minimax3} and \eqref{equation_fixedregret_minimax4} into \eqref{equation_case1_three terms}, we have 
\begin{align*}
    \sum_{i:\Delta_i <c} (c-\Delta_i)  \cdot \E[N_i^{(1)}(T)] \le 1 + 4\sqrt{KT}.
\end{align*}

Together with \eqref{equation_fixedregret_minimax0} and \eqref{equation_fixedregret_minimax1}, we can conclude that 
\begin{equation*}
{R^{\FRG}_{\mu,c}(T)} \le (\alpha_1 + 4) \sqrt{KT}+\alpha_1 \sum_{i>1}{\Delta_i} + 1.
\end{equation*}

Therefore, there must exist a universal constant $\alpha > 0$ such that
$$
{R^{\FRG}_{\mu,c}(T)}  \le  \alpha  \left(\sqrt{KT}+\sum_{i>1}{\Delta_i}\right).
$$

This completes the proof of the minimax upper bound.

\paragraph{Asymptotic upper bound. } Consider any arm $i$ with $\Delta_i <c$ (including the best arm). In the following, we will further elucidate the upper bound of $\E[N_i^{(1)}(T)]$ as given in~\eqref{equation_case1_three terms} within the asymptotic domain.

For the first term $(\clubsuit)_i$ in~\eqref{equation_case1_three terms}, we have 
$$
(\clubsuit)_i \le \E\left[\sum_{t=1}^T \mathbb{I}\left\{\sqrt{\frac K t} \ge c  \right\}\right] \le \frac K {c^2}.
$$

For the second term $(\spadesuit)_i$, using Inequality~\eqref{equation_fixedregret_result1}, we can get
\begin{align*}
  (\spadesuit)_i &\le  \E\left[\sum_{t=K+1}^T  \mathbb{I}\left\{  \max_{j\in [K]} \left(\hat{\mu}_{j}(t-1)-\sqrt{\frac{6\log t+2\log(c\vee 1)}{N_{j}(t-1)}}\right) \ge \mu_1 \right\}\right]  \\
  &\le \sum_{j\in[K]}\E\left[\sum_{t=K+1}^T  \mathbb{I}\left\{ \hat{\mu}_{j}(t-1)-\sqrt{\frac{6\log t+2\log(c\vee 1)}{N_{j}(t-1)}} \ge \mu_1 \right\}\right] \\
  &\le \frac 1 c.
\end{align*}

Incorporating \eqref{equation_blacksquare2}, we obtain 
\begin{align*}
\E[N_i^{(1)}(T)] \le \frac K {c^2} + \frac 1 c + \frac 2 {(c-\Delta_i)^2} = o(\log T).
\end{align*}

Consider any arm $i$ with $\Delta_i >c$. We will further explore the  upper bound of $\E[N_i^{(0)}(T)]$ in~\eqref{equation_case2}.

According to the fact that 
\begin{align*}
    &\phantom{\ = \ }\left\{ A_t=i \text { and } \left(\hat{\mu}_{1}(t-1)-\sqrt{\frac{6\log t+2\log(c\vee 1)}{N_{1}(t-1)}}\right) - \hat{\mu}_{i}(t-1) < c \right\} \\
    &\subseteq \left\{ \hat{\mu}_{1}(t-1)+\sqrt{\frac{6\log t+2\log(c\vee 1)}{N_{1}(t-1)}} \le \mu_1 \right\}  \\
    &\phantom{\ = \ } \cup \left\{ A_t=i \text { and } \left({\mu}_{1}-2\sqrt{\frac{6\log t+2\log(c\vee 1)}{N_{1}(t-1)}}\right) - \hat{\mu}_{i}(t-1) \le  c \right\},
\end{align*}
we have 
\begin{align}
&\phantom{\ = \ }\E[N_i^{(0)}(T)] \notag \\
&\le 1 + \E\left[\sum_{t=K+1}^T \mathbb{I}\left\{ A_t=i \text { and } \left(\hat{\mu}_{1}(t-1)-\sqrt{\frac{6\log t+2\log(c\vee 1)}{N_{1}(t-1)}}\right) - \hat{\mu}_{i}(t-1) < c 
\right\}\right] \notag \\
&\le 1 + \E\left[\sum_{t=K+1}^T \mathbb{I}\left\{ \hat{\mu}_{1}(t-1)+\sqrt{\frac{6\log t+2\log(c\vee 1)}{N_{1}(t-1)}}  \le \mu_1 
\right\}\right] \notag \\
&\phantom{\ = \ }
+ \underbrace{\E\left[\sum_{t=K+1}^T \mathbb{I}\left\{ A_t=i \text { and } \left({\mu}_{1}-2\sqrt{\frac{6\log t+2\log(c\vee 1)}{N_{1}(t-1)}}\right) - \hat{\mu}_{i}(t-1) < c 
\right\}\right]}_{(\bigstar)_i}. \label{equation_fixedregret_asymptotic1}
\end{align}

Following a similar argument as in~\eqref{equation_fixedregret_result1}, we can derive 
\begin{equation}
 \E\left[\sum_{t=K+1}^T \mathbb{I}\left\{ \hat{\mu}_{1}(t-1)+\sqrt{\frac{6\log t+2\log(c\vee 1)}{N_{1}(t-1)}}  \le \mu_1 
\right\}\right] \le \frac 1 {Kc}. \label{equation_fixedregret_asymptotic2}
\end{equation}

Now we focus on the last term in~\eqref{equation_fixedregret_asymptotic1}, which is denoted by  $(\bigstar)_i$.

For any fixed $b\in(0,1)$, there must exist a constant $t_1(b, \mu, c)\ge K+1$ such that for all $t\ge t_1$,
$$
2\sqrt{\frac{6\log t+2\log(c\vee 1)}{(t-1)^b}} \le \frac {\Delta_i -c} 2  .
$$

Notice that for all $t\ge t_1$,
\begin{align*}
&\phantom{\ = \ } \left\{ A_t=i \text { and } \left({\mu}_{1}-2\sqrt{\frac{6\log t+2\log(c\vee 1)}{N_{1}(t-1)}}\right) - \hat{\mu}_{i}(t-1) < c 
\right\} \\ 
&\subseteq \left\{{N_{1}(t-1)} \le (t-1)^b\right\}  \\
&\phantom{\ = \ } \cup   \left\{ A_t=i \text { and } \left({\mu}_{1}-2\sqrt{\frac{6\log t+2\log(c\vee 1)}{N_{1}(t-1)}}\right) - \hat{\mu}_{i}(t-1) < c  \text { and } {N_{1}(t-1)} > (t-1)^b \right\} \\
&\subseteq \left\{{N_{1}(t-1)} \le (t-1)^b\right\} \cup   \left\{ A_t=i \text { and } \hat{\mu}_{i}(t-1) \ge \mu_1 + \frac {\Delta_i -c} 2  \right\}.
\end{align*}

From the above, we deduce that
\begin{align*}
(\bigstar)_i&\le t_1 + \sum_{t=t_1}^T \mathbb P \left({N_{1}(t-1)} \le (t-1)^b \right) + \E\left[\sum_{t=t_1}^T \mathbb{I}\left\{ A_t=i \text { and } \hat{\mu}_{i}(t-1) \ge \mu_1 + \frac {\Delta_i -c} 2 
\right\}\right].
\end{align*}

Using the approach similar to the one used to bound $(\blacksquare)_i$ in \eqref{equation_blacksquare2}, we have 
\begin{align*}
    \E\left[\sum_{t=t_1}^T \mathbb{I}\left\{ A_t=i \text { and } \hat{\mu}_{i}(t-1) \ge \mu_1 + \frac {\Delta_i -c} 2 
\right\}\right] \le \frac 8 {(\Delta_i - c)^2}.
\end{align*}

By applying Lemma~\ref{lemma_N1t}, we can get
$$
(\bigstar)_i \le t_1(b, \mu, c) + \beta(b, \mu, K) + \frac 8 {(\Delta_i - c)^2}
$$
where the term $\beta(b, \mu, K)$ is subsequently defined in Lemma~\ref{lemma_N1t}.

Substituting the above inequality and \eqref{equation_fixedregret_asymptotic2} into \eqref{equation_fixedregret_asymptotic1}, we arrive at
\begin{align*}
    \E[N_i^{(0)}(T)] &\le 1 + \frac 1 {Kc} + t_1(b, \mu, c) + \beta(b, \mu, K) + \frac 8 {(\Delta_i - c)^2} \\
    &= o(\log T).
\end{align*}

Due to the asymptotic optimality of Less-Exploring Thompson Sampling \citep{jin23b}, for any suboptimal arm $i$, we have
$$
\E[N_i(T)] \le \frac {2 \log T} {\Delta_i^2} + o(\log T).
$$

Finally, based on the regret decomposition in \eqref{equation_fixedregret_regretdecompose}, we can conclude 
\begin{align*}
R^{\FRG}_{\mu,c}(T) &= c\cdot \E[N_1^{(1)}(T)] +\sum_{i>1}  \left( \Delta_i \cdot \E[N_i^{(0)}(T)]  + c\cdot \E[N_i^{(1)}(T)]\right)\\
&= c\cdot \E[N_1^{(1)}(T)] + \sum_{i>1} ({\Delta_i\wedge c }) \cdot \E[N_i(T)] \\
&\phantom{\ = \ } + \sum_{i>1}  \left( (\Delta_i-{\Delta_i\wedge c }) \cdot \E[N_i^{(0)}(T)]  + (c-{\Delta_i\wedge c })\cdot \E[N_i^{(1)}(T)]\right) \\
&= \sum_{i>1} ({\Delta_i\wedge c }) \cdot \E[N_i(T)] + \sum_{i:\Delta_i < c} (c-{\Delta_i })\cdot \E[N_i^{(1)}(T)]  + \sum_{i:\Delta_i > c}(\Delta_i-{c })\cdot \E[N_i^{(0)}(T)]\\
&\le ( 2 \log T) \sum_{i>1} \frac{\Delta_i\wedge c }{\Delta_i^2}  + o(\log T)
\end{align*}
where the second equality is due to the fact that $\E[N_i^{(0)}(T)] + \E[N_i^{(1)}(T)] = \E[N_i(T)]$ for all arms $i\in[K]$.

Therefore, it holds that
$$
\limsup_{T\to \infty} \frac {R^{\FRG}_{\mu,c}(T)} {\log T} \le  2 \sum_{i>1} \frac{\Delta_i\wedge c }{\Delta_i^2} 
$$
as desired.
\end{proof}

\begin{lemma}
\label{lemma_N1t}
Consider Algorithm~\ref{algo1}. For any $b\in(0,1)$, there exists a constant $\beta(b, \mu, K) $ such that
$$
\sum_{t=1}^{\infty} \mathbb P \left({N_{1}(t)} \le t^b \right) \le \beta(b, \mu, K). 
$$
\end{lemma}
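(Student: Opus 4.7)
The plan is to prove a stronger statement, namely that $\mathbb{P}(N_1(t) \leq t^b)$ decays super-polynomially fast in $t$, which gives summability. The argument follows the general strategy of Korda et al.~(2013) for Thompson Sampling, adapted to the two-branch structure of Less-Exploring Thompson Sampling used in Algorithm~\ref{algo1}.

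The central step is to establish that, conditional on the history $\mathcal{F}_{t-1}$ and on a favorable concentration event for the empirical means, the conditional probability $\mathbb{P}(A_t = 1 \mid \mathcal{F}_{t-1})$ is bounded below by a strictly positive constant $p = p(\mu, K) > 0$. To this end, I restrict to the event $\{a_1(t) = \theta_1(t)\}$, which has probability exactly $1/K$ by construction, and use Gaussian anti-concentration: the sample $\theta_1(t) \sim \mathcal{N}(\hat{\mu}_1(t-1), 1/N_1(t-1))$ exceeds any fixed threshold with at least some constant probability. Under a concentration event ensuring that $\hat{\mu}_i(t-1) \leq \mu_i + \varepsilon$ for all $i$ whose number of pulls exceeds some threshold (with $\varepsilon$ chosen much smaller than the gaps), the maximum of $a_i(t)$ over $i \neq 1$ stays in a bounded range, so $\mathbb{P}(\theta_1(t) > \max_{i \neq 1} a_i(t) \mid \mathcal{F}_{t-1})$ is bounded below by a constant.

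Building on this, I would introduce the concentration event $\mathcal{E}$ uniformly controlling $|\hat{\mu}_{is} - \mu_i|$ for all $i$ and $s \geq s_0$; by Hoeffding's inequality (Lemma~\ref{lemma_hoeffding1}) and summation over $s$, $\mathbb{P}(\mathcal{E}^c) = O(\exp(-\Omega(s_0)))$. Since the initialization step pulls each arm once, a short inductive argument combined with the lower bound from the key step shows that the hitting time $\tau := \inf\{t : N_1(t) \geq s_0\}$ satisfies $\mathbb{P}(\tau > t_0) \leq \exp(-\Omega(t_0))$ via a Chernoff bound on a delayed geometric process. After time $\tau$ and on $\mathcal{E}$, the increments $\mathbbm{1}\{A_{t+1} = 1\}$ stochastically dominate i.i.d.\ Bernoulli$(p)$ variables, so $N_1(t)$ stochastically dominates a $\mathrm{Binomial}(t - t_0, p)$ random variable. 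For $t$ large enough that $t^b \leq pt/2$, a Chernoff bound then yields $\mathbb{P}(N_1(t) \leq t^b \mid \mathcal{E}, \tau \leq t_0) \leq \exp(-\Omega(t))$. Combining with the bad-event probabilities and summing over $t$ gives a finite bound $\beta(b, \mu, K)$.

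The main obstacle will be the key step: extracting a uniform lower bound $p > 0$ on the conditional probability of selecting arm~$1$, because the posterior variance $1/N_1(t-1)$ shrinks as $N_1(t-1)$ grows and we must avoid circularity (the argument cannot presuppose that $N_1$ is already large). This is resolved by exploiting the per-arm independent randomization in Algorithm~\ref{algo1}: the $1/K$ probability of drawing $\theta_1(t)$ is decoupled from the value of $N_1(t-1)$, and Gaussian anti-concentration still provides a uniform lower bound on the event $\{\theta_1(t) > \text{const}\}$ even when the variance is small, as long as the mean $\hat{\mu}_1(t-1)$ is within a controlled neighborhood of $\mu_1$, which is precisely what the concentration event $\mathcal{E}$ guarantees.
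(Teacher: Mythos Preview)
Your proposal has a genuine gap at the ``key step'': the uniform per-step lower bound $\mathbb{P}(A_t=1\mid\mathcal{F}_{t-1})\ge p>0$ does not hold on the concentration event you define. Your event $\mathcal{E}$ only controls $\hat\mu_{is}$ for $s\ge s_0$; it says nothing about a suboptimal arm $i$ with $N_i(t-1)<s_0$. Such an arm can have $\hat\mu_i(t-1)$ (and hence $a_i(t)$) arbitrarily far above $\mu_1$, so $\max_{j\neq 1} a_j(t)$ is \emph{not} bounded on $\mathcal{E}$. Your anti-concentration claim then fails in exactly this regime: if $\theta_1(t)\sim\mathcal N(\hat\mu_1(t-1),1/N_1(t-1))$ with $\hat\mu_1(t-1)\approx\mu_1$ and the threshold $C=\max_{j\neq 1}a_j(t)$ sits strictly above $\mu_1$, then $\mathbb{P}(\theta_1(t)>C)=\Phi\big(\sqrt{N_1(t-1)}\,(\hat\mu_1(t-1)-C)\big)\to 0$ as $N_1(t-1)\to\infty$; there is no uniform positive lower bound. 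The same obstruction breaks your ``short inductive argument'' for the hitting time $\tau$, since that argument also invokes the key step. In short, the circularity you flag is not the real obstacle; controlling the \emph{suboptimal} arms when they have few pulls is, and your proposal does not address it.

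The paper's proof (following Korda et al.) sidesteps this by abandoning the per-step viewpoint in favor of an interval argument. If $N_1(t)\le t^b$, there must exist some inter-pull gap $\xi_j\ge t^{1-b}-1$; on the corresponding interval $\mathcal I_j$ every pull goes to a suboptimal arm, so after $O(K\log t/\Delta_{\min}^2)$ steps all suboptimal arms have accumulated enough samples that Lemma~\ref{lemma_lemma_N1t} (the analogue of Korda's Lemma~10) forces $a_i(s)\le\mu_i+\Delta_i/2<\mu_1$ for the remainder of the interval with high probability. One then uses that during this remainder the optimal arm's posterior sample $\theta_1(s)$ has fixed variance $1/j$ (arm~1 is not being pulled), so across many independent draws it exceeds $\mu_1-\Delta_{\min}/2$ at some step with overwhelming probability (Korda's Lemma~9), forcing a pull of arm~1 and contradicting the definition of $\mathcal I_j$. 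The interval structure is precisely what converts ``suboptimal arms might have few pulls'' into ``they must eventually have many,'' and your binomial-domination scheme has no substitute for this mechanism.
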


\begin{proof}
The proof of Lemma~\ref{lemma_N1t} closely follows that of Proposition~5 in \citet{korda2013thompson}, which was used to analyze the classical Thompson Sampling algorithm. In fact, the only difference between our arm sampling rule, which is built upon Less-Exploring Thompson Sampling \citep{jin23b}, and the classical Thompson Sampling is how the estimated reward $a_i(t)$ is constructed for each arm $i\in[K]$. Specifically, in our arm sampling rule, $a_i(t)$ is either drawn from the posterior distribution $\mathcal N (\hat{\mu}_i(t-1), 1/N_i(t-1))$ with probability $1/K$ or set to be the empirical mean $\hat{\mu}_i(t-1)$ otherwise. In classical Thompson Sampling, $a_i(t)$ is always drawn from the posterior distribution. Therefore, it suffices to verify the parts concerning the probability distributions of $a_i(t)$; these correspond to Lemmas~9 and 10 in the proof of Proposition~5 in \citet{korda2013thompson}. 

It is straightforward to see that Lemma~9 in \citet{korda2013thompson} is applicable to our algorithm. For Lemma~10 therein, its counterpart is demonstrated in Lemma~\ref{lemma_lemma_N1t} below.

After establishing the counterparts of Lemmas~9 and 10 in the proof of Proposition~5 in \citet{korda2013thompson}, we can extend the same analysis to our specific case. For the sake of completeness, we provide a proof sketch in the following. 

Let $\tau_j$ denote the time of the $j$-th pull of the optimal arm (i.e., arm $1$), with $\tau_0:=0$. Define $\xi_j:=\left(\tau_{j+1}-1\right)-\tau_j$ as the random variable measuring the number of time steps between the $j$-th and $(j+1)$-th pull of the optimal arm. With this setup, we can derive an upper bound for $\mathbb{P} \left({N_{1}(t)} \le t^b \right)$ as:
$$
\mathbb{P} \left({N_{1}(t)} \le t^b \right) \leq \mathbb{P} \left(\exists j \in\left\{0, . ., \left\lfloor t^b\right\rfloor \right\}: \xi_j \geq t^{1-b}-1 \right) \le \sum_{j=0}^{\left\lfloor t^b\right\rfloor} \mathbb{P}({\xi_j \geq t^{1-b}-1}).
$$

Consider the interval $\mathcal{I}_j:=\left\{\tau_j, \ldots, \tau_j+\left\lceil t^{1-b}-1\right\rceil\right\} $. If ${\xi_j \geq t^{1-b}-1}$, then no pull of the optimal arm occurs on  $\mathcal{I}_j$. 

The subsequent analysis aims to bound the probability that no pull of the optimal arm occurs within the interval $\mathcal{I}_j$. It relies on two key principles:
\begin{itemize}
    \item First, for a suboptimal arm, if it has been pulled a sufficient number of times, then, with high probability, its estimated reward (sample) cannot deviate significantly from its true mean.  This observation is quantitatively characterized in Lemma 10 of \citet{korda2013thompson}, corresponding to Lemma~\ref{lemma_lemma_N1t} in our paper.
    \item Second, for the optimal arm, the probability that its estimated reward (sample) deviates significantly below its true mean during a long subinterval of $\mathcal{I}_j$ is low. This observation is quantitatively characterized in Lemma 9 of \citet{korda2013thompson}, which directly applies to our case.
\end{itemize}
\end{proof}

\begin{lemma}
\label{lemma_lemma_N1t}
Consider Algorithm~\ref{algo1}. For all $t\in \mathbb N$, it holds that
$$
\mathbb P \left(\exists \, s \le t, \exists\,  i >1 : a_i(s) > \mu_i + \frac {\Delta_i} 2, N_i(s-1) > \frac {128\log t}{ \Delta_i^2} \right) \le \frac {K} {t^2}.
$$
\end{lemma}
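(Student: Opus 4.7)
The plan is to apply a union bound over the suboptimal arms $i > 1$ (accounting for the factor of $K$ in the stated bound) and to show that for each fixed $i$, the probability in question is at most $1/t^2$. For fixed $i$, the key structural observation is that at each time $s$ the branch selected in Step~4 of Algorithm~\ref{algo1} is decided by an independent coin flip, so I would decompose the event $\{a_i(s) > \mu_i + \Delta_i/2,\, N_i(s-1) > 128\log t/\Delta_i^2\}$ according to whether $a_i(s) = \hat\mu_i(s-1)$ (empirical-mean branch) or $a_i(s) = \theta_i(s)$ (posterior branch), and bound the two contributions separately.

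For the empirical-mean branch the event depends only on the empirical mean $\hat\mu_{in}$ for some $n = N_i(s-1) \ge \lceil 128\log t/\Delta_i^2\rceil$. Taking a union bound over $n \in \{\lceil 128\log t/\Delta_i^2\rceil, \ldots, t\}$ and applying Hoeffding's inequality (Lemma~\ref{lemma_hoeffding1}) with $\sigma = 1$ bounds each term by $\exp(-n\Delta_i^2/8) \le t^{-16}$, so the entire sum is of order $t^{-15}$, which is negligible compared to the target $1/t^2$.

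For the posterior branch I would union-bound over $s \le t$ and then condition on $\mathcal F_{s-1}$; since $\theta_i(s) \mid \mathcal F_{s-1} \sim \mathcal N(\hat\mu_i(s-1), 1/N_i(s-1))$, the conditional probability that $\theta_i(s) > \mu_i + \Delta_i/2$ equals a standard Gaussian upper tail evaluated at $\sqrt{N_i(s-1)}(\mu_i + \Delta_i/2 - \hat\mu_i(s-1))$. The critical step is to split further on whether $\hat\mu_i(s-1) \le \mu_i + \Delta_i/4$: on this event together with $\{N_i(s-1) > 128\log t/\Delta_i^2\}$ the tail argument is at least $\sqrt{8\log t}$, so $\bar\Phi(x) \le \exp(-x^2/2)$ yields a conditional probability of at most $t^{-4}$; on the complementary event $\{\hat\mu_i(s-1) > \mu_i + \Delta_i/4\}$ I would again union-bound over $n$ and invoke Hoeffding, whose per-$n$ tail $\exp(-n\Delta_i^2/32) \le t^{-4}$ sums to $O(t^{-3})$. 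Multiplying by the branch probability $1/K$ and summing over $s \le t$ contributes $O(1/(Kt^2))$ per arm, and the outer union bound over $i$ then supplies the stated $K/t^2$.

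The main obstacle is keeping the independence structure transparent: the coin flip selecting the branch must be treated as independent of $\mathcal F_{s-1}$ so that the $1/K$ factor combines cleanly with the conditional Gaussian tail, and the intermediate threshold $\Delta_i/4$ must be balanced so that the Hoeffding deviation of $\hat\mu_i$ and the Gaussian tail of $\theta_i$ both produce matching $t^{-4}$ bounds; this balance is precisely what dictates the constant $128$ in the statement. Everything else reduces to routine Hoeffding and Gaussian-tail computations, essentially mirroring Lemma~10 of \citet{korda2013thompson} but with the extra $1/K$ factor arising from the less-exploring rule.
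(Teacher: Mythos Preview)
Your proposal is correct and matches the paper's argument: both split on whether $\hat\mu_i(s-1)$ exceeds $\mu_i + \Delta_i/4$, apply Hoeffding for the empirical-mean deviation and the sub-Gaussian (Gaussian) tail for $\theta_i$, and exploit the $1/K$ branch probability from the less-exploring rule. The paper organizes the decomposition slightly differently---splitting first on the level of $\hat\mu_i(s-1)$ rather than on the branch, so that your separate empirical-mean case is absorbed into its first term---but the ingredients and the final union bound over $s\le t$ and $i>1$ are the same.
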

\begin{proof}
For any fixed $s \le t$ and $ i >1$, we have 
\begin{equation}
\begin{aligned}
   &\phantom{\ = \ } \mathbb P \left( a_i(s) > \mu_i + \frac {\Delta_i} 2, N_i(s-1) > \frac {128\log t}{ \Delta_i^2} \right) \\
   &\le \mathbb P \left(\hat \mu_i(s-1) > \mu_i + \frac {\Delta_i} 4 , N_i(s-1) > \frac {128\log t}{ \Delta_i^2} \right)  \\
   &\phantom{\ = \ } + \mathbb P \left( a_i(s) > \mu_i + \frac {\Delta_i} 2, N_i(s-1) > \frac {128\log t}{ \Delta_i^2}, \hat \mu_i(s-1) \le \mu_i + \frac {\Delta_i} 4 \right).
   \label{equation_lemma_lemma_N1t}
\end{aligned}
\end{equation}

For the first term in~\eqref{equation_lemma_lemma_N1t}, by Lemma~\ref{lemma_hoeffding1}, we can bound it as 
\begin{align*}
       &\phantom{\ = \ } \mathbb P \left(\hat \mu_i(s-1) > \mu_i + \frac {\Delta_i} 4 , N_i(s-1) > \frac {128\log t}{ \Delta_i^2} \right)  \\
       &\le  \sum_{x= \left \lceil \frac {128\log t}{ \Delta_i^2} \right\rceil} ^ t \mathbb P \left(\hat \mu_{ix} > \mu_i + \frac {\Delta_i} 4 , N_i(s-1) = x\right) \\
       &\le  \sum_{x= \left \lceil \frac {128\log t}{ \Delta_i^2} \right\rceil} ^ t \mathbb P \left(\hat \mu_{ix} > \mu_i + \frac {\Delta_i} 4 \right) \\
       &\le \sum_{x= \left \lceil \frac {128\log t}{ \Delta_i^2} \right\rceil} ^ t \frac 1 {t^4} \\
       &\le \frac 1 {t^3}.
\end{align*}

For the second term in~\eqref{equation_lemma_lemma_N1t}, according to the construction of $a_i(s)$ in Algorithm~\ref{algo1} and Lemma~\ref{lemma_hoeffding1}, we can get
\begin{align*}
    &\phantom{\ = \ }  \mathbb P \left( a_i(s) > \mu_i + \frac {\Delta_i} 2, N_i(s-1) > \frac {128\log t}{ \Delta_i^2}, \hat \mu_i(s-1) \le \mu_i + \frac {\Delta_i} 4 \right)\\
    &\le \mathbb P \left( a_i(s) > \hat \mu_i(s-1) + \frac {\Delta_i} 4, N_i(s-1) > \frac {128\log t}{ \Delta_i^2}\right) \\
    &\le  \sum_{x= \left \lceil \frac {128\log t}{ \Delta_i^2} \right\rceil} ^ t \mathbb P \left( a_i(s) > \hat \mu_i(s-1) + \frac {\Delta_i} 4, N_i(s-1) = x \right) \\
    &\le  \sum_{x= \left \lceil \frac {128\log t}{ \Delta_i^2} \right\rceil} ^ t  \frac 1 K  \cdot   \frac 1 {t^4}  \\
    &\le \frac 1 {K t^3}.
\end{align*}

Thus, for any $s \le t$ and $ i >1$, it holds that 
\begin{equation*}
   \mathbb P \left( a_i(s) > \mu_i + \frac {\Delta_i} 2, N_i(s-1) > \frac {128\log t}{ \Delta_i^2} \right) \le \frac 1 {t^3} + \frac 1 {K t^3} = \frac {K+1} {K t^3}.
\end{equation*}

Finally, by a union bound, we can conclude 
$$
\mathbb P \left(\exists \, s \le t, \exists \, i >1 : a_i(s) > \mu_i + \frac {\Delta_i} 2, N_i(s-1) > \frac {128\log t}{ \Delta_i^2} \right) \le \frac {(K+1)(K-1)} {K t^2}\le  \frac {K} {t^2}.
$$
\end{proof}


\subsection{Lower Bounds}
\label{subsection_fixedregret_lowerboundproof}
\begin{proof}[Proof of Theorem~\ref{theorem_fixedregret_lowerbound}]
In the following, we will establish the asymptotic and minimax lower bounds, respectively.

\paragraph{Asymptotic lower bound. }  Consider any algorithm $\pi$ that is $R^{\FRG}$-consistent. 
Sine $\E[N_i^{(0)}(T)] + \E[N_i^{(1)}(T)] = \E[N_i(T)]$ for all arms $i\in[K]$, we can utilize the regret decomposition in \eqref{equation_fixedregret_regretdecompose} to derive
\begin{align*}
    R^{\FRG}_{\mu,c}(T, \pi) &\ge \sum_{i>1}  \left( \Delta_i \cdot \E[N_i^{(0)}(T)]  + c\cdot \E[N_i^{(1)}(T)]\right) \\
    &\ge \sum_{i>1}   \Big( (\Delta_i \wedge c) \cdot \E[N_i(T)]\Big).
\end{align*}

Fix any abstention regret $c>0$. Then for all bandit instances $\mu\in \mathcal U$ and $a>0$, it holds that 
\begin{align*}
R^{\CA}_{\mu}(T, \pi) & = T\mu_1 - \mathbb E\left[\sum_{t=1}^T X_t \right] \\
&= \sum_{i>1}  \Big( \Delta_i \cdot \E[N_i(T)] \Big)\\
&\le \max_{i>1} \frac {\Delta_i} {\Delta_i \wedge c} \cdot \sum_{i>1}   \Big( (\Delta_i \wedge c) \cdot \E[N_i(T)]\Big) \\
&\le \max_{i>1} \frac {\Delta_i} {\Delta_i \wedge c} \cdot R^{\FRG}_{\mu,c}(T, \pi)  \\
&= o(T^a) .
\end{align*}

Therefore, in accordance with Definition~\ref{definition_standard_consistent}, the algorithm $\pi$ is also $R^{\CA}$-consistent for arbitrary abstention regret $c$. 

Subsequently, for any abstention regret $c$ and bandit instance $\mu$, we have 
\begin{align*}
    \liminf_{T\to \infty} \frac {R^{\FRG}_{\mu,c}(T,\pi)} {\log T} 
    &\ge  \liminf_{T\to \infty}  \frac{\sum_{i>1}\left( (\Delta_i \wedge c) \cdot \E[N_i(T)]\right)} {\log T}  \\
    &= \sum_{i>1} (\Delta_i \wedge c) \cdot \liminf_{T\to \infty} \frac{  \E[N_i(T)]} {\log T}  \\
    &\ge 2 \sum_{i>1} \frac{\Delta_i\wedge c }{\Delta_i^2} ,
\end{align*}
where the last inequality follows from the property of $R^{\CA}$-consistent policies as detailed in Remark~\ref{remark_optimal_property}.

This concludes the proof of the instance-dependent asymptotic lower bound.

\paragraph{Minimax lower bound. } We extend the proof of the minimax lower bound from the canonical multi-armed bandit model to the model incorporating fixed-regret abstention.

Consider any fixed abstention regret $c > 0$, time horizon $T\ge K$ and algorithm $\pi \in \Pi^{\FRG}$. We construct a bandit instance $\mu\in \mathcal U$, where $\mu_1 = \Delta$ and $\mu_i=0$ for all $i\in[K]\setminus \{1\}$. Here,  $\Delta >0$ is some parameter whose exact value will be determined later. We use $\mathbb P_{\mu,c}$ to represent the probability distribution of the sequence $(A_1,B_1,X_1,\ldots,A_T,B_T, X_T)$ induced by the algorithm $\pi$ for the abstention regret $c$ and bandit instance $\mu$. Since $\sum_{i=1} \E_{\mu,c}[N_i(T)] =T$, according to  the pigeonhole principle, there must exist an index $j\in[K]\setminus \{1\}$ such that 
$$
\E_{\mu,c}[N_j(T)] \le \frac {T} {K-1}.
$$
Now we construct another bandit instance  $\mu' \in \mathcal U$, where $\mu'_1 = \Delta$, $\mu'_j= 2\Delta$ and $\mu'_i=0$ for all $i\in[K]\setminus \{1, j\}$.  Let $\mathbb P_{\mu',c}$ denote the probability distribution of the sequence $(A_1,B_1,X_1,\ldots,A_T,B_T, X_T)$ induced by the algorithm $\pi$ for the abstention regret $c$ and bandit instance $\mu'$.

For the first bandit instance $\mu$, regardless of the abstention option, if $N_1(T)\le T/2$, then the cumulative regret must be at least $(\Delta\wedge c)T/2$. Therefore, we have 
$$
{R^{\FRG}_{\mu,c}(T,\pi)} \ge  \frac {(\Delta\wedge c)T} 2 \mathbb P_{\mu,c} (N_1(T)\le T/2).
$$
Similarly, for the second bandit instance $\mu'$, we can obtain
$$
{R^{\FRG}_{\mu',c}(T,\pi)} \ge  \frac {(\Delta\wedge c)T} 2 \mathbb P_{\mu',c} (N_1(T)> T/2).
$$

By combining the aforementioned two inequalities and applying Lemma~\ref{lemma_BH}, we obtain the following:
\begin{align*}
    {R^{\FRG}_{\mu,c}(T,\pi)} + {R^{\FRG}_{\mu',c}(T,\pi)} &\ge \frac {(\Delta\wedge c)T} 2 \left(\mathbb P_{\mu,c} (N_1(T)\le T/2)+ \mathbb P_{\mu',c} (N_1(T)> T/2)\right) \\
    &\ge \frac {(\Delta\wedge c)T} 4 \exp \left(-\mathrm{KL}\left(\mathbb P_{\mu,c}, \mathbb P_{\mu',c}\right)\right).
\end{align*}

Leveraging Lemma~\ref{lemma_divergence_decomposition} and  the KL divergence between  Gaussian distributions, we can derive
$$
\mathrm{KL}\left(\mathbb P_{\mu,c}, \mathbb P_{\mu',c}\right) = \E_{\mu,c}[N_j(T)] \frac{(2\Delta)^2} 2 \le \frac {2T\Delta^2} {K-1}.
$$

Altogether, we can arrive at 
$$
{R^{\FRG}_{\mu,c}(T,\pi)} + {R^{\FRG}_{\mu',c}(T,\pi)} \ge  \frac {(\Delta\wedge c)T} 4 \exp \left(-\frac {2T\Delta^2} {K-1} \right).
$$

Now, we set $\Delta = \sqrt{\frac K T} \wedge c$, which leads to
\begin{align*}
    {R^{\FRG}_{\mu,c}(T,\pi)} + {R^{\FRG}_{\mu',c}(T,\pi)} &\ge \frac{\sqrt{KT}\wedge cT} 4 \exp \left(-\frac {2T(\sqrt{\frac K T} \wedge c)^2} {K-1} \right) \\
    &\ge \frac{\sqrt{KT}\wedge cT} 4 \exp \left(-\frac {2K} {K-1} \right) \\
    &\ge \frac{\exp(-4)} 4  (\sqrt{KT}\wedge cT).
\end{align*}

Consequently, either ${R^{\FRG}_{\mu,c}(T,\pi)}$ or ${R^{\FRG}_{\mu',c}(T,\pi)} $ is at least $\frac{\exp(-4)} 8  (\sqrt{KT}\wedge cT)$, which completes the proof of the instance-independent minimax lower bound.
\end{proof}

\subsection{Heterogeneous Abstention Regret}
\label{appendix_hetorogenous_abstention}
In the scenario of heterogeneous abstention regret, for each arm $i\in[K]$, the agent incurs a regret of $c_i>0$ when pulling arm $i$ and choosing abstention.  By the law of total expectation,  the expected cumulative regret up to the time horizon $T$ can be expressed as:
\begin{align*}
    R^{\FRG}_{\mu, \mathbf{c}}(T, \pi) &=\mathbb E\left[\sum_{t=1}^T\big ((\mu_1-X_t)\cdot\mathbf 1\{B_t=0\}+c_{A_t}\cdot\mathbf 1\{B_t=1\}\big )\right]  \\
    &= c_1 \cdot \E[N_1^{(1)}(T)] +\sum_{i>1}  \left( \Delta_i \cdot \E[N_i^{(0)}(T)]  + c_i \cdot \E[N_i^{(1)}(T)]\right). 
\end{align*}

For ease of presentation, we define $c = \max_{i\in[K]} c_i$ as the maximum abstention regret level. The lower bound part is straightforward. Specifically, we can obtain the following lower bounds.

\begin{theorem} 
\label{theorem_fixedregret_lowerbound_extension}
For any abstention regret $\mathbf c = (c_i)_{i\in[K]}$, bandit instance $\mu\in\mathcal U$ and $R^{\FRG}$-consistent algorithm~$\pi$, it holds that
$$
\liminf_{T\to \infty} \frac {R^{\FRG}_{\mu,\mathbf{c}}(T,\pi)} {\log T} \ge  2 \sum_{i>1} \frac{\Delta_i\wedge c_i }{\Delta_i^2} . 
$$
For any maximum abstention regret level $c > 0$ and time horizon $T\ge K$, there exists a universal constant $\alpha > 0$ such that
$$
\inf_{\pi\in \Pi^{\FRG}} \sup _{\mu\in \mathcal U, \mathbf{c} } {R^{\FRG}_{\mu,\mathbf{c}}(T,\pi)}  \ge \alpha (\sqrt{KT}\wedge cT).
$$
\end{theorem}

With respect to the algorithm design, for each arm $i\in [K]$, we redefine its {lower confidence bound} as
\begin{equation*}
\mathrm{LCB}_i(t) = \hat{\mu}_{i}(t-1)-\sqrt{\frac{6\log t+2\log(c_i\vee 1)}{N_{i}(t-1)}}.
\end{equation*}
Then in the abstention decision rule, we choose to abstain ($B_t = 1$) if and only if 
\begin{equation}
\label{equation_new_rule}
\max_{i\in [K]\setminus \{A_t\}} \mathrm{LCB}_i(t) - \hat{\mu}_{A_t}(t-1) \ge c_i \ \text{ or }\ \sqrt{\frac K t} \ge c.
\end{equation}
Note that in \eqref{equation_new_rule}, we compare $\sqrt{\frac K t}$ with $c=\max_{i\in[K]} c_i$.

\begin{theorem}
\label{theorem_fixedregret_upperbound_extension}
For any abstention regret $\mathbf c = (c_i)_{i\in[K]}$ and bandit instance $\mu\in\mathcal U$, Algorithm~\ref{algo1} with the new abstention criteria in  \eqref{equation_new_rule} guarantees that 
$$
\limsup_{T\to \infty} \frac {R^{\FRG}_{\mu,\mathbf{c}}(T)} {\log T} \le  2 \sum_{i>1} \frac{\Delta_i\wedge c_i }{\Delta_i^2} .
$$
Furthermore, there exists a universal constant $\alpha > 0$ such that
$$
{R^{\FRG}_{\mu,\mathbf{c}}(T)}  \le  \begin{cases}
     cT  & \text{ if }  c\le \sqrt{ K/ T} \\
   \alpha  (\sqrt{KT}+\sum_{i>1}{\Delta_i}) & \text{ if }  c> \sqrt{ K /T}.
\end{cases}
$$
\end{theorem}

The proof of Theorem~\ref{theorem_fixedregret_upperbound_extension} closely follows that of Theorem~\ref{theorem_fixedregret_upperbound} in Appendix~\ref{subsection_fixedregret_upperboundproof}. While many steps are analogous, it is noteworthy that in this context, for any arm $i$ with $\Delta_i <c_i$, we can upper bound $\E[N_i^{(1)}(T)]$ as 
\begin{align*}
\E[N_i^{(1)}(T)] &\le \E\left[\sum_{t=1}^T \mathbb{I}\left\{ A_t=i \text { and } \sqrt{\frac K t} \ge c  \right\}\right]    \notag \\
&\phantom{\ = \ } +\E\left[\sum_{t=K+1}^T  \mathbb{I}\left\{  A_t=i \text { and } \max_{j\in [K]} \left(\hat{\mu}_{j}(t-1)-\sqrt{\frac{6\log t+2\log(c_j\vee 1)}{N_{j}(t-1)}}\right) \ge \mu_1 \right\}\right]  \notag \\
&\phantom{\ = \ } +\E\left[\sum_{t=K+1}^T \mathbb{I}\left\{ A_t=i \text { and }  \mu_1-\hat{\mu}_i(t-1) \ge c_i \right\}\right].
\end{align*}

\section{Analysis of the Fixed-Reward Setting}
\subsection{Upper Bounds}
\label{subsection_fixedreward_upperboundproof}
\begin{proof}[Proof of Theorem~\ref{theorem_fixedreward_upperbound}]
Utilizing the law of total expectation, we can decompose the regret $R^{\FRW}_{\mu,c}(T, \pi)$ in the following:
\begin{align}
R^{\FRW}_{\mu,c}(T, \pi) &=T \cdot \left( \mu_1\vee c \right)- \mathbb E\left[\sum_{t=1}^T\Big(X_t\cdot\mathbf 1\{B_t=0\}+c\cdot\mathbf 1\{B_t=1\}\Big)\right] \notag \\
&=T \cdot \left( \mu_1\vee c \right)- \mathbb E\left[\sum_{t=1}^T\Big(\mu_{A_t}\cdot\mathbf 1\{B_t=0\}+c\cdot\mathbf 1\{B_t=1\}\Big)\right] \notag \\
&= \sum_{i\in[K]} \left((\mu_1\vee c - \mu_i)\cdot \E[N_i^{(0)}(T)] + (\mu_1\vee c - c)\cdot \E[N_i^{(1)}(T)]\right).
\label{equation_fixedreward_regretdecompose}
\end{align}

Recall that we define $\hat{\mu}_i(t) = +\infty $ if $N_i(t)= 0$ for all arms $i\in[K]$. Thus, for any arm $i$ with $\mu_i > c$, we can obtain 
$$
\begin{aligned}
\E[N_i^{(1)}(T)] &=\E\left[\sum_{t=1}^T \mathbb{I}\left\{ A_t=i \text { and } B_t=1 \right\}\right] \\
&= \E\left[\sum_{t=1}^T \mathbb{I}\left\{A_t=i \text { and }  \hat{\mu}_i(t-1) \le c \right\}\right] \\
&\le \E\left[\sum_{t=1}^T \sum_{s=0}^{T-1} \mathbb{I}\left\{ 
  A_t=i \text { and } \hat{\mu}_{is}\le c \text { and }  N_i(t-1)=s\right\}\right]\\
  &=\E\left[\sum_{t=1}^T \sum_{s=1}^{T-1} \mathbb{I}\left\{ 
  A_t=i \text { and } \hat{\mu}_{is}\le c \text { and }  N_i(t-1)=s\right\}\right] \\
  &\le  \E\left[\sum_{s=1}^{T-1} \mathbb{I}\left\{ 
  \hat{\mu}_{is}\le c \right\}\right] \\
  &\le \frac 2 {(\mu_i-c)^2}
\end{aligned}
$$
where the penultimate inequality is derived from an argument analogous to that in \eqref{equation_blacksquare1}, and the last inequality is a consequence of Lemma~\ref{lemma_hoeffding2}.

Similarly, for any arm $i$ with $\mu_i < c$, we have 
$$
\begin{aligned}
\E[N_i^{(0)}(T)] &=\E\left[\sum_{t=1}^T \mathbb{I}\left\{ A_t=i \text { and } B_t=0 \right\}\right] \\
&= \E\left[\sum_{t=1}^T \mathbb{I}\left\{A_t=i \text { and }  \hat{\mu}_i(t-1) > c \right\}\right] \\
&\le \E\left[\sum_{t=1}^T \sum_{s=0}^{T-1} \mathbb{I}\left\{ 
  A_t=i \text { and } \hat{\mu}_{is}> c \text { and }  N_i(t-1)=s\right\}\right]\\
&\le 1 + \E\left[\sum_{t=1}^T \sum_{s=1}^{T-1} \mathbb{I}\left\{ 
  A_t=i \text { and } \hat{\mu}_{is}> c \text { and }  N_i(t-1)=s\right\}\right]\\
  &\le 1+ \frac 2 {(c-\mu_i)^2}.
\end{aligned}
$$

\paragraph{Asymptotic upper bound. } First, we consider the scenario where $\mu_1 \le c$. In this case, based on the regret decomposition in Equation~\eqref{equation_fixedreward_regretdecompose}, we can bound the regret as follows:
\begin{align}
R^{\FRW}_{\mu,c}(T)&=\sum_{i\in[K]} (c-\mu_i)\cdot \E[N_i^{(0)}(T)] \notag \\
    &= \sum_{i:\mu_i<c} (c-\mu_i)\cdot \E[N_i^{(0)}(T)] \label{equation_fixedreward_decomposition1}\\
    &\le \sum_{i:\mu_i<c} \left(c-\mu_i + \frac 2 {c-\mu_i} \right)  \notag \\
    &=o(\log T). \notag
\end{align}

Next, we consider the scenario where $\mu_1 > c$. Due to the asymptotic optimality of the base algorithm, for any suboptimal arm $i$, we have
$$
\E[N_i(T)] \le \frac {2 \log T} {\Delta_i^2} + o(\log T).
$$
Thus, we can bound the regret as: 
\begin{align*}
&\phantom{\ =\ \ \ }R^{\FRW}_{\mu,c}(T)\\ 
&= \sum_{i\in[K]} \left((\mu_1 - \mu_i)\cdot \E[N_i^{(0)}(T)] + (\mu_1 - c)\cdot \E[N_i^{(1)}(T)]\right) \\
&=  \sum_{i\in[K]} \left((\mu_1 - \mu_i\vee c + \mu_i\vee c - \mu_i)\cdot \E[N_i^{(0)}(T)] + (\mu_1 - \mu_i\vee c + \mu_i\vee c-c)\cdot \E[N_i^{(1)}(T)]\right) \\
&= \sum_{i\in[K]} \left((\mu_1 - \mu_i\vee c) \cdot \E[N_i(T)] +  ( \mu_i\vee c - \mu_i)\cdot \E[N_i^{(0)}(T)] + (\mu_i\vee c-c)\cdot \E[N_i^{(1)}(T)]\right) \\
&= \sum_{i>1} (\mu_1 - \mu_i\vee c) \cdot \E[N_i(T)] + \sum_{i:\mu_i<c} (c-\mu_i)  \cdot \E[N_i^{(0}(T)] + \sum_{i:\mu_i>c} (\mu_i-c)  \cdot \E[N_i^{(1)}(T)] \\
&\le  (2\log T) \sum_{i>1} \frac{ \mu_1 - \mu_i\vee c }{\Delta_i^2} +  \sum_{i:\mu_i<c} \left(c-\mu_i + \frac 2 {c-\mu_i} \right) +\sum_{i:\mu_i>c} \frac 2 {\mu_i-c}+ o(\log T) \\
&= (2\log T) \sum_{i>1} \frac{ \mu_1 - \mu_i\vee c }{\Delta_i^2} + o(\log T)
\end{align*}
where the third equality is due to the fact that $\E[N_i^{(0)}(T)] + \E[N_i^{(1)}(T)] = \E[N_i(T)]$ for all arms $i\in[K]$. 

Altogether, in both scenarios, it holds that
$$
\limsup_{T\to \infty} \frac {R^{\FRW}_{\mu,c}(T)} {\log T} \le  2 \sum_{i>1} \frac{ \mu_1\vee c - \mu_i\vee c }{\Delta_i^2}.
$$

\paragraph{Minimax upper bound. } First, if $\mu_1\le c$, by utilizing the regret decomposition in Equation~\eqref{equation_fixedreward_decomposition1}, we have 
\begin{equation}
\label{equation_fixedreward_decomposition2}
\begin{aligned}
R^{\FRW}_{\mu,c}(T) &= \sum_{i:\mu_i<c} (c-\mu_i)\cdot \E[N_i^{(0)}(T)]  \\
&= \sum_{i:0<c-\mu_i<\sqrt{\frac K T}}  (c-\mu_i)\cdot \E[N_i^{(0)}(T)] + \sum_{i:c-\mu_i\ge \sqrt{\frac K T}}  (c-\mu_i)\cdot \E[N_i^{(0)}(T)] \\
&\le \sqrt{\frac K T} \sum_{i:0<c-\mu_i<\sqrt{\frac K T}} \E[N_i^{(0)}(T)] + \sum_{i:c-\mu_i\ge \sqrt{\frac K T}} \left(c-\mu_i + \frac 2 {c-\mu_i} \right)\\
&\le  \sqrt{\frac K T} \cdot T  + \sum_{i:c-\mu_i\ge \sqrt{\frac K T}} \left(c-\mu_i + 2\sqrt{\frac T K}\right)\\
&\le  \sqrt{ K T}   + \sum_{i\in[K]} \left(c-\mu_i\right)+ K\cdot 2\sqrt{\frac T K} \\
&=3\sqrt{KT} +  \sum_{i\in[K]} \left( \mu_1\vee c -\mu_i\right).
\end{aligned}
\end{equation}

Next, if $\mu_1> c$, then the best possible (expected) reward at a single time step is $\mu_1$, which coincides with the canonical multi-armed bandit problem. Consequently, compared with canonical multi-armed bandits, at a single time step, the agent in our problem incurs a greater (expected) regret only if an arm $i$ with $\mu_i>c$ is pulled and the abstention option is chosen.  Thus, we have 
$$
R^{\FRW}_{\mu,c}(T) \le R^{\CA}_{\mu}(T) + \sum_{i:\mu_i>c} (\mu_i-c)  \cdot \E[N_i^{(1)}(T)].
$$

Due to the minimax optimality of the base algorithm, 
there exists a universal constant $\alpha_1 > 0$ such that
$$
R^{\CA}_{\mu}(T) \le \alpha_1 \left(\sqrt{KT}+\sum_{i>1}{\Delta_i}\right).
$$

Furthermore, using a similar argument as in \eqref{equation_fixedreward_decomposition2}, we can derive
\begin{align*}
    &\phantom{\ = \ } \sum_{i:\mu_i>c} (\mu_i-c)  \cdot \E[N_i^{(1)}(T)] \\
    &= \sum_{i:0<\mu_i-c<\sqrt{\frac K T}}  (\mu_i-c)\cdot \E[N_i^{(1)}(T)] + \sum_{i:\mu_i-c \ge \sqrt{\frac K T}}  (\mu_i-c)\cdot \E[N_i^{(1)}(T)] \\ 
    &\le \sqrt{KT} + \sum_{i:\mu_i-c \ge \sqrt{\frac K T}}   \frac 2 {\mu_i-c}\\
    &\le 3\sqrt{KT}.
\end{align*}

Therefore, we can bound the regret as
$$
\begin{aligned}
R^{\FRW}_{\mu,c}(T) &\le (\alpha_1+3) \left(\sqrt{KT}+\sum_{i>1}{\Delta_i}\right)\\
&= (\alpha_1+3) \left(\sqrt{KT}+\sum_{i\in[K]}\left( \mu_1\vee c -\mu_i\right)\right).
\end{aligned}
$$

As a result, the desired minimax upper bound holds in both scenarios.
\end{proof}

\subsection{Lower Bounds}
\label{subsection_fixedreward_lowerboundproof}
\begin{proof}[Proof of Theorem~\ref{theorem_fixedreward_lowerbound}]
The proof structure for Theorem~\ref{theorem_fixedreward_lowerbound} closely parallels that of Theorem~\ref{theorem_fixedregret_lowerbound} in Appendix~\ref{subsection_fixedregret_lowerboundproof}, although certain specific details contain significant variations. Therefore, we will streamline the shared components and elaborate on the distinctions.

\paragraph{Asymptotic lower bound. }  Consider any $R^{\FRW}$-consistent algorithm $\pi$ and bandit instance $\mu \in \mathcal U$. The case that $c\ge \mu_1 $ is trivial, as $R^{\FRW}_{\mu,c}(T, \pi)$ is non-negative, and $\mu_1\vee c - \mu_i\vee c=0$ for all $i > 1$. Thus, it suffices to demonstrate that for any abstention reward $c <\mu_1$, 
$$
\liminf_{T\to \infty} \frac {R^{\FRW}_{\mu,c}(T, \pi)} {\log T} \ge  2 \sum_{i>1} \frac{ \mu_1 - \mu_i\vee c }{\Delta_i^2}.
$$

When $c <\mu_1$, we can  establish a lower bound on $R^{\FRW}_{\mu,c}(T, \pi)$ as follows:
\begin{align*}
    R^{\FRW}_{\mu,c}(T, \pi) &= \sum_{i\in[K]} \left((\mu_1- \mu_i)\cdot \E[N_i^{(0)}(T)] + (\mu_1 - c)\cdot \E[N_i^{(1)}(T)]\right) \\
    &\ge \sum_{i>1} \left((\mu_1- \mu_i)\cdot \E[N_i^{(0)}(T)] + (\mu_1 - c)\cdot \E[N_i^{(1)}(T)]\right) \\
    &\ge \sum_{i>1} \Big((\mu_1 - \mu_i\vee c )\cdot \E[N_i(T)] \Big).
\end{align*}

Therefore, we only need to show for all suboptimal arms $i>1$, 
\begin{equation}
\label{equation_fixedreward_lowerbound_needed}
    \liminf _{T \rightarrow \infty} \frac{\E[N_i(T)]}{\log T } \geq \frac{2}{\Delta_i^2}.
\end{equation}

{However, unlike the asymptotic lower bound part of the proof of Theorem~\ref{theorem_fixedregret_lowerbound}, we cannot apply the properties of $R^{\CA}$-consistency here, as $R^{\FRW}$-consistency does not imply $R^{\CA}$-consistency in general. Instead, we will demonstrate the desired result \eqref{equation_fixedreward_lowerbound_needed} directly.}

Fix an index $j>1$ and take $\varepsilon >0$.  We now proceed to create an alternative bandit instance  $\mu' \in \mathcal U$, where $\mu'_j = \mu_1+\varepsilon$ and $\mu'_i=\mu_i$ for all $i\in[K]\setminus \{j\}$.  Note that for the new bandit instance, it holds that $\max_{i\in[K]}\mu'_i = \mu'_j > \mu_1 >c$.
To distinguish between the two scenarios, we will refer to the probability distribution associated with the sequence $(A_1, B_1, X_1, \ldots, A_T, B_T, X_T)$, generated by the algorithm $\pi$ for the abstention reward $c$ and the original bandit scenario $\mu$, as $\mathbb{P}_{\mu,c}$, and for the new bandit instance $\mu'$, we denote the corresponding distribution as $\mathbb{P}_{\mu',c}$.

A straightforward computation yields
$$
{R^{\FRW}_{\mu,c}(T,\pi)} \ge  \frac {(\mu_1 - \mu_i\vee c )T} 2 \mathbb P_{\mu,c} (N_j(T)> T/2)
$$
and
$$
{R^{\FRW}_{\mu',c}(T,\pi)} \ge  \frac {\varepsilon T} 2 \mathbb P_{\mu',c} (N_j(T) \le T/2).
$$

Employing a similar approach to the one used in the minimax lower bound part of the proof of Theorem~\ref{theorem_fixedregret_lowerbound}, utilizing Lemmas~\ref{lemma_BH} and \ref{lemma_divergence_decomposition}, we can derive:
\begin{align*}
    {R^{\FRW}_{\mu,c}(T,\pi)} + R^{\FRW}_{\mu',c}(T,\pi)&\ge  \frac {((\mu_1 - \mu_i\vee c )\wedge \varepsilon)T} 2 \left(\mathbb P_{\mu,c} (N_j(T)> T/2)+  \mathbb P_{\mu',c} (N_j(T) \le T/2) \right) \\
    &\ge  \frac {((\mu_1 - \mu_i\vee c )\wedge \varepsilon)T} 4 \exp \left(-\mathrm{KL}\left(\mathbb P_{\mu,c}, \mathbb P_{\mu',c}\right)\right) \\
    &=  \frac {((\mu_1 - \mu_i\vee c )\wedge \varepsilon)T} 4 \exp \left(-\E_{\mu,c}[N_j(T)] \frac{(\Delta_j+\varepsilon)^2} 2 \right).
\end{align*}

By rearranging the above inequality and taking the limit inferior, we have
\begin{align*}
    \liminf _{T \rightarrow \infty} \frac{\E[N_j(T)]}{\log T } &\ge \frac 2 {(\Delta_j+\varepsilon)^2} \left( 1+  \limsup _{T \rightarrow \infty} \frac {\log \left( \frac {(\mu_1 - \mu_i\vee c )\wedge \varepsilon} {4\left( {R^{\FRW}_{\mu,c}(T,\pi)} + R^{\FRW}_{\mu',c}(T,\pi)\right)}\right)} {\log T }\right) \\
    &= \frac 2 {(\Delta_j+\varepsilon)^2} \left( 1-  \limsup _{T \rightarrow \infty} \frac {\log \left( {R^{\FRW}_{\mu,c}(T,\pi)} + R^{\FRW}_{\mu',c}(T,\pi)\right)} {\log T }\right) .
\end{align*}

Recall the definition of $R^{\FRW}$-consistency. For all $a>0$, both $R^{\FRW}_{\mu,c}(T, \pi)$ and $R^{\FRW}_{\mu',c}(T,\pi)$ are on the order of $o(T^a)$, and hence,
$$
\limsup _{T \rightarrow \infty} \frac {\log \left( {R^{\FRW}_{\mu,c}(T,\pi)} + R^{\FRW}_{\mu',c}(T,\pi)\right)} {\log T } \le a.
$$

By letting both $a$ and $\varepsilon$ approach zero, we can establish the desired result \eqref{equation_fixedreward_lowerbound_needed}, thereby concluding the proof of the asymptotic lower bound.

\paragraph{Minimax lower bound. } The construction employed in the fixed-reward setting here is analogous to the one utilized in the proof of Theorem~\ref{theorem_fixedregret_lowerbound}. 

Consider any fixed abstention reward $c \in \mathbb R$, time horizon $T\ge K$ and algorithm $\pi \in \Pi^{\FRW}$.  Let $\Delta >0$ be a parameter to be determined later. We construct a bandit instance $\mu\in \mathcal U$, where $\mu_1 = \Delta + c$ and $\mu_i= c$ for all $i\in[K]\setminus \{1\}$.
Note that there must exist an index $j\in[K]\setminus \{1\}$ such that $\E_{\mu,c}[N_j(T)] \le \frac {T} {K-1}$.
We then construct another bandit instance  $\mu' \in \mathcal U$, where $\mu'_1 = \Delta +c$, $\mu'_j= 2\Delta+c$ and $\mu'_i=c$ for all $i\in[K]\setminus \{1, j\}$. 

Similarly, by applying Lemmas~\ref{lemma_BH} and \ref{lemma_divergence_decomposition}, we can derive that
\begin{align*}
    {R^{\FRW}_{\mu,c}(T,\pi)} + {R^{\FRW}_{\mu',c}(T,\pi)} &\ge \frac {\Delta T} 2 \left(\mathbb P_{\mu,c} (N_1(T)\le T/2)+ \mathbb P_{\mu',c} (N_1(T)> T/2)\right) \\
    &\ge \frac {\Delta T} 4 \exp \left(-\mathrm{KL}\left(\mathbb P_{\mu,c}, \mathbb P_{\mu',c}\right)\right) \\
    &= \frac {\Delta T} 4 \exp \left(- \E_{\mu,c}[N_j(T)] \frac{(2\Delta)^2} 2 \right) \\
    &\ge  \frac {\Delta T} 4 \exp \left(-\frac {2T\Delta^2} {K-1} \right).
\end{align*}

By choosing $\Delta = \sqrt{\frac K T} $, we have
\begin{align*}
    {R^{\FRW}_{\mu,c}(T,\pi)} + {R^{\FRW}_{\mu',c}(T,\pi)}&\ge \frac{\exp(-4)} 8  \sqrt{KT}.
\end{align*}

Consequently, either ${R^{\FRW}_{\mu,c}(T,\pi)}$ or ${R^{\FRW}_{\mu',c}(T,\pi)} $ is at least $\frac{\exp(-4)} 8  \sqrt{KT}$. 

Therefore, we have established the instance-independent minimax lower bound.
\end{proof}

\section{Numerical Experiments}
\label{appendix_experiment}

\subsection{Results for the Fixed-Reward Setting}
\label{subappendix_fixedreward_exp}

\begin{figure}[t]
\centering
	\begin{minipage}{0.79\linewidth}
        \hspace{0.00pt}
		\subfigure[Instance $\mu^{\dagger}$]{
			\begin{minipage}[b]{0.45\textwidth}
				\includegraphics[width=1.000\textwidth]{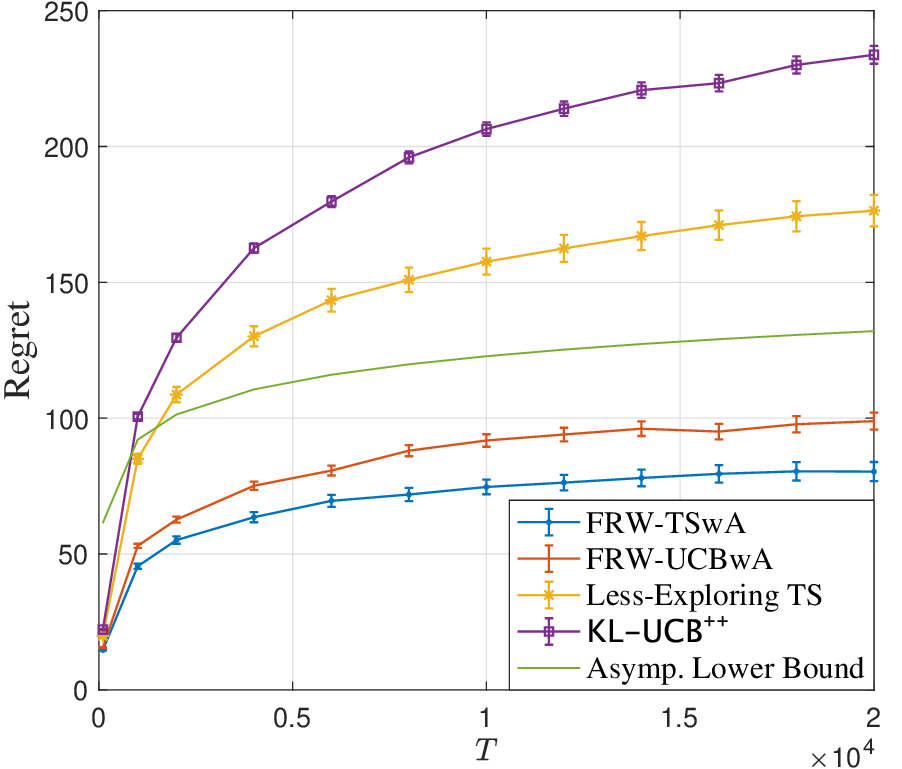}
			\end{minipage}
		}
            \hspace{0.00pt}
		\subfigure[Instance $\mu^{\ddagger}$]{
			\begin{minipage}[b]{0.45\textwidth}  
				\includegraphics[width=1.000\textwidth]{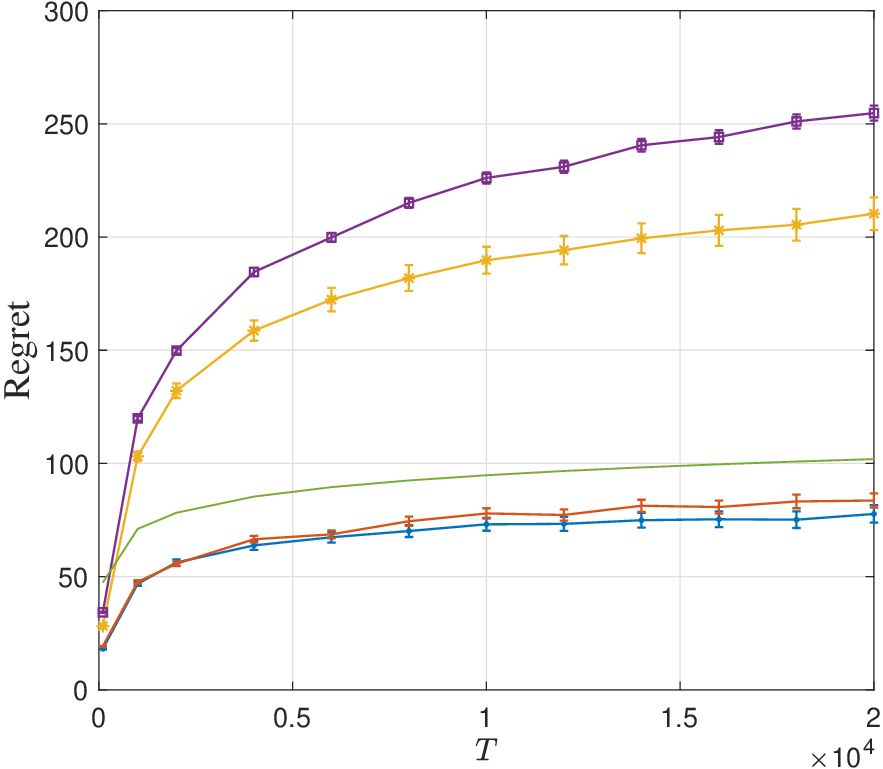}
			\end{minipage}
		}
	\end{minipage}
	\caption{Empirical regrets with abstention reward $c=0.9$ for different time horizons~$T$.}
	\label{fig_FRW_main1}
\end{figure}

\begin{figure}[t]
\centering
	\begin{minipage}{0.79\linewidth}
        \hspace{0.00pt}
		\subfigure[Instance $\mu^{\dagger}$]{
			\begin{minipage}[b]{0.45\textwidth}
				\includegraphics[width=1.000\textwidth]{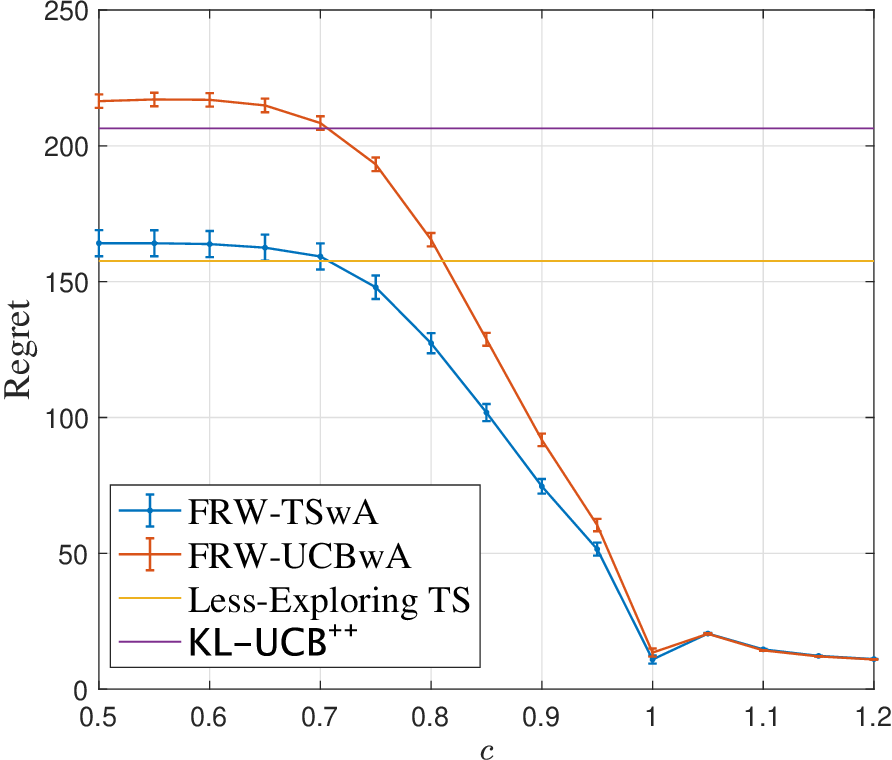}
			\end{minipage}
		}
  \hspace{0.00pt}
		\subfigure[Instance $\mu^{\ddagger}$]{
			\begin{minipage}[b]{0.45\textwidth}  
				\includegraphics[width=1.000\textwidth]{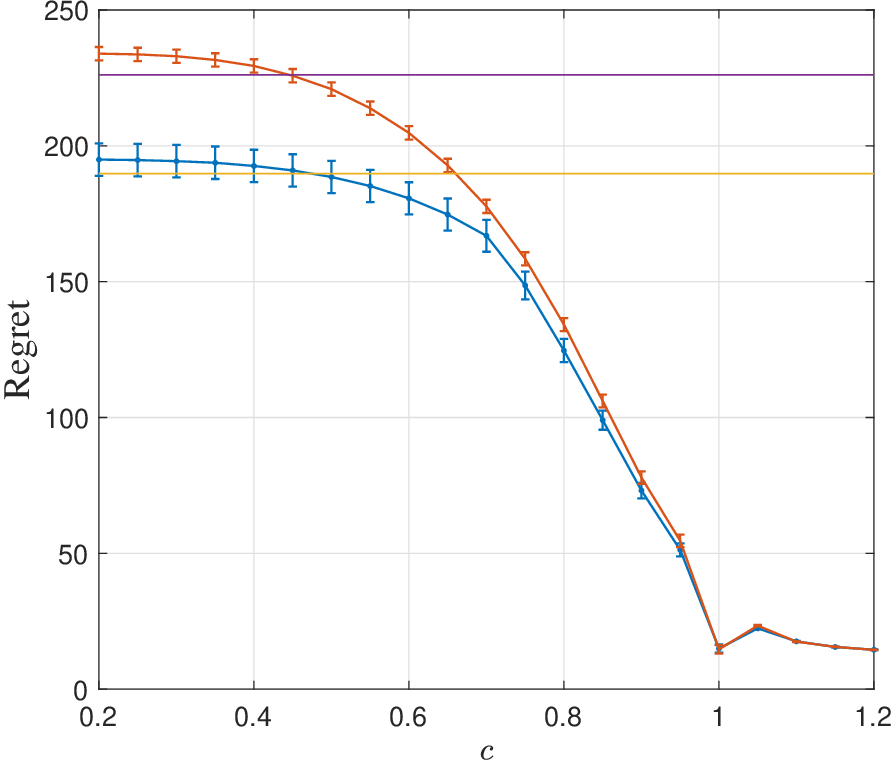}
			\end{minipage}
		}
	\end{minipage}
	\caption{Empirical regrets  with time horizon $T=10,000$ for different abstention rewards~$c$.}
	\label{fig_FRW_main2}
\end{figure}

In this subappendix, we present the empirical results pertaining to the fixed-reward setting. Specifically, we examine the empirical performances of two particular realizations of our algorithm \textsc{FRW-ALGwA} (as outlined in Algorithm~\ref{algo2}): \textsc{FRW-TSwA} and \textsc{FRW-UCBwA}. The former uses Less-Exploring Thompson Sampling \citep{jin23b} as its base algorithm, while the latter employs $\text{KL-UCB}^{++}$ \citep{menard2017minimax}. Note that $\text{KL-UCB}^{++}$ is not an anytime algorithm; that is, it requires the prior knowledge of the time horizon $T$ as an input parameter. In the following experiments, we continue to utilize the two bandit instances $\mu^{\dagger}$ and $\mu^{\ddagger}$, as previously defined in Appendix~\ref{subappendix_fixedreward_exp}.

In a manner analogous to our methodology for the fixed-regret setting, we adopt the original versions of Less-Exploring TS and $\text{KL-UCB}^{++}$ as baseline algorithms without the abstention option. The experimental results of the different methods with abstention reward $c=0.9$ for different time horizons $T$ are presented in Figure~\ref{fig_FRW_main1}. Additionally, we plot the instance-dependent asymptotic lower bound (ignoring the limit in $T$) on the cumulative regret (see Theorem~\ref{theorem_fixedreward_lowerbound}) within each sub-figure. From Figure~\ref{fig_FRW_main1}, we have the following observations:
\begin{itemize}
    \item Both realizations of our algorithm, \textsc{FRW-TSwA} and \textsc{FRW-UCBwA}, exhibit marked superiority over the two non-abstaining baselines.
    
    \item Concerning the observed growth trend, as the time horizon $T$ increases, the performance curves for both \textsc{FRW-TSwA} and \textsc{FRW-UCBwA} approximate the asymptotic lower bound closely. This behavior indicates that the expected cumulative regrets of \textsc{FRW-TSwA} and \textsc{FRW-UCBwA} attain the instance-dependent lower bound asymptotically, validating the theoretical findings discussed in Section~\ref{section_fixedreward}.
    
    \item While both \textsc{FRW-TSwA} and \textsc{FRW-UCBwA} represent implementations of our general algorithm and share identical theoretical guarantees, \textsc{FRW-TSwA} demonstrates superior empirical performance. This is particularly evident in the first instance $\mu^{\dagger}$, suggesting its enhanced applicability for real-world applications.
\end{itemize}

Next, we examine the impact of the abstention reward $c$ by assessing the performance of \textsc{FRW-TSwA} and \textsc{FRW-UCBwA} for different $c$,  while keeping the time horizon $T$ fixed at $10,000$.
The experimental results for bandit instances $\mu^{\dagger}$ and $\mu^{\ddagger}$ are shown in Figure~\ref{fig_FRW_main2}. 

Within each sub-figure, a pattern emerges. As the abstention reward $c$ increases, the empirical average cumulative regret initially remains relatively stable and starts to decline once $c$ crosses a certain threshold, eventually stabilizing around a small value. These observations are consistent with our theoretical expectations. Specifically, when the abstention reward $c$ is lower than the smallest mean reward among the arms, the agent derives no benefit from opting for the abstention action over selecting an arm. On the other hand, when the abstention reward $c$ exceeds the highest mean reward of the arms, abstention becomes the optimal decision and its reward is even superior to choosing the best arm. In this specific scenario, it is possible to achieve a regret of $o(\log T)$; see Remark~\ref{remark_c_large} for further insights.

\subsection{Random Instances}

\begin{figure}[t]
\centering
	\begin{minipage}{0.79\linewidth}
        \hspace{0.00pt}
		\subfigure[$K=20$]{
			\begin{minipage}[b]{0.45\textwidth}
				\includegraphics[width=1.000\textwidth]{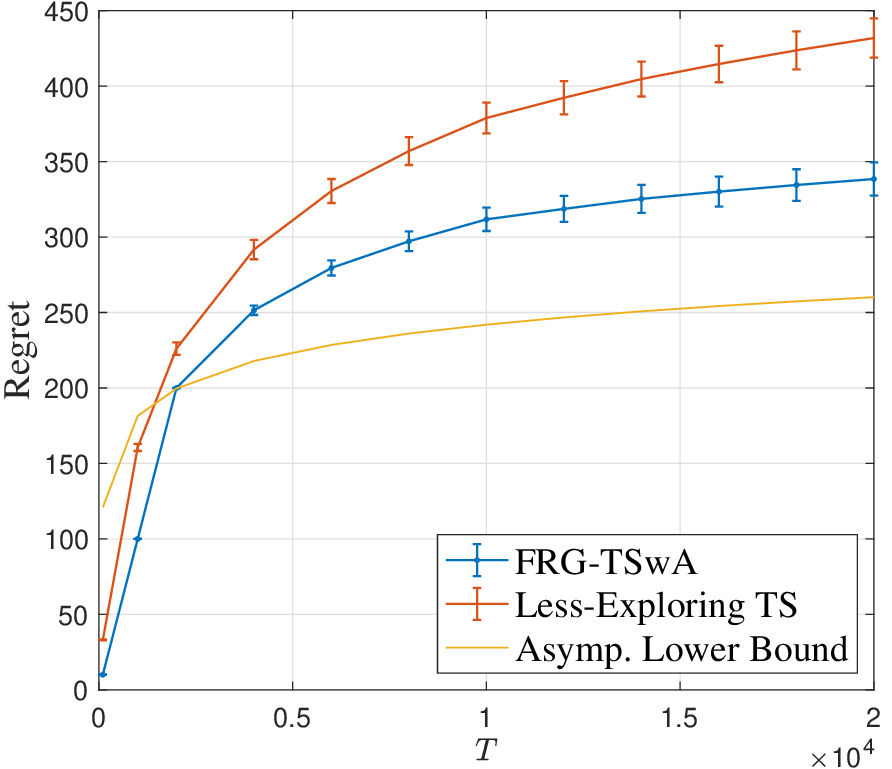}
			\end{minipage}
		}
  \hspace{0.00pt}
		\subfigure[$K=30$]{
			\begin{minipage}[b]{0.45\textwidth}  
				\includegraphics[width=1.000\textwidth]{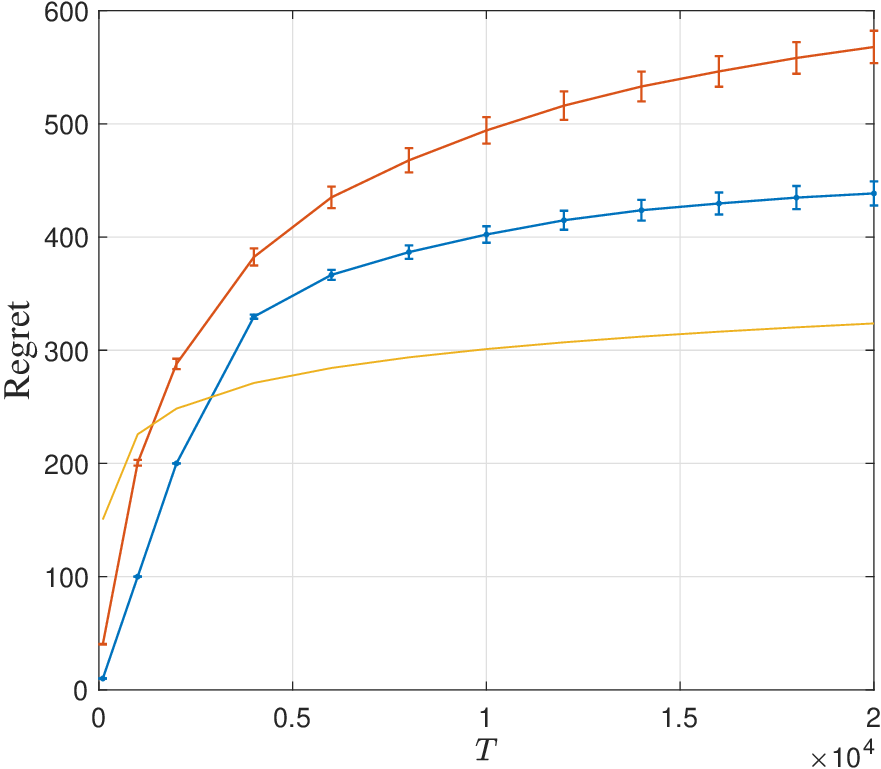}
			\end{minipage}
		}
	\end{minipage}
	\caption{Empirical regrets with abstention regret $c=0.1$ for different time horizons~$T$.}
	\label{fig_FRG_main3}
\end{figure}

\begin{figure}[t]
\centering
	\begin{minipage}{0.79\linewidth}
        \hspace{0.00pt}
		\subfigure[$K=20$]{
			\begin{minipage}[b]{0.45\textwidth}
				\includegraphics[width=1.000\textwidth]{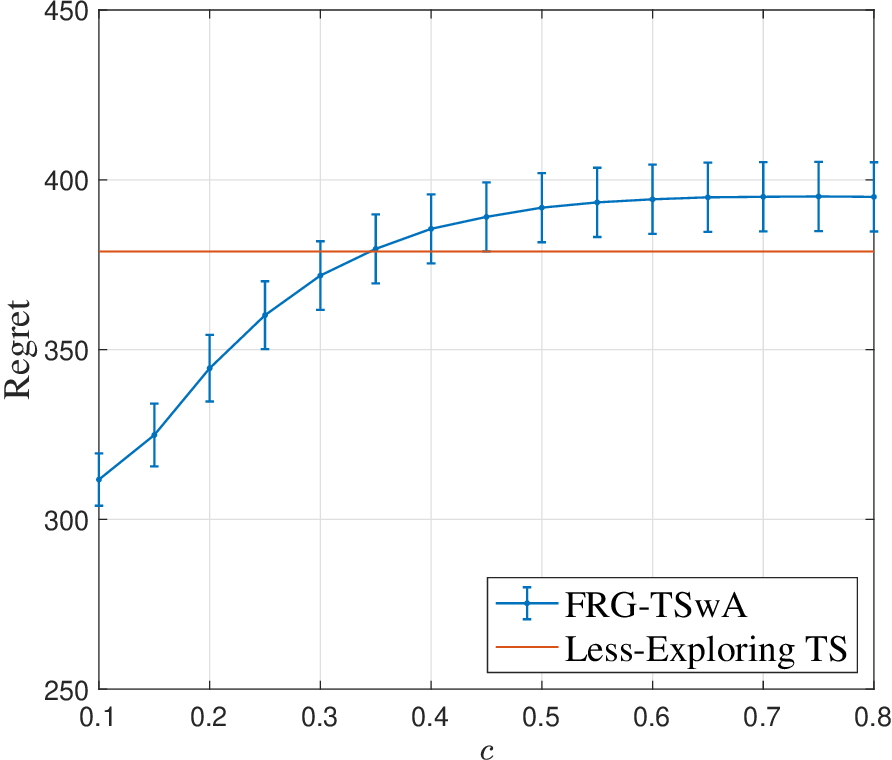}
			\end{minipage}
		}
  \hspace{0.00pt}
		\subfigure[$K=30$]{
			\begin{minipage}[b]{0.45\textwidth}  
				\includegraphics[width=1.000\textwidth]{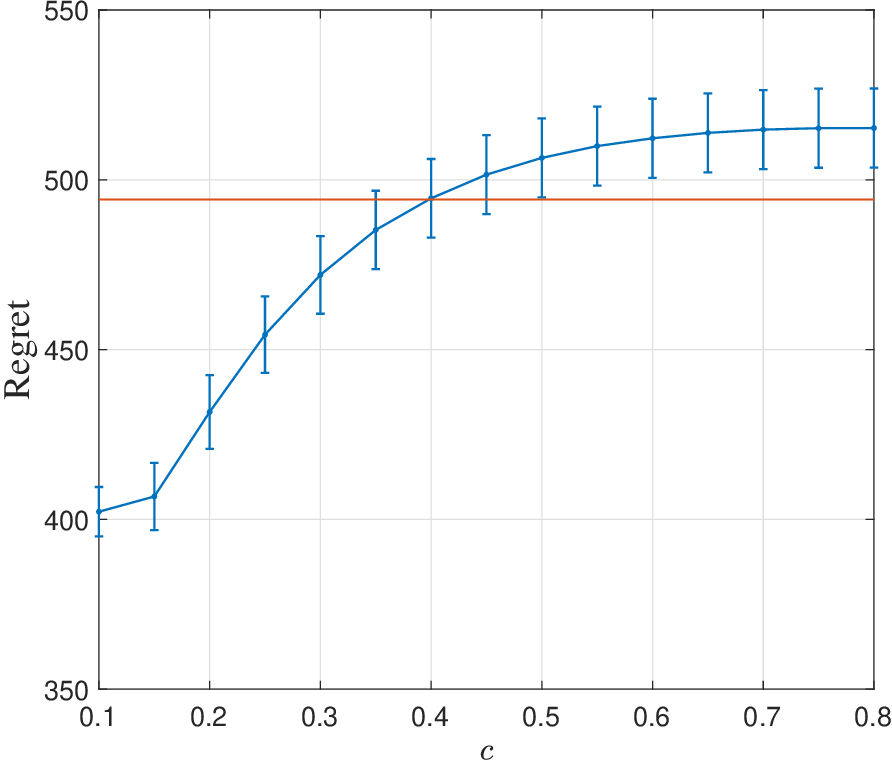}
			\end{minipage}
		}
	\end{minipage}
	\caption{Empirical regrets with time horizon $T=10,000$ for different abstention regrets~$c$.}
	\label{fig_FRG_main4}
\end{figure}

In this subappendix, we present additional numerical experiments, using random instances with large numbers of arms. The construction of these random instances mirrors  the method in \citet{jin23b}. Specifically, for a given number of arms denoted by $K\ge 10$, we set $\mu_1=1$ and $\mu_i=0.7$ for $i \in \{2,3, \ldots, 10\}$, while $\mu_i \sim \operatorname{Unif}[0.3,0.5]$ for $i \in [K] \setminus [10]$.

For the sake of simplicity in presentation, we focus on the fixed-regret setting, examining two choices of $K$, namely, $K=20$ and $K=30$. 
The empirical averaged cumulative regrets with abstention regret $c=0.1$ for different time horizons $T$ are shown in Figure~\ref{fig_FRG_main3}, while the experimental results with time horizon $T=10,000$ for different abstention rewards $c$ are illustrated in Figure~\ref{fig_FRG_main4}. 

It is evident that the findings in Figures~\ref{fig_FRG_main3} and \ref{fig_FRG_main4} closely resemble those in Figures~\ref{fig_FRG_main1} and \ref{fig_FRG_main2}. Notably, \textsc{FRW-TSwA} outperforms Less-Exploring TS which is not tailored to the setting with the abstention option.

\end{document}